\newtheorem{thm}{Theorem} %[section] 
\newtheorem{lem}{Lemma}
\newtheorem{prop}{Proposition}
\theoremstyle{definition}
\newtheorem{cor}{Corollary}
\newtheorem{defn}{Definition}%[section]
\newtheorem{rem}{Remark}
\newcounter{gaocomm}
\definecolor{blue-violet}{rgb}{0.00,0.75,0.90}
\def\vec{{\mathrm{vec}}}
\newcounter{guocomm}
\newcommand{\guorm}[1]{\ignorespaces}
\newcommand{\EScomment}[1]{{\color{blue} #1}}
\title{\textbf{Unifying over-smoothing and over-squashing in graph neural networks: A physics informed approach and beyond}}
\author{Zhiqi Shao,\footnote{University of Sydney,
\texttt{zsha2911@uni.sydney.edu.au},
\texttt{andi.han@sydney.edu.au},
\texttt{junbin.gao@sydney.edu.au}} \,
Dai Shi \footnote{Corresponding Author,
\texttt{20195423@student.westernsydney.edu.au}, \texttt{y.guo@westernsydney.edu.au}} \, 
Andi Han,\footnote{First three authors are with equal contribution.} \,
Yi Guo,\, Qibin Zhao\footnote{AIP, RIKEN, \texttt{qibin.zhao@riken.jp}}, \, Junbin Gao
}
\date{}
\begin{document}

\maketitle

\begin{abstract}
    Graph Neural Networks (GNNs) have emerged as one of the leading approaches for machine learning on graph-structured data. Despite their great success, critical computational challenges such as over-smoothing, over-squashing, and limited expressive power continue to impact the performance of GNNs. In this study, inspired from the time-reversal principle commonly utilized in classical and quantum physics, we reverse the time direction of the graph heat equation. The resulted reversing process yields
    a class of high pass filtering functions that enhance the sharpness of graph node features. Leveraging this concept, we introduce the Multi-Scaled Heat Kernel based GNN (MHKG) by amalgamating diverse filtering functions' effects on node features. To explore more flexible filtering conditions, we further generalize MHKG into a model termed G-MHKG and thoroughly show the roles of each element in controlling over-smoothing, over-squashing and expressive power. Notably, we illustrate that all aforementioned issues can be characterized and analyzed via the properties of the filtering functions, and uncover a trade-off between over-smoothing and over-squashing: enhancing node feature sharpness will make model suffer more from over-squashing, and vice versa. Furthermore, we manipulate the time again to show how G-MHKG can handle both two issues under mild conditions. Our conclusive experiments highlight the effectiveness of proposed models. It surpasses several GNN baseline models in performance across graph datasets characterized by both homophily and heterophily.
\end{abstract}

\section{Introduction}
Graph Neural Networks (GNNs) have demonstrated exceptional performance in learning the representations of graph-structured data \cite{ji2021survey,wu2020comprehensive}. Within the diverse research avenues of GNNs, one prominent aspect is to interpret GNNs via different perspectives, such as gradient flow \cite{di2022graph,han2022generalized}, neural diffusion \cite{thorpe2021grand++,chamberlain2021beltrami}, and testing of graph isomorphism \cite{xu2018powerful}, all of which contribute to a deeper comprehension of GNNs' underlying working mechanism. Concurrently, another trajectory in GNN's development is oriented toward addressing specific challenges. Generally, three inherent limitations in GNNs have been identified: over-smoothing \cite{cai2020note}, over-squashing \cite{topping2021understanding}, and limitations in expressive power \cite{xu2018powerful}, each of them is openly acknowledged and arises due to various aspects of GNNs and its graph data input. As such, research efforts are primarily focused on tackling these aspects to effectively address the mentioned problems. For instance, to counteract the over-smoothing phenomenon, researchers have advocated for the augmentation of feature variation by incorporating a source term \cite{thorpe2021grand++} or modulating the energy regularization to prevent rapid diminishment \cite{shao2022generalized,fu2022p}. To mitigate over-squashing, strategies may encompass exploring graph topology through methods such as \textit{graph re-weighting} \cite{shi2023curvature} and \textit{graph rewiring} \cite{topping2021understanding}. Lastly, enhancing the capacity to discern non-isomorphic graphs could be achieved by assigning more intricate embeddings to graph nodes \cite{xu2018powerful,wijesinghe2021new}.

While many efforts have been made on these three issues, little research has been done to consider these issues in a unified manner to reveal their underlying relationships and further seeking for mitigating them via one specific GNN and an unified way. The main difficulties for this situation may be due to 
\begin{itemize}
    \item Some GNNs lack the necessary flexibility to serve as the subjects for investigating the aforementioned triad of issues in a cohesive manner. 
    \item The pursuit of greater flexibility in GNNs in general leads to increased model complexity. Consequently, effectively exploring the roles on GNNs components on these issues becomes a formidable task.    
    \item There's a gap in research regarding the transition between addressing over-smoothing, typically focusing on node features, and tackling over-squashing and enhancing expressive power, which often concentrate on graph topology.%The lack of exploration on the transformation between the approach of mitigating over-smoothing that usually target on the node features and alleviating over-squashing as well as enhancing expressive power often targeted on graph topology.  
\end{itemize}
% \JScomment{REFINED TONE: While substantial advancements have been made in addressing specific problems in the domain of graph neural networks (GNNs), an integrated exploration that unearths the intrinsic connections between these issues remains largely untouched \cite{giraldo2022understanding}. The scarcity of research in this unified direction can be attributed to several factors:
% \begin{itemize}
% \item A dearth of existing GNN models possessing the requisite flexibility to act as platforms for a comprehensive investigation into the interrelated trio of problems delineated earlier.
% \item An inherent tension in GNN design, where striving for increased adaptability and expressiveness tends to exacerbate model complexity. This escalation in complexity renders the task of probing the multifaceted roles of various GNN components, in connection with these problems, increasingly intricate and challenging.
% \end{itemize}
% These limitations delineate the challenges facing contemporary researchers and emphasize the need for novel methodologies that can address the interconnected landscape of problems within the field of GNNs.} \\
In the present study, we derive inspiration from the physical reality of graph heat equation, leading to a novel perspective that examines the propagation of node features in the reverse direction of time. We demonstrate that many prevailing GNNs can invert the feature propagation process, transitioning from smoothing to sharpening effects, and vice versa.  Accordingly, we propose a Multi-Scale Heat Kernel GNN (MHKG) that propagates node features via the balance between smoothing and sharpening effects induced by the low and high pass spectral filtering functions generated from the heat and reverse heat kernels. Furthermore, we generalize the MHKG into a more flexible model, referred to as G-MHKG, and provide a comprehensive examination of the components within G-MHKG that regulate over-smoothing, over-squashing and expressive power. Notably, by utilizing G-MHKG as an analytical tool and examining the properties of the associated filtering functions, we uncover a trade-off between over-squashing and over-smoothing in the graph spectral domain. This relationship is revealed under mild conditions on the filtering functions, offering fundamental insights between these issues. Lastly, we manipulate the time again in G-MHKG to show that its capability of sufficiently handling both issues for heterophily graphs and it is impossible to achieve the same goal for homophily graphs.

\paragraph{Contribution and Outline}
Our main goal is to illustrate and verify the underlying relationship between aforementioned issues via our proposed model inspired by the time reversal principle from physics. In addition, we aim to show how our proposed model can handle these issues in a unified manner. In Section \ref{kernel_filtering_relationship}, we explore the link between graph filtering functions and solutions of a class of ordinary differential equations (ODEs) on graph node features. This connection propels the introduction of two multi-scale heat kernel based models: MHKG and the generalized MHKG (G-MHKG). In Section \ref{sec:over_smoothing} we show how the filtering matrices in our model control the energy dynamics of node features, thereby effectively addressing the over-smoothing challenge. In Section \ref{sec:over_squashing}, we demonstrate the weight matrix in G-MHKG determines the model's expressive power and over-squashing. More importantly, in Section \ref{sec:trade_off} and \ref{sec:handle_two_problem} we show there is trade-off between over-smoothing and over-squashing.  We also prove that it is impossible to sufficiently handle both two issues for homophily graphs and for heterophily graphs G-MHKG can handle these issues by a simple manipulation of time. We verify our theoretical claims via comprehensive empirical studies in Section \ref{sec:experiment}. 

\section{Preliminaries}
\paragraph{Graph basics and GNNs}
To begin, we let $\mathcal G = (\mathcal{V}, \mathcal{E})$ with nodes set $\mathcal{V} = \{v_1, v_2, \cdots, v_N \}$ of total $N$ nodes and edge set  $\mathcal{E} \subseteq \mathcal{V} \times \mathcal{V}$. We also denote the graph adjacency matrix as $\mathbf A \in \mathbb R^{N\times N}$.  In addition, we consider the symmetric normalized adjacency matrix as $\widehat{\mathbf A} = \mathbf D^{-1/2} (\mathbf A+\mathbf I) \mathbf D^{-1/2}$ where $\mathbf D$ is the diagonal degree matrix, with the $i$-th diagonal entry given by $d_{i} = \sum_{j} a_{ij}$, the degree of node $i$. The normalized graph Laplacian is given by $\widehat{\mathbf L} = \mathbf I - \widehat{\mathbf A}$. We further let $\rho_{\widehat{\mathbf L}}$ denote the largest eigenvalue (also called the highest frequency) of $\widehat{\mathbf L}$. In terms of GNNs, we note that in general there are two types of GNNs, the spatial-based GNNs such as graph convolution network (GCN) \cite {kipf2016semi} defines the layer-wise propagation rule via the normalized adjacency matrix as
\begin{equation}
    \mathbf H^{(t)} = \sigma \big( \widehat{\mathbf A} \mathbf H^{(t-1)} \mathbf W^{(t)}  \big), \label{eq_classic_gcn}
\end{equation}
where we let $\mathbf H^{(t)}$ as the feature matrix at layer $t$ with $\mathbf H^{(0)} = \mathbf X \in \mathbb R^{N \times c}$, the input feature matrix, and $\mathbf W^{(t)}$ is the learnable weight matrix performing channel mixing. On the other hand, spectral GNNs such as ChebyNet \cite{defferrard2016convolutional} perform spectral filtering on the spectral domain of the graph as 
% \GaoC{It is similar to (1)\eqref{eq_classic_gcn}, shall we add an activation too?}
\begin{equation} \label{classic_spectral_gnn}
    \mathbf H^{(t)} = \sigma\left(\mathbf U g_\theta (\boldsymbol{\Lambda}) \mathbf U^\top \mathbf H^{(0)}\right), 
\end{equation}
% \AHcomment{Make notation consistent with (1)?}
where $g_\theta (\boldsymbol{\Lambda})$ serves as the filtering function on normalized Laplacian, which utilizes an eigendecomposition $\widehat{\mathbf L} = \mathbf U\boldsymbol{\Lambda} \mathbf U^\top$ 
% \GaoC{Can you double check which $L$ is normally used?  self-loop or without self-loop?} 
and $\mathbf U^\top \mathbf h$
% \GaoC{You missed the transpose?} 
is known as the Fourier transform of a graph signal $\mathbf h \in \mathbb R^N$. In this paper, we let $\{ (\lambda_i, \mathbf u_i) \}_{i=1}^N$ be the set of eigenvalue and eigenvector pairs of $\widehat{\mathbf L}$ where $\mathbf u_i$ are the column 
% \GaoC{I think it should be column!?} 
vectors of $\mathbf U$. 
% If one  additional channel-mixing matrix $\mathbf W$ is included, a linear spectral GNN \cite{wang2022powerful} can be written as: 
% \begin{align}\label{linear_spec_gnn}
% \widehat{\mathbf Y} = \mathbf U g_\theta (\boldsymbol{\Lambda}) \mathbf U^\top \mathbf H \mathbf W,    
% \end{align}
% where $\widehat{\mathbf Y}$ is the model prediction with desired dimension. We note that one can require $g_\theta (\boldsymbol{\Lambda})$ to be fixed if Eq.~\eqref{linear_spec_gnn} contains multiple layers.

\paragraph{Graph heat kernel}
Generalizing from the so-called heat equation defined on {a} manifold, one can define graph heat equation as $\frac{\partial \mathbf H^{(t)}}{\partial t} = - \widehat{\mathbf L}\mathbf H^{(t)}$,
in which $\mathbf H^{(t)}$ is the feature representation at a specific iteration time $t$.
% with $\mathbf H^{(0)} = \mathbf X$. The solution of the graph heat equation, %at time $t$, 
% denoted as $\widehat{\mathbf H}_t$, is known as graph heat kernel which can be presented as %has the form 
% $\widehat{\mathbf H}_t = \mathrm{e}^{-t\widehat{\mathbf L}}$. \GaoC{Andi's comment was right. The $H^{(0)}$ is missing which is the so-called initial condition. According to ODE, this should be $\widehat{\mathbf H}_t = \mathrm{e}^{-t\widehat{\mathbf L}} \mathbf{H}^{(0)}$}
% % \AHcomment{Are we missing $\mathbf{H}^{(0)}$?}. 
% Moreover, the Euler discretization
% % \footnote{For instance: $\frac{\mathbf H^{(t+\Delta t)} - \mathbf H^{(t)}}{\Delta t} =  - \widehat {\mathbf L}\mathbf H^{(t)}$. Setting $\Delta t =1$, we reach the propagation of the linear GCN in \cite{kipf2016semi}.} 
% of the heat equation leads to the propagation of the linear GCN models \cite{kipf2016semi,wu2019simplifying}. The process is with the name of Laplacian smoothing \cite{chung1997spectral} and, given a sufficiently large quantity of $t$, the solution converges to the kernel of $\widehat{\mathbf L}$\GaoC{What is the kernel of $\widehat{\mathbf L}$? This statement is not accurate. Please find a paper to see what the statement exactly is?}, indicating non-distinguishment for the nodes with same degrees, known as over-smoothing. We included a more detailed discussion on both heat operator and heat kernel in Appendix \ref{append: heat_kernel}.
% \GaoC{The below is some of my re-wording}
The solution of the graph heat equation, %at time $t$, 
denoted as ${\mathbf H}^{(t)}$, is given by
${\mathbf H}^{(t)} = \mathrm{e}^{-t\widehat{\mathbf L}}\mathbf{H}^{(0)}$ with $\mathbf{H}^{(0)}=\mathbf{X}$ as the initial condition. It is well-known that the application of Euler discretization leads to the propagation of the linear GCN models \cite{kipf2016semi,wu2019simplifying} and this process is with the name of Laplacian smoothing \cite{chung1997spectral}. The characteristic of the solution is indeed governed by the so-called Heat Kernel, denoted by ${\mathbf K}_t = \mathrm{e}^{-t\widehat{\mathbf L}}$. The heat kernel defines a continuous-time random walk, and defines a semi-group, i.e., $\mathbf{H}^{(t+s)} =\mathbf{H}^{(t)}\cdot \mathbf{H}^{(s)}$ for any $t,s\geq 0$, and $\lim_{t\rightarrow \infty} \mathbf{H}^{(t)} = \mathbf{I}$  \cite{chung1997spectral}. This fact indicates non-distinguishment for the nodes with same degrees, known as over-smoothing. We included a more detailed discussion on both heat operator and heat kernel in Appendix \ref{append: heat_kernel}

\section{Turning Smoothing to Sharpening}\label{kernel_filtering_relationship}
% \paragraph{Some possible model candidates}
% % \EScomment{Please add the model that you think is applicable, and I will do the analysis.} 
% % \JScomment{Refer to \cite{JheeShin2022}, they use heat kernel on non-isomorphic graph, graph classification tasks, I think we may use the section III, b parts in our model?}
% \subsection{Model 1: Single Diffusion Model}
% \EScomment{Please note that this model is equivalent to set $t=1$ in equation(15) in \cite{xu2020graph}. So I am not sure whether it is worth to do a further analysis.} 

% One of model can be possibly be: 
% \begin{align}
%     \mathbf Y = \sum_{t=0}^T \boldsymbol{\theta}_t \mathrm{e}^{-t\widehat{\mathbf L}}\mathbf H^{(0)}.
% \end{align}
% Where $\boldsymbol{\theta} = \mathrm{diag}(\theta)$ can be learnable or fixed, with latter used to analyze model's energy dynamic, and we have $\mathbf H^{(0)} = \mathbf X$ stands for the initial node embedding (i.e., \texttt{MLP($\mathbf X$)}). Furthermore, the selection of $T$ is critical for this model.  In addition, this model implies a type of heat kernel based pooling method that is: 
% \begin{align}
%      \mathcal L (\mathcal G)= \mathrm{Readout}\left(\sum_{t=0}^T \boldsymbol{\theta}_t \mathrm{e}^{-t\widehat{\mathbf L}}\mathbf H^{(0)}\right),
% \end{align}
% where $\mathcal L (\mathcal G)$ is the label for each input graph generated from the $\mathrm{Readout}$ function.
\subsection{Kernel \& Filter Correspondence: A Physics informed Approach}\label{sec: kernel_filtering}

% \begin{figure}
%     \centering
%     \includegraphics[scale = 0.5]{}
%     \caption{Caption}
%     \label{fig:enter-label}
% \end{figure}

\begin{wrapfigure}{r}{0.42\textwidth}
\centering
\scalebox{0.8}{
\includegraphics[width = 0.4
    \textwidth, height = 0.3\textwidth]{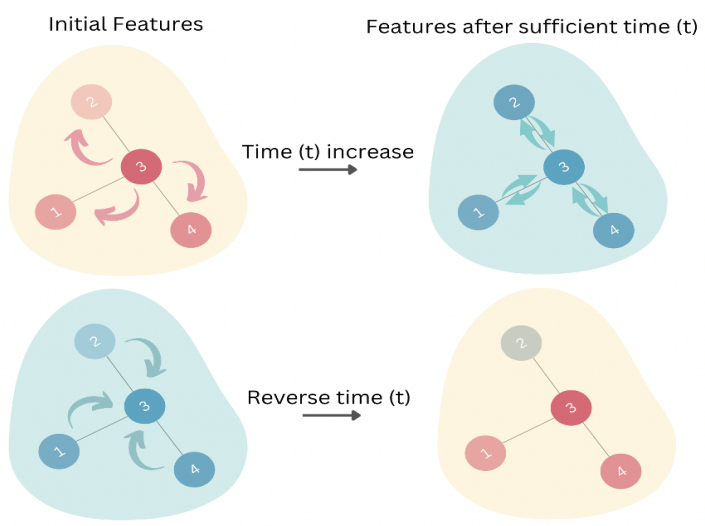}}
  \caption{Top: The evolution of the node feature of diffusion (smoothing) process (i.e., from distinct features to over-smoothing). Bottom: the reverse diffusion (sharpening) process (i.e., from nearly identical to distinct node features).}\label{heat_reverse_heat}
\end{wrapfigure}

We now delve deeper into the graph heat kernel ${\mathbf K}_t$. At a specific time $t$, if one considers $t$ is flowed at integer interval, then ${\mathbf H}^{(t)}  = \mathrm{e}^{-t \widehat{\mathbf L}} \mathbf H^{(0)}$ can be interpreted as a linear diffusion (i.e., $ \mathrm{e}^{- \widehat{\mathbf L}}$) on $\mathbf H^{(0)}$ repeated for $t$ times.
% \GaoC{You need a more rigorous way to say this given that $t$ is continuous. Or you say consider $t$ is an integer time, then bla bla bla ...  Or you say we analyze the action of the heat kernel in a discrete manner.}. 
Consequently, at each step, one can treat $\mathrm{e}^{- \widehat{\mathbf L}}$ as a low pass filtering function (\textit{monotonically decreasing}) on the graph spectra.
% \GaoC{This statement may not be accurate when we deal with continuous time.} 
More generally, one can assign one function $f$ onto $\widehat{\mathbf L}$ to build a class of ODEs 
% \GaoC{Strictly speaking this is an ODE.} 
with the form of $\frac{\partial \mathbf H^{(t)}}{\partial t} = - f(\widehat{\mathbf L})\mathbf H^{(t)}$ and, 
% \sout{at each time $t$, the solution of such class of PDEs still yields} 
similar to the basic heat equation, we can consider the generalized heat kernel $\widehat{\mathbf K}_t = \mathrm{e}^{-tf(\widehat{\mathbf L})}$. 
% We list some different forms of ODEs determined from the variants of $f$ in Appendix \ref{append: kernel_filtering relation}.
% We note that, in general, $f$ can be time-dependent\GaoC{But this cannot be assumed, otherwise we cannot seperate the kernel from the solution of ODE. Suggest to remove it. Please re-word all this paragraph}, while in this research, we only focus on a fixed function $f$ for simplicity reason.
We note that in the sequel,  $f(\widehat{\mathbf L})$  serve as a filtering function (i.e., polynomial or analytic functions) acting element-wisely on the eigenvalues of $\widehat{\mathbf L}$, that is $f(\widehat{\mathbf L}) = \mathbf U f(\boldsymbol{\Lambda}) \mathbf U^\top$.

A physical interpretation of the aforementioned class of ODEs suggests that heat flows in a constant direction and speed.  This aligns with the \textit{second law of thermodynamics}, indicating an inevitable evolution of the heat distribution in an isolated system, leading to what is known as \textit{thermal equilibrium}. 
In analogy to GNNs, such phenomenon indicates all node features eventually become equal to each other, typically known as the over-smoothing issue. Together with the recent challenge on fitting GNNs to the so-called heterophily graphs, GNNs are preferred to produce a mixed dynamic, involving not only smoothing but also sharpening the features of connected nodes with different labels.
% \GaoC{One more statement between them to smooth it, jump topics abruptly!} 
Therefore, \textbf{we wish to turn the diffusion into a reverse direction} which is equivalent to providing external (anti)-force to the system and expect to achieve a fusion of heat in certain areas of the system while allowing diffusion in others during each evolution step. Since for any given $f$ on $\widehat{\mathbf L}$, one can induce another system that produces reverse process by assigning a negative sign before $f$.
% \GaoC{This has a logic issue. I dont think it is the negative sign for the reason as the negative sign can be absorbed into $f$ to have another $f$. What you can say is, starting from a given $f$, the other system with $-f$ can reverse the process.} 
Accordingly, one can obtain a reverse filtering effect once the diffusion direction is turned. Additionally, if we further require the filtering process $\mathcal F: \mathbb R^{N\times c}\rightarrow \mathbb R^{N\times c}$ such that $\mathcal F(\mathbf H) = \mathrm{e}^{-tf(\widehat{\mathbf L})}\mathbf H^{(0)}$ to be bijective \footnote{Other requirements such as homeomorphism or isomorphism can also be applied.}, 
% \GaoC{This is not clear. Who is bijective? why on spectral domain?  Please make it clear which mapping should be objective from what domain to what range?} 
then by replacing $t$ to the $-t$, one can recover $\mathbf H^{(0)}$ from $\mathbf H^{(t)}$. We note that, in this case, the propagation and recovery of $\mathbf H^{(0)}$ is aligned with those physical variables (i.e., electric potential, energy density of the electromagnetic field) that are unchanged through inverse time operators in both classic and quantum mechanics. 
% {This sentence doesn't read well. What are the propagation and reovery  of $\mathbf H^{(0)}$? What is inverse
% time operators? Are you trying to relate to the invariant in physics? } 
Figure.~\ref{heat_reverse_heat} shows how graph diffusion smooths the node feature and reverse diffusion makes features more distinct. With this understanding, we next show how the novel heat kernel GCN is constructed. 

\subsection{Multi-scale Heat Kernel GCN (MHKG)}
Followed by the idea that at each discrete time step, i.e., from $\ell-1$ to $\ell$, the heat kernel based GCN model shall possess mixed dynamic (smoothing \& sharpening) on node features. We thus propose the multi-scale heat kernel GCN (MHKG) as: 
% \guo{Check weight sharing in eq3 and eq5.}
% \EScomment{Same weight matrix is shared cross the domains}
\begin{align}\label{model2_matrix_form}
    \mathbf H^{(\ell)} &= \mathbf U \mathrm{diag}(\theta_1)  \boldsymbol{\Lambda}_1 \mathbf U^\top \mathbf H^{(\ell-1)}\mathbf W^{(\ell-1)} + \mathbf U \mathrm{diag}(\theta_2) \boldsymbol{\Lambda}_2 \mathbf U^\top \mathbf H^{(\ell-1)}\mathbf W^{(\ell-1)}, 
     % & = \mathcal W_\ell \mathbf H^{(t-1)}\mathbf W^{(t-1)} + \mathcal W_h \mathbf H^{(t-1)}\mathbf W^{(t-1)}, 
\end{align}
in which $\mathrm{diag}(\theta_1)$ and $\mathrm{diag}(\theta_2)$ 
% \GaoC{better to use two different letters to distinguish them. Even may be $\theta_1$ and $\theta_2$. Using header is not clear.} 
are learnable filtering matrices with $\boldsymbol{\Lambda}_1 = -f(\boldsymbol{\Lambda}) = \mathrm{diag}(\{\mathrm{e}^{-f(\lambda_i)}\}_{i=1}^N)$ and $\boldsymbol{\Lambda}_2 = f(\boldsymbol{\Lambda})= \mathrm{diag}(\{\mathrm{e}^{f(\lambda_i)}\}_{i=1}^N)$. Referring to the previous note on $f$, one can denote
$f_{\theta}(\widehat{\mathbf L}) = \mathbf U \mathrm{diag}(\theta)  f(\boldsymbol{\Lambda}) \mathbf U^\top$. 

\paragraph{Beyond time reversal: an even more general case}
% \GaoC{Need revising after Monday chat.  }
In light of the construction of MHKG, one can consider a more generalized model with more flexible choice of dynamics as: 
\begin{align}\label{generalized_heat_equation}
    \frac{\partial \mathbf H^{(t)}}{\partial t} = f(\widehat{\mathbf L})\mathbf H^{(t)}, \quad \frac{\partial \mathbf H^{(t)}}{\partial t} = g(\widehat{\mathbf L})\mathbf H^{(t)},
\end{align}
We note that for simplicity reasons, in this paper we only consider two fixed dynamic on $\mathbf H$ and the conclusion we provided can be easily generalized to the multiple dynamic cases. With simple discretization, the corresponding GNN induced from Eq.~\eqref{generalized_heat_equation} is:
\begin{align}\label{generalized_final}
     \mathbf H^{(\ell)} &= f_{\theta_1}(\widehat{\mathbf L}) \mathbf H^{(\ell-1)}\mathbf W^{(\ell-1)} + g_{\theta_2}(\widehat{\mathbf L}) \mathbf H^{(\ell-1)}\mathbf W^{(\ell-1)} \notag \\
     &=\mathbf U \mathrm{diag}(\theta_1)  \mathrm{e}^{f(\boldsymbol{\Lambda})} \mathbf U^\top \mathbf H^{(\ell-1)}\mathbf W^{(\ell-1)} + \mathbf U \mathrm{diag}(\theta_2) \mathrm{e}^{g(\boldsymbol{\Lambda})} \mathbf U^\top \mathbf H^{(\ell-1)}\mathbf W^{(\ell-1)}.
\end{align}
We named this generalized model as G-MHKG. It is not difficult to verify that G-MHKG is a general form of various GNN models. The simplest case one can obtain is to set $\theta_2 = \mathbf 0_N$ and $\mathrm{e}^{f(\boldsymbol{\Lambda})} = \mathrm{e}^{-\xi\boldsymbol{\Lambda}}$, where $\xi \in \mathbb R$ is a constant or learnable coefficient, we recover the GraphHeat proposed in \cite{xu2018graph}. If we let $g(\boldsymbol{\Lambda}) = \mathbf 0_{N\times N}$, and $\theta_2 = \mathbf c_N$, where $\mathbf c_N$ as an $N$-dimensional vector with arbitrary constant as $c$, then the model is equivalent to GRAND++ \cite{thorpe2022grand} with a layer dependent source term, see Appendix \ref{append:special_cases_of_model} for more detailed illustrations.
% Moreover, if $f$ and $g$ form a (or even tightness) frame on $\mathbb R^N$ then we recover the popular graph framelet models \cite{zheng2022decimated,han2022generalized,liu2023framelet}. 
% \EScomment{TODO: We summarize the relation between existing GNNs and Eq.~\eqref{generalized_final} in Appendix \ref{append:special_cases_of_model}.} 
In the following sections, we show the roles of $\theta$ and $\mathbf W$ in controlling the three aforementioned issues of GNNs.

\section{Filtering matrices control model dynamics \& over-smoothing}\label{sec:over_smoothing}

% \GaoC{Better to use a new title here}
In this section, we illustrate the role of the filtering matrices, i.e., $\mathrm{diag}(\theta_1)$ and $\mathrm{diag}(\theta_2)$, in G-MHKG control model's energy dynamic and how a controllable energy dynamic affects model's adaption on homophily and heterophily graphs. For the simplicity of the analysis 
% \JScomment{I think we can remove "we note that", can be rewritten as For the simplification on analysis} 
we further assume that ${f(\cdot)}$ and ${g(\cdot)}$ 
% \GaoC{better to say $f(\cdot)$, as $\mathrm{e}^{f(\widehat{\mathbf L})}$ are matrices.} 
to be \textit{monotonically decrease/increase} in the spectral domain of the graph so that they can be considered as low/high pass filtering functions. We note that this assumption is aligned with the settings in many recent works such as \cite{zheng2022decimated,liu2023framelet,lin2023magnetic} and the conclusion we represent can be easily applied to MHKG which is a special version of G-MHGK. To quantify model's energy dynamics, we consider the Dirichlet energy of the node features $\mathbf{H}$, defined by $\mathbf E(\mathbf H) = \mathrm{Tr}(\mathbf H^\top \widehat{\mathbf L}\mathbf H)$.
% \cite{bronstein2021geometric,han2022generalized,shi2023revisiting}:
% \begin{align}\label{generalized_energy}
%     \mathbf E(\mathbf H) = \frac12 \sum_i \langle \mathbf h_i, \mathbf \Omega \mathbf h_i \rangle- \frac12 \sum_{i,j} \widehat{\mathbf A}_{i,j} \langle \mathbf h_i, \widehat{\mathbf W}\mathbf h_j\rangle + \varphi^{(0)} (\mathbf h, \mathbf h^{(0)}),
% \end{align}
% where $\mathbf \Omega$ and $\widehat{\mathbf W} \in \mathbb R^{c\times c}$ act on each node feature vector independently and perform channel mixing. It is not hard to verify that by setting $\mathbf \Omega = \widehat{\mathbf W} = \mathbf I_c$ and $\varphi^{(0)} = 0$, one can recover the classic Dirichlet energy widely applied in GNN researches \cite{li2018deeper,cai2020note}. 
It is well-known that Dirichlet energy becomes 0 when the model encounters over-smoothing issue. To represent such asymptotic energy behavior, \cite{di2022graph,han2022generalized,shi2023revisiting} consider a general dynamic as $\dot{\mathbf H}^{(t)} = {\mathrm GNN}_\theta (\mathbf H^{(t)}, t)$, with ${\mathrm GNN}_\theta(\cdot)$ as an arbitrary GNN function characterising its behavior by low/high-frequency-dominance (L/HFD). 
% \AHcomment{Do double check the $\mathbf E(\cdot)$ in Definition 1. I remember it is just the Dirichlet energy. Not the generalized energy. If this is the case, we may not define the generalized Dirichlet energy in (7).}
\begin{defn}[\cite{di2022graph}]
\label{def_hfd_lfd}
$\dot{\mathbf H}^{(t)} = {\mathrm GNN}_\theta (\mathbf H^{(t)}, t)$ is Low-Frequency-Dominant (LFD) if $\mathbf E \big(\mathbf H^{(t)}/ \| \mathbf H^{(t)} \| \big) \xrightarrow{} 0$ as $t \xrightarrow{} \infty$, and is High-Frequency-Dominant (HFD) if $\mathbf E \big(\mathbf H^{(t)}/ \| \mathbf H^{(t)} \| \big) \xrightarrow{} \rho_{\widehat{\mathbf L}}/2$ as $t \xrightarrow{} \infty$, where $\rho_{\widehat{\mathbf L}}$ stands for the largest eigenvalue of $\widehat{\mathbf L}$.
\end{defn}
% \GaoC{Need explain what $\rho_{\widehat{\mathbf L}}$ is!}
\begin{lem}[\cite{di2022graph}]
\label{lemma_hfd_lfd}
A GNN model is LFD (resp. HFD) if and only if for each $t_j \xrightarrow{} \infty$, there exists a sub-sequence indexed by $t_{j_\kappa} \xrightarrow{} \infty$ and $\mathbf H_{\infty}$ such that $\mathbf H^{(t)}_{j_\kappa}/\| \mathbf 
H^{(t)}_{j_\kappa}\| \xrightarrow{} \mathbf H_{\infty}$ and $\widehat{\mathbf L} \mathbf H_{\infty} = 0$ (resp. $\widehat{\mathbf L} \mathbf H_{\infty} = \rho_{\widehat{\mathbf L}} \mathbf H_{\infty}$).
\end{lem}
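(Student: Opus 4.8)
The plan is to recast both Definition~\ref{def_hfd_lfd} and the claimed limiting identities in terms of the Rayleigh quotient of $\widehat{\mathbf L}$, and then combine compactness of the unit sphere with the positive semidefiniteness of $\widehat{\mathbf L}$. Writing $\widehat{\mathbf H} = \mathbf U^\top \mathbf H$ and using that $\mathbf U$ is orthogonal (so it preserves the Frobenius norm), the normalized energy unfolds as a convex combination of eigenvalues,
\begin{equation}
\mathbf E\!\left(\frac{\mathbf H}{\|\mathbf H\|}\right) = \frac{\mathrm{Tr}(\mathbf H^\top \widehat{\mathbf L}\mathbf H)}{\|\mathbf H\|^2} = \frac{\sum_{i=1}^N \lambda_i\, \|\widehat{\mathbf h}_i\|^2}{\sum_{i=1}^N \|\widehat{\mathbf h}_i\|^2},
\end{equation}
where $\widehat{\mathbf h}_i$ is the $i$-th row (frequency component) of $\widehat{\mathbf H}$. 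Since $0 = \lambda_1 \le \cdots \le \lambda_N = \rho_{\widehat{\mathbf L}}$, this scalar lives in $[0,\rho_{\widehat{\mathbf L}}]$, equals its minimum $0$ exactly when all the spectral mass sits on the kernel of $\widehat{\mathbf L}$, and attains its maximum exactly when the mass concentrates on the top eigenspace. (The threshold $\rho_{\widehat{\mathbf L}}/2$ in Definition~\ref{def_hfd_lfd} merely reflects the $\tfrac12$ normalization of the Dirichlet energy adopted in \cite{di2022graph}; the argument is insensitive to this constant.)

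For the forward implication, assume the model is LFD, i.e.\ $\mathbf E(\mathbf H^{(t)}/\|\mathbf H^{(t)}\|)\to 0$. Fix any $t_j\to\infty$. The normalized iterates $\mathbf H^{(t_j)}/\|\mathbf H^{(t_j)}\|$ all lie on the unit Frobenius sphere, which is compact, so some subsequence $t_{j_\kappa}$ gives $\mathbf H^{(t_{j_\kappa})}/\|\mathbf H^{(t_{j_\kappa})}\|\to \mathbf H_\infty$ with $\|\mathbf H_\infty\|=1$. Continuity of $\mathbf E$ together with convergence of the full energy sequence forces $\mathbf E(\mathbf H_\infty)=0$. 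Decomposing over channels, $\mathrm{Tr}(\mathbf H_\infty^\top \widehat{\mathbf L}\mathbf H_\infty)=\sum_k \mathbf h_{\infty,k}^\top \widehat{\mathbf L}\mathbf h_{\infty,k}$ is a sum of nonnegative terms (as $\widehat{\mathbf L}\succeq 0$) that vanishes, hence each term vanishes; since a PSD quadratic form is zero only on the kernel, $\widehat{\mathbf L}\mathbf H_\infty=0$. The HFD case is identical after replacing $\widehat{\mathbf L}$ by $\rho_{\widehat{\mathbf L}}\mathbf I-\widehat{\mathbf L}\succeq 0$, whose null space is precisely the top eigenspace.

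For the converse, suppose every $t_j\to\infty$ admits a subsequence whose normalized iterates converge to some $\mathbf H_\infty$ with $\widehat{\mathbf L}\mathbf H_\infty=0$, yet $\mathbf E(\mathbf H^{(t)}/\|\mathbf H^{(t)}\|)\not\to 0$. Then there is a sequence $t_j\to\infty$ and $\varepsilon>0$ with $\mathbf E(\mathbf H^{(t_j)}/\|\mathbf H^{(t_j)}\|)\ge\varepsilon$ for all $j$. Applying the hypothesis to this very sequence extracts a sub-subsequence along which the normalized iterates converge to a kernel element $\mathbf H_\infty$, so by continuity the energy along it tends to $\mathbf E(\mathbf H_\infty)=0$, contradicting the uniform lower bound $\varepsilon$. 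Hence $\mathbf E(\mathbf H^{(t)}/\|\mathbf H^{(t)}\|)\to 0$, so the model is LFD; the HFD direction is symmetric.

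The main obstacle, and the reason the statement is phrased with the ``for every sequence, there is a subsequence'' quantifier rather than plain convergence, is that the normalized features need \emph{not} converge — they may rotate within an eigenspace or oscillate — even though their scalar energy does. The point to get right is therefore that every subsequential limit on the compact sphere is a genuine minimizer (resp.\ maximizer) of the Rayleigh quotient, and that this subsequential information about the features is exactly equivalent to convergence of the scalar energy; bridging the two requires the contradiction argument above rather than a direct passage to the limit. A secondary point to handle carefully is that $\mathbf E(\mathbf H_\infty)=0$ yields membership in the kernel only because $\widehat{\mathbf L}$ is positive semidefinite, so the PSD structure (and its analogue $\rho_{\widehat{\mathbf L}}\mathbf I-\widehat{\mathbf L}\succeq 0$) is doing the essential work.
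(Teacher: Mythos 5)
Your proof is correct. Note, however, that this paper contains no proof of its own for this statement: Lemma~\ref{lemma_hfd_lfd} is imported verbatim from \cite{di2022graph}, so there is no internal argument to compare against; the relevant comparison is with the cited reference, whose proof rests on the same two ingredients you use — compactness of the unit Frobenius sphere to extract subsequential limits, and the Rayleigh-quotient/PSD characterization (kernel of $\widehat{\mathbf L}$ for the minimum, kernel of $\rho_{\widehat{\mathbf L}}\mathbf I - \widehat{\mathbf L}$ for the maximum) to identify those limits — together with the same contradiction argument for the converse. You also correctly flagged and neutralized the one genuine pitfall in transplanting the statement into this paper: the threshold $\rho_{\widehat{\mathbf L}}/2$ in Definition~\ref{def_hfd_lfd} is inconsistent with this paper's unnormalized energy $\mathbf E(\mathbf H) = \mathrm{Tr}(\mathbf H^\top \widehat{\mathbf L}\mathbf H)$ (whose supremum on the unit sphere is $\rho_{\widehat{\mathbf L}}$, not $\rho_{\widehat{\mathbf L}}/2$); this is a normalization artifact inherited from \cite{di2022graph}, and your argument is indeed insensitive to the constant.
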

% \GaoC{Notation $\mathbf H(k_{j_t})$ doesn't match $\mathbf{H}^{(t)}$?!}
\begin{rem}[Dirichlet energy, graph homophily and heterophily]\label{energy_homophily}
    It has been shown \cite{bronstein2021geometric} that if the graph is heterophily where the connected node are unlikely to share the same labels, one may prefer a GNN with sharpening effect, corresponding to increase of energy. Whereas, when the graph is highly homophily, a smoothing effect is preferred.   Based on the settings above, we show our conclusion on the property of energy dynamic of G-MHKG in the following.
\end{rem}

\begin{thm}\label{thm:LFD/HFD}
    G-MHKG can induce both LFD and HFD dynamics. Specifically, let $\theta_1 = \mathbf 1_N$ and $\theta_2 = \zeta \mathbf 1_N$ with a positive constant $\zeta$. %Let $\zeta >0$ 
      Then, with sufficient large $\zeta$ ($\zeta >1$) so that $\mathrm{e}^{f(\boldsymbol{\Lambda})}+\zeta \mathrm{e}^{g(\boldsymbol{\Lambda})}$ is \textit{monotonically} increasing on spectral domain, G-MHKG is HFD. Similarly, if $0<\zeta <1$ and is sufficient small such that  $\mathrm{e}^{f(\boldsymbol{\Lambda})}+\zeta \mathrm{e}^{g(\boldsymbol{\Lambda})}$ is \textit{monotonically} decreasing, the model is LFD.
      % Let $f(\cdot)$ and $g(\cdot)$ to be \textit{monotonically} decrease/increase functions. 
      % % \GaoC{The wording is not correct}. 
      % Then there exists some $\theta_1$ and $\theta_2$ such that G-MHKG can induce both LFD and HFD dynamics. Specifically, let $\theta_1 = \mathbf 1_N$ and $\theta_2 = \zeta \mathbf 1_N$ with a positive constant $\zeta$. %Let $\zeta >0$ 
      % Then, if $\zeta >1$ and is sufficient large so that $\mathrm{e}^{f(\boldsymbol{\Lambda})}+\zeta \mathrm{e}^{g(\boldsymbol{\Lambda})}$ is \textit{monotonically} increasing on spectral domain, G-MHKG is HFD. Similarly, if $0<\zeta <1$ and is sufficient small such that  $\mathrm{e}^{f(\boldsymbol{\Lambda})}+\zeta \mathrm{e}^{g(\boldsymbol{\Lambda})}$ is \textit{monotonically} decreasing, the model is LFD.
      % \GaoC{The case for otherwise is not clear. Do you mean that is not \textit{monotonically} increased, or something else?} the model is LFD.  
\end{thm}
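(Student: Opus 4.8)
The plan is to reduce the two-branch update \eqref{generalized_final} to a single spectral power iteration and then invoke the eigenvector characterization of Lemma~\ref{lemma_hfd_lfd}. With $\theta_1 = \mathbf 1_N$ and $\theta_2 = \zeta\mathbf 1_N$, the two filtering matrices collapse into one combined filter. Writing $\varphi(\lambda) = \mathrm{e}^{f(\lambda)} + \zeta\,\mathrm{e}^{g(\lambda)}$, a single layer of G-MHKG acts on node features as $\mathbf H^{(\ell)} = \mathbf U\,\varphi(\boldsymbol{\Lambda})\,\mathbf U^\top \mathbf H^{(\ell-1)}\mathbf W^{(\ell-1)}$. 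Because the statement only asserts that G-MHKG \emph{can} realise each dynamic, I am free to fix the channel-mixing matrices; taking $\mathbf W^{(\ell)} = \mathbf I$ decouples the node-frequency analysis from the channels and lets me unroll the recursion to $\mathbf H^{(\ell)} = \mathbf U\,\varphi(\boldsymbol{\Lambda})^{\ell}\,\mathbf U^\top \mathbf X$. Since $\mathrm{e}^{f(\lambda)}, \mathrm{e}^{g(\lambda)} > 0$ and $\zeta > 0$, the filter satisfies $\varphi(\lambda_i) > 0$ for every eigenvalue, so $\varphi(\boldsymbol{\Lambda})^{\ell}$ is a well-defined positive diagonal scaling and the iteration is a genuine spectral power method.

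Next I would pass to the eigenbasis and run the power-iteration argument. Let $\widetilde{\mathbf X} = \mathbf U^\top \mathbf X$ have rows $\widetilde{\mathbf x}_i$; then the $i$-th frequency component of $\mathbf H^{(\ell)}$ carries squared mass $\varphi(\lambda_i)^{2\ell}\|\widetilde{\mathbf x}_i\|^2$, and the normalised Dirichlet energy $\mathbf E(\mathbf H^{(\ell)}/\|\mathbf H^{(\ell)}\|)$ equals the weighted average $\sum_i \lambda_i\,w_i^{(\ell)}$ with weights $w_i^{(\ell)} \propto \varphi(\lambda_i)^{2\ell}\|\widetilde{\mathbf x}_i\|^2$. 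In the monotonically increasing case the maximum of $\varphi$ over the spectrum is attained at the top frequency $\rho_{\widehat{\mathbf L}}$, so after dividing numerator and denominator by $\varphi(\rho_{\widehat{\mathbf L}})^{2\ell}$ every weight outside the $\rho_{\widehat{\mathbf L}}$-eigenspace decays geometrically to $0$. Provided $\mathbf X$ has a nonzero projection onto that eigenspace, $\mathbf H^{(\ell)}/\|\mathbf H^{(\ell)}\|$ converges to a unit vector $\mathbf H_{\infty}$ supported on the $\rho_{\widehat{\mathbf L}}$-eigenspace, whence $\widehat{\mathbf L}\mathbf H_{\infty} = \rho_{\widehat{\mathbf L}}\mathbf H_{\infty}$; Lemma~\ref{lemma_hfd_lfd} then certifies HFD. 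Note that routing the conclusion through the eigenvector equation rather than the energy value lets me avoid committing to the precise constant in Definition~\ref{def_hfd_lfd}. The monotonically decreasing case is symmetric: the maximum of $\varphi$ sits at the lowest frequency $\lambda = 0$, the normalised feature collapses into $\ker\widehat{\mathbf L}$, so $\widehat{\mathbf L}\mathbf H_{\infty} = 0$ and the model is LFD.

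Finally I would tie the monotonicity hypotheses back to the stated thresholds on $\zeta$: since $g$ is a high-pass (increasing) and $f$ a low-pass (decreasing) function, a sufficiently large $\zeta$ lets the increasing term $\zeta\,\mathrm{e}^{g(\lambda)}$ dominate the derivative of $\varphi$ and force $\varphi$ to increase, while a sufficiently small $\zeta$ lets $\mathrm{e}^{f(\lambda)}$ dominate and force $\varphi$ to decrease, matching the $\zeta > 1$ and $0 < \zeta < 1$ regimes.

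I expect the main obstacle to be the two genericity/degeneracy points rather than the power iteration itself: first, guaranteeing that the initial projection onto the dominant (top- or bottom-frequency) eigenspace is nonzero; and second, the interaction with the weight matrices if one insists on general $\mathbf W^{(\ell)}$ rather than the identity, where the geometric separation $\varphi(\rho_{\widehat{\mathbf L}})^{\ell}$ must be shown to outrun the condition number of the accumulated product $\prod_k \mathbf W^{(k)}$. For the existence claim the identity choice removes this difficulty, so the crux is really just confirming that the combined filter stays strictly extremised at the correct spectral endpoint under the stated monotonicity.
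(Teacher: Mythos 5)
Your proposal is correct and its core mechanism is the same as the paper's: reduce the layer update to a spectral power iteration, argue that the monotone combined filter $\varphi(\lambda)=\mathrm{e}^{f(\lambda)}+\zeta\,\mathrm{e}^{g(\lambda)}$ is uniquely maximized at $\rho_{\widehat{\mathbf L}}$ (resp.\ at $\lambda=0$), and conclude via the eigenvector characterization of Lemma~\ref{lemma_hfd_lfd} rather than the energy constant in Definition~\ref{def_hfd_lfd} --- the paper makes exactly this move. The genuine difference is your treatment of the weights. You set $\mathbf W^{(\ell)}=\mathbf I$, which is legitimate for the existence reading (``can induce''), and it buys you a cleaner argument: no diagonalizability assumption on $\mathbf W$, no need for $\rho_{\widehat{\mathbf L}}$ to be simple (your projection argument handles a multi-dimensional top eigenspace, which the paper's proof explicitly assumes away), and you flag the genericity requirement $\mathbf P_\rho \mathbf X \neq 0$ that the paper leaves implicit. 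The paper instead keeps a fixed symmetric $\mathbf W$ across layers, vectorizes the recursion as $\mathrm{vec}(\mathbf H^{(m)})=\big(\mathbf W\otimes(\mathcal W_l+\zeta\mathcal W_h)\big)^m\mathrm{vec}(\mathbf H^{(0)})$, and expands in the Kronecker eigenbasis $\boldsymbol\phi_k^W\otimes\mathbf u_i$, so the dominant mode is $\lambda_k^W\,\varphi(\lambda_i)$; since $\varphi>0$, the maximizing frequency is still $\rho_{\widehat{\mathbf L}}$ for every $k$, and the limit lies in $\mathbb R^c\otimes\mathrm{span}\{\mathbf u_\rho\}$. What that extra machinery buys is precisely the claim emphasized in the Remark following Theorem~\ref{thm:LFD/HFD}: L/HFD is induced by $\theta_1,\theta_2$ \emph{without any condition on} $\mathbf W$ (beyond being fixed and symmetric), which your identity-weight reduction cannot support. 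If one reads the theorem strictly --- the stated $\theta$ and monotonicity conditions alone imply HFD/LFD for the model as given, weights included --- then your proof establishes a weaker statement, and the missing step is exactly the one you identified at the end: showing that the spectral gap $\varphi(\rho_{\widehat{\mathbf L}})/\max_{\lambda_i\neq\rho_{\widehat{\mathbf L}}}\varphi(\lambda_i)$ survives the channel mixing, which the Kronecker factorization resolves immediately because the eigenvalues of $\mathbf W\otimes\mathbf M$ are the products $\lambda_k^W\mu_i$, so the weight spectrum multiplies both numerator and denominator of the gap and cancels out.
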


We leave the proof in Appendix \ref{theroem_1_proof}. Theorem \ref{thm:LFD/HFD} directly shows the benefits of constructing multi-scale GNNs because such a model provides flexible control of the dominant dynamic that always amplifies/shrinks the node feature differences at every step of its propagation. 
Accordingly, once the model is HFD, there will be no over-smoothing issue. Furthermore, it is straightforward to verify that single-scale GNN \cite{kipf2016semi,chamberlain2021grand} may satisfy the conditions of frequency dominance described in Theorem \ref{thm:LFD/HFD}, 
% \GaoC{spell out what condition. It is not clear what is being referred.} 
and in fact some of them can never be HFD to fit heterophily graphs. See \cite{di2022graph} for more details. Finally, we note that there are many other settings of  $\theta_1$ and $\theta_2$ that make the model HFD, the example in Theorem \ref{thm:LFD/HFD} is one of them to illustrate its existence. 
% \GaoC{This statement is not clear.}
% We further note that there are many other settings on $\theta_2$ and $\theta_1$ that are able to induce L/HFD; however Theorem \ref{thm:LFD/HFD} aims to show G-MHKG's capability of inducing both dynamics. 
\begin{rem}
    Recent research on GNNs dynamics \cite{di2022graph} suggests that to induce HFD, the weight matrix $\mathbf W$ must be symmetric and contain at least one negative eigenvalue. However, from Theorem \ref{thm:LFD/HFD}, G-MHKG can be L/HFD without considering the impact of $\mathbf W$. This further supports the flexibility and generalizability of G-MHKG. However, as we will illustrate in the next section, $\mathbf W$ has direct impact on the model's expressive power and over-squashing. 
    % \GaoC{This statement is too sudden, you need have explained in some research $\mathbf W$ determines L/HFD, otherwise your readers may not know what you refer to unless it happens he/she is an expert on this.}. 
    % This conclusion counters the one provided in \cite{di2022graph} in which in order to let GNNs have HFD dynamic, one shall require $\mathbf W$ to contain at least one negative eigenvalue.  \GaoC{Suggest swapping two parts: first say something $\mathbf W$ determines L/HFD, then say Theorem 1 may not need the role of $\mathbf W$.}
\end{rem}

\section{Weights Control Expressive Power \& Over-squashing}\label{sec:over_squashing}

We have demonstrated in Section \ref{sec:over_smoothing} that the L/HFD induced by G-MHKG does not impose any specific conditions on the weight matrix $\mathbf{W}$. However, it is intriguing to investigate the functionality of $\mathbf{W}$ in G-MHKG. 
Recent developments in \cite{di2023does} shed light on  the fact that $\mathbf{W}$ significantly influences the model's expressive power which refers to the class of functions that GNN can learn, taking node features into consideration. This influence becomes evident through the pairwise (over) squashing phenomenon, which interacts with the number of layers denoted as $\ell$, i.e. the depth of the model. Specifically, let $\mathcal Q: \mathbb R^{N \times c} \rightarrow \mathbb R^{N \times d}$ 
% \GaoC{Why the same dimension $c$?}
% \AHcomment{I am not sure about the notation here.} 
be the function that GNNs aim to learn, where $d$ is the dimension of the node features after certain layers of propagation. One can characterize the \textbf{expressive power} of one GNN model by estimating the amount of mixing of $\mathcal Q(\mathbf X)$ 
% \footnote{By assuming a static graph $\mathcal G$, it can be verified that $\mathcal Q(\mathbf X): \mathbb R^{N\times c} \rightarrow \mathbb R^{N\times c}$ is a function on node features.} 
among any pair of nodes by the following definition.
\begin{defn}[Maximal Mixing]
    For a smooth function $\mathcal Q$ of $N \times c$ variables, the maximal mixing induced from $\mathcal Q$ associated with nodes $u,v$ can be measured as: 
    \begin{align}\label{node_level_mixing}
        % \mathrm{mix}^{(1)}_\mathcal Q(v,u) = \underset{\mathbf X}{\mathrm{max}} \max_{1 \leq \alpha, \beta \leq  c}\left\|\frac{\partial(\mathcal Q(\mathbf X))^\alpha_v}{\partial x_u^\beta}\right\|.
         \mathrm{mix}_\mathcal Q(v,u) = \underset{\mathbf X}{\mathrm{max}} \left\|\frac{\partial(\mathcal Q(\mathbf X))_v}{\partial \mathbf x_u}\right\|,
    \end{align}
\end{defn}
% \begin{defn}[Multi-scale First Order Over-squashing]\JScomment{Not sure about this}
%     \begin{align}
%     \mathrm{MOSQ}^{(1)}_{v,u}(m,\mathrm{w},\mathbf S) =\left(\mathrm{w}^m \left(\mathbf S\right)^m_{v,u}\right)^{-1}.
% \end{align}  
% \end{defn}

% \AHcomment{We may define this as our mixing. You may add a remark showing the relationship to the other definition, or not add. Also, what is the superscript (1)? Can we remove that?} 
% \guo{What is the norm in eq6?} 
where $\|\cdot\|$ is the spectral norm of the matrix. Here the $\mathrm{mix}_\mathcal Q(u,v)\in \mathbb R$ depends on the maximal component of Jacobian matrix between the output of GNN and the initial node features. Furthermore if one replaces the $\mathcal Q(\mathbf X)$ in Eq.~\eqref{node_level_mixing} by the feature representation from an $\ell$-layer GNN, the quantity presented in Eq.~\eqref{node_level_mixing} aligns with the so-called \textit{sensitivity} which was proposed to measure the over-squashing issue \cite{topping2021understanding}. Therefore, it is natural to see that if \textbf{one GNN has strong mixing (expressive) power, the over-squashing issue in this GNN tends to be smaller}. We note that similar definitions and discussions can be found in \cite{di2023does}. Accordingly, we define \textbf{over-squashing }(OSQ) as 
% \AHcomment{(we shall not call this multiscale because this applies to all model? Maybe we just call it OSQ)}
\begin{align}\label{MOSQ_defn_1} 
    \mathrm{OSQ}_{v,u} =\big(\mathrm{mix}_\mathcal Q(v,u))\big)^{-1}.
\end{align}  
% \GaoC{What the first max is defined on?? over all the pair of $(u,v)$? Please write the variable below $\max$ operator.}
% We note that the notion of MOSQ can be also applied to single-scale GNNs, thus in the sequel, we use OSQ and MOSQ interchangeably. 
Now we present the upper bound of 
$\mathrm{mix}_\mathcal Q(v,u)$ for G-MHKG, also standing for the upper bound of its expressive power. Specifically, the upper bound of $\mathrm{mix}_\mathcal Q(v,u)$ is determined by $\mathbf W$, depth $\ell$ and $\mathbf S = \sum \widehat{\mathbf A}^l +  \widehat{\mathbf A}^h$, where $\widehat{\mathbf A}^l = \mathbf I -\mathbf U \boldsymbol{\Lambda}_1 \mathbf U^\top$ and $\widehat{\mathbf A}^h= \mathbf I -\mathbf U \boldsymbol{\Lambda}_2 \mathbf U^\top$ stand for the (weighted) adjacency matrices generated from the low pass and high pass filtering functions, respectively. 
% \guo{Write out $\widehat{\mathbf A}^l$ and $\widehat{\mathbf A}^h$ somewhere. } 
We note that for the sake of simplicity, we set both $\theta_1 = \theta_2 = \mathbf 1_N$ in Eq.~\eqref{generalized_final}. 
% Based on Eq.~\eqref{MOSQ_defn_1}, the necessary condition for a GNN to learn a node-level function $\mathcal Q$ with maximal mixing is $ \mathrm{MOSQ}_{v,u}< \left(\mathrm{mix}_\mathcal Q(v,u)\right)^{-1}$.
% \GaoC{Dont understand this!} Now we show how $\mathbf W$ and depth $m$ determine the upper bound of $\mathrm{mix}_\mathcal Q(v,u)$ in G-MHKG. We note that for the sake of simplicity, we set both $\theta_1 = \theta_2 = \mathbf 1_N$ in Eq.~\eqref{generalized_final}. \GaoC{Not sure your intention in Lemma below. Need your clear explanation!  Why talking about different graphs?  Who are they?}
\begin{lem}\label{upper_bound}
   Let $\mathcal Q(\mathbf X) = \mathbf H^{(\ell)}$, 
   % $\mathbf U \boldsymbol{\Lambda}_1 \mathbf U^\top = \widehat{\mathbf{L}}^l$ and $\mathbf U \boldsymbol{\Lambda}_2 \mathbf U^\top = \widehat{\mathbf{L}}^h$,
   % {Are these assumption or just notation?} 
   % suggesting that both filtering results still refer to SPD (positive semi-definite) matrices.
   % {What does this mean?} 
   % and further let the corresponding adjacency matrices be $\widehat{\mathbf A}^l$ and $\widehat{\mathbf A}^h$, respectively. 
    $\|\mathbf W^{(\ell-1)}\| \leq \mathrm{w}$ and $\mathbf S =  \widehat{\mathbf A}^l +  \widehat{\mathbf A}^h$. Given $u,v \in \mathcal V$, with $\ell$ layers, then the following holds:  
    \begin{align}\label{over_squashing_bond}
        \left \|\frac{\partial \mathbf h_v^{(\ell)}}{\partial \mathbf x_u} \right \| \leq  \mathrm{w}^\ell \left(\mathbf S\right)^\ell_{v,u},
    \end{align}
    where $\|\cdot\|$ is the spectral norm of the matrix.
\end{lem}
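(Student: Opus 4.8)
The plan is to exploit the fact that, once we fix $\theta_1=\theta_2=\mathbf 1_N$ and note that the G-MHKG update in Eq.~\eqref{generalized_final} carries no nonlinearity, the layer map collapses to a single linear propagation step $\mathbf H^{(\ell)} = \mathbf S\,\mathbf H^{(\ell-1)}\mathbf W^{(\ell-1)}$, where $\mathbf S = \widehat{\mathbf A}^l + \widehat{\mathbf A}^h$ is the combined low/high-pass operator. The lemma is then purely a statement about the input-to-output Jacobian of an $\ell$-fold linear recursion, so the whole argument reduces to bookkeeping of a matrix product together with submultiplicativity of the spectral norm.

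First I would unroll the recursion. Since the same $\mathbf S$ acts at every layer while the $\mathbf W^{(k)}$ multiply on the right in order, a one-line induction gives the closed form $\mathbf H^{(\ell)} = \mathbf S^{\ell}\,\mathbf X\,\prod_{k=0}^{\ell-1}\mathbf W^{(k)}$. Reading off the $v$-th row, $\mathbf h_v^{(\ell)} = \sum_{w}(\mathbf S^{\ell})_{vw}\,\mathbf x_w \big(\prod_{k=0}^{\ell-1}\mathbf W^{(k)}\big)$, and differentiating in $\mathbf x_u$ kills every term except $w=u$, yielding the exact factorisation $\partial \mathbf h_v^{(\ell)}/\partial \mathbf x_u = (\mathbf S^{\ell})_{vu}\big(\prod_{k=0}^{\ell-1}\mathbf W^{(k)}\big)^\top$. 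The Jacobian is thus a single scalar entry of $\mathbf S^{\ell}$ times a product of weight matrices independent of $u,v$.

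Taking spectral norms and using $\|\prod_k \mathbf W^{(k)}\|\le \prod_k\|\mathbf W^{(k)}\|\le \mathrm{w}^{\ell}$ then gives $\|\partial\mathbf h_v^{(\ell)}/\partial\mathbf x_u\| \le |(\mathbf S^\ell)_{vu}|\,\mathrm{w}^{\ell}$, which is exactly the claimed bound $\mathrm{w}^\ell (\mathbf S)^\ell_{v,u}$. I would also present the equivalent inductive version, since it mirrors the message-passing sensitivity derivations in \cite{topping2021understanding,di2023does}: assuming the bound at depth $\ell-1$, apply the chain rule to $\mathbf h_v^{(\ell)}=\sum_j \mathbf S_{vj}\mathbf h_j^{(\ell-1)}\mathbf W^{(\ell-1)}$, bound each summand by the triangle inequality and submultiplicativity, and recognise $\sum_j \mathbf S_{vj}(\mathbf S^{\ell-1})_{ju}=(\mathbf S^\ell)_{vu}$.

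The one genuine subtlety, and the step I would flag, is the passage from $|(\mathbf S^\ell)_{vu}|$ to $(\mathbf S^\ell)_{vu}$. The inductive route in particular produces $\sum_j|\mathbf S_{vj}|(\mathbf S^{\ell-1})_{ju}$ after the triangle inequality, which coincides with $(\mathbf S^\ell)_{vu}$ only when $\mathbf S$ is entrywise nonnegative, so that its powers are too and the moduli are vacuous. Because $\widehat{\mathbf A}^h$ arises from a high-pass filter it need not have nonnegative entries, so the clean statement really requires either assuming entrywise nonnegativity of $\mathbf S$ or replacing $\mathbf S$ on the right-hand side by its entrywise modulus; the unrolled derivation is slightly tighter, delivering $|(\mathbf S^\ell)_{vu}|$ directly. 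I would make this hypothesis explicit and treat everything else as routine.
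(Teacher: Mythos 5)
Your proof is correct, and at its core it rests on the same two ingredients as the paper's: linearity of the G-MHKG layer (so that the update collapses to $\mathbf H^{(\ell)} = \mathbf S\,\mathbf H^{(\ell-1)}\mathbf W^{(\ell-1)}$) and submultiplicativity of the spectral norm to bound the accumulated weight factor by $\mathrm{w}^\ell$. The difference is in execution. The paper works entrywise and recursively: it writes $\mathbf h_v^{(\ell)} = \sum_i s_{vi}(\mathbf W^{(\ell-1)})^\top \mathbf h_i^{(\ell-1)}$, applies the chain rule layer by layer, and recognizes the resulting path sum $\sum s_{v i_{\ell-1}} s_{i_{\ell-1} i_{\ell-2}}\cdots s_{i_1 u} = (\mathbf S^\ell)_{v,u}$ --- this is exactly the ``inductive version'' you sketch as your second route. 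Your primary route --- first deriving the closed form $\mathbf H^{(\ell)} = \mathbf S^\ell\,\mathbf X\,\prod_{k}\mathbf W^{(k)}$ and then differentiating once --- compresses the same computation but is slightly tighter: it yields $|(\mathbf S^\ell)_{v,u}|\,\mathrm{w}^\ell$ directly, whereas the recursive route, after the triangle inequality, really produces the path sum of absolute values $\sum |s_{v i_{\ell-1}}|\cdots|s_{i_1 u}|$. This is where the subtlety you flag earns its keep: the paper's proof pulls the path sum out of the norm as if it were nonnegative, which is legitimate only when $\mathbf S$ is entrywise nonnegative; since $\widehat{\mathbf A}^h = \mathbf I - \mathbf U\,\mathrm{e}^{g(\boldsymbol{\Lambda})}\mathbf U^\top$ arises from a high-pass filter, this is a genuine unstated hypothesis (indeed, were $(\mathbf S^\ell)_{v,u}$ negative, the lemma's right-hand side could not bound a norm at all). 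Making that assumption explicit, or replacing $(\mathbf S)^\ell_{v,u}$ by $|(\mathbf S^\ell)_{v,u}|$ as you propose, is the correct fix; the paper silently elides it, so your version is, if anything, the more careful of the two.
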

We present the proof in Appendix \ref{append:lemma2_proof}. 
% and it is worth noting that the upper-script $i$ on $\widehat{\mathbf A}$ stands for the adjacency matrices generated from the low pass and high pass filtering functions respectively.\GaoC{Better to move this before Lemma 2. When your statement is very long like a Lemma here, please define all the notation first.} 
The conclusion in Lemma \ref{upper_bound}  indicates that the incorporation of W serves a dual purpose: it not only plays a role in defining the model's expressive capabilities but also helps us understand the connection between the model's expressiveness and the concern of over-squashing. 

\begin{rem}[HFD, over-smoothing and over-squashing]\label{HFD_osm_osq}
Based on the discussion in Section \ref{sec:over_smoothing}, to adapt G-MHKG to HFD dynamic, one shall require at least one of $f(\cdot)$ and $g(\cdot)$ to be \textit{monotonically increasing} and ensure that G-MHKG always amplifies the output induced from such high pass filter determined domain (i.e., Theorem \ref{thm:LFD/HFD}). Similarly, if one wants to decrease the over-squashing, it is sufficient to increase all entries of $\mathbf S$, so that the upper bound of mixing power increases, resulting a potentially smaller over-squashing. We note that this observation supports that any \textit{graph re-weighting} that increases the quantity of $\mathbf S$ and \textit{graph rewiring} that makes $\mathbf S$ denser can mitigate the over-squashing issue. Therefore it is not difficult to verify that based on the form of $\mathbf S$ in Lemma \ref{upper_bound}, to increase $\mathbf S$, it is sufficient to require $(\mathbf U \mathrm{diag}(\theta_1)\mathrm{e}^{f({\boldsymbol{\Lambda}})}+  \mathrm{diag}(\theta_2) \mathrm{e}^{g({\boldsymbol{\Lambda}})} \mathbf  U^\top)_{i,j} < \widehat{\mathbf L}_{i,j} \, \forall i,j$. This suggests that similar to the model dynamics, the over-squashing issue can also be investigated through the filtering functions in spectral domain. 
% Thus we can see that to investigate both issues, one only needs to focus on the property of
% filtering functions on the spectral domain. \GaoC{I have not caught what you mean here.}
% : $\mathrm{e}^{f(\widehat{\mathbf L})}$ and $\mathrm{e}^{g(\widehat{\mathbf L})}$ on the graph spectral domain. 
\end{rem}

\section{Trade-off Between two issues in spectral domain}\label{sec:trade_off}
Building upon the findings from Section \ref{sec:over_smoothing} and Section \ref{sec:over_squashing}, this section explores a discernible trade-off between \textit{over-smoothing} and \textit{over-squashing}. Assuming we have two G-MHKGs namely G-MHKG(1) and G-MHKG(2), with the same assumptions in Lemma \ref{upper_bound}, additionally let two models share the same weight matrix $\mathbf W$. Then it is sufficient to have the following conclusion.
% in a simple and painless manner. 
% \GaoC{Why informal? If it is not the provable conclusion, you can organize this a recommendation on how to achieve this.}
\begin{lem}[Trade-off]\label{lem:tradeoff}
        For two G-MHKGs namely G-MHKG(1) and G-MHKG(2), if G-MHKG(1) has lower over-squashing than G-MHKG(2) i.e., $\mathrm{e}^{f_1(\boldsymbol{\Lambda})} + \mathrm{e}^{g_1(\boldsymbol{\Lambda})} < \mathrm{e}^{f_2(\boldsymbol{\Lambda})} + \mathrm{e}^{g_2(\boldsymbol{\Lambda})}$, where $f_1, g_1$ and $f_2,g_2$ are filtering functions of two models, respectively, then on any iteration, i.e., from $\mathbf H^{(\ell-1)}$ to $\mathbf H^{(\ell)}$. With the same initial node feature $\mathbf H^{(\ell-1)}$, we have the following inequality in terms of Dirichlet energy of two models \[\mathbf E(\mathbf H^{(\ell)})_1 < \mathbf E(\mathbf H^{(\ell)})_2.\] In words,  more feature smoothing effect is induced from lower over-squashing model G-MHKG(1).
\end{lem}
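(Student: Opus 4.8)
The plan is to evaluate the one-step Dirichlet energy of each model directly in the graph Fourier basis, where the entire difference between the two models collapses into a single diagonal spectral multiplier. Since G-MHKG(1) and G-MHKG(2) share the same weight matrix $\mathbf{W}$ and the same input $\mathbf{H}^{(\ell-1)}$, every factor except that multiplier is common to both, so the comparison reduces to a term-by-term inequality between non-negative scalars.

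First I would record the single-step update. With $\theta_1 = \theta_2 = \mathbf{1}_N$ as in Lemma \ref{upper_bound}, Eq.~\eqref{generalized_final} gives
\[
\mathbf{H}^{(\ell)} = \mathbf{U}\,\boldsymbol{\Phi}\,\mathbf{U}^\top \mathbf{H}^{(\ell-1)}\mathbf{W}^{(\ell-1)}, \qquad \boldsymbol{\Phi} := \mathrm{e}^{f(\boldsymbol{\Lambda})} + \mathrm{e}^{g(\boldsymbol{\Lambda})},
\]
where $\boldsymbol{\Phi} = \mathrm{diag}(\phi_1,\dots,\phi_N)$ with $\phi_i = \mathrm{e}^{f(\lambda_i)} + \mathrm{e}^{g(\lambda_i)} > 0$. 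Substituting this into $\mathbf{E}(\mathbf{H}) = \mathrm{Tr}(\mathbf{H}^\top \widehat{\mathbf{L}}\mathbf{H})$, using $\mathbf{U}^\top \widehat{\mathbf{L}}\mathbf{U} = \boldsymbol{\Lambda}$, and writing $\mathbf{M} := \mathbf{U}^\top \mathbf{H}^{(\ell-1)}\mathbf{W}^{(\ell-1)}$, I obtain
\[
\mathbf{E}(\mathbf{H}^{(\ell)}) = \mathrm{Tr}\big(\mathbf{M}^\top \boldsymbol{\Phi}\,\boldsymbol{\Lambda}\,\boldsymbol{\Phi}\,\mathbf{M}\big) = \sum_{i=1}^N \phi_i^2\,\lambda_i\,\|\mathbf{m}_i\|^2,
\]
where $\mathbf{m}_i$ is the $i$-th row of $\mathbf{M}$ and $\|\cdot\|$ its Euclidean norm; the last equality uses that $\boldsymbol{\Phi}\boldsymbol{\Lambda}\boldsymbol{\Phi}$ is diagonal with entries $\phi_i^2\lambda_i$. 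The weight $\mathbf{W}$ acting on the channel dimension (right multiplication) does not interfere with the node-dimension operators $\boldsymbol{\Phi}$ and $\widehat{\mathbf{L}}$, which is what makes this factorization clean.

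The key observation is that $\mathbf{M}$ depends only on $\mathbf{H}^{(\ell-1)}$ and $\mathbf{W}^{(\ell-1)}$, which the two models share, so the weights $\|\mathbf{m}_i\|^2$ and the spectrum $\lambda_i \geq 0$ are identical across models; only $\phi_i$ differs, equal to $\phi_i^{(1)}$ for G-MHKG(1) and $\phi_i^{(2)}$ for G-MHKG(2). The over-squashing hypothesis $\mathrm{e}^{f_1(\boldsymbol{\Lambda})} + \mathrm{e}^{g_1(\boldsymbol{\Lambda})} < \mathrm{e}^{f_2(\boldsymbol{\Lambda})} + \mathrm{e}^{g_2(\boldsymbol{\Lambda})}$ reads entrywise as $0 < \phi_i^{(1)} < \phi_i^{(2)}$, hence $(\phi_i^{(1)})^2 < (\phi_i^{(2)})^2$, and since $\lambda_i \geq 0$ each summand obeys $(\phi_i^{(1)})^2\lambda_i\|\mathbf{m}_i\|^2 \leq (\phi_i^{(2)})^2\lambda_i\|\mathbf{m}_i\|^2$. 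Summing over $i$ yields $\mathbf{E}(\mathbf{H}^{(\ell)})_1 \leq \mathbf{E}(\mathbf{H}^{(\ell)})_2$, the claimed smoothing comparison.

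The one point I expect to require care is strictness rather than the energy computation itself. The eigenpair with $\lambda_i = 0$ contributes nothing to either energy, so the inequality is strict precisely when $\mathbf{H}^{(\ell-1)}$ carries nontrivial high-frequency content after channel mixing, i.e.\ when there exists an index $i$ with $\lambda_i > 0$ and $\|\mathbf{m}_i\| > 0$; I would therefore state this mild non-degeneracy of $\mathbf{M}$ as a standing assumption, noting that it fails only in the degenerate case where the filtered signal is constant on the graph and both energies vanish. I would also make explicit at the outset that ``lower over-squashing'' is identified here with the spectral ordering $\boldsymbol{\Phi}_1 < \boldsymbol{\Phi}_2$, consistent with $\mathbf{S} = \widehat{\mathbf{A}}^l + \widehat{\mathbf{A}}^h$ in Lemma \ref{upper_bound} and Remark \ref{HFD_osm_osq}, so that the hypothesis and conclusion are phrased in compatible spectral terms.
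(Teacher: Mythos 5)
Your proof is correct and takes essentially the same route as the paper's — reduce the over-squashing hypothesis to an entrywise ordering $\phi_i^{(1)} < \phi_i^{(2)}$ of the filtered spectra and compare Dirichlet energies — except that you actually carry out the computation the paper dismisses as ``easy to verify,'' namely the diagonalization $\mathbf E(\mathbf H^{(\ell)}) = \mathrm{Tr}\bigl(\mathbf M^\top \boldsymbol{\Phi}\boldsymbol{\Lambda}\boldsymbol{\Phi}\mathbf M\bigr) = \sum_{i=1}^N \phi_i^2 \lambda_i \|\mathbf m_i\|^2$ with $\mathbf M = \mathbf U^\top\mathbf H^{(\ell-1)}\mathbf W^{(\ell-1)}$ shared by both models. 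Your two refinements are also genuine improvements over the paper's own argument: the strict inequality claimed in the lemma does fail in the degenerate case where $\mathbf M$ has no component on eigenvectors with $\lambda_i > 0$ (both energies vanish), which the paper silently ignores, and the paper's preliminary assumption that all feature entries are non-negative plays no role in the energy comparison, as your computation makes clear.
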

% \AHcomment{Include a comparison to \cite{giraldo2022understanding}. They also reach a similar conclusion as we did.}
We include the proof in Appendix \ref{append: proof_of_trade_off_lemma}.  Our conclusion can be applied to single-scale GNNs. To gain a clear understanding of the trade-off described in Lemma \ref{lem:tradeoff}, one can consider that since the sum of the filtering functions of G-MHKG(2) over G-MHKG(1) indicating a higher Dirichlet energy of the node features, and thus G-MHKG(2) produces less smoothing than G-MHKG(1). Meanwhile, the higher values of eigenvalues from G-MHKG(2) also increase model's over-squashing based on Lemma \ref{upper_bound} and Remark \ref{HFD_osm_osq}. These observations directly suggest that imposing more sharpening effect in a model to prevent over-smoothing will lead to more over-squashing.

\section{Time manipulation: How to handle two issues and beyond}\label{sec:handle_two_problem}
% \vspace{-0.03em}
% \GaoC{I ran out of time today. Stop here!}
While we have illustrated the fundamental relationship (trade-off) between over-smoothing and over-squashing, whether GNN models can handle both issues 
% \GaoC{instead of problems, use issues?!} 
naturally becomes the next question. Specifically, one shall require a GNN to be HFD to avoid over-smoothing and increasing the quantities in $\mathbf S$ to decrease over-squashing. In terms of G-MHKG, one can see that due to the property of exponential function, it is not possible to have the summation of the diagonal position of
$\mathrm{diag}(\theta_1) \mathrm{e}^{f({\boldsymbol{\Lambda}})}+  \mathrm{diag}(\theta_2) \mathrm{e}^{g({\boldsymbol{\Lambda}})} = 0$ unless we set $\theta_1 = \theta_2 = 0$ under the situation that  $\mathrm{e}^{f({\boldsymbol{\Lambda}})} \neq \mathrm{e}^{g({\boldsymbol{\Lambda}})}$, or $\theta_1 = -\theta_2$ when $\mathrm{e}^{f({\boldsymbol{\Lambda}})} = \mathrm{e}^{g({\boldsymbol{\Lambda}})}$.
Nonetheless, to satisfy the condition, we delay the occurrence of HFD by setting $\theta_1 = \theta_2 = 0$ when $\lambda =0$, and our result is summarized in the following theorem. 
% \GaoC{Why considering $\lambda = 0$?  The above argument can be for any $\lambda$}

\begin{comment}
\begin{wrapfigure}{r}{0.6\textwidth}
  \begin{center}
\scalebox{0.73}{
\includegraphics[width = 0.82
    \textwidth, height = 0.37\textwidth]{}}
  \end{center}
  \caption{The figure left represents the different types of HFD filtering outcomes. One can check that to sufficiently induce more sharpening (bottom to top), the model will suffer more from OSQ. The figure right illustrates the situation described in Theorem \ref{improvised_LFD}.}\label{trade_off}
\end{wrapfigure}
\end{comment}

% \JScomment{Test figure by using tik}
\begin{wrapfigure}{!r}{0.48\textwidth}
\begin{tikzpicture}
    \centering 
    \node[anchor=south west,inner sep=0] at (0,0) {\includegraphics[width=0.5\textwidth]{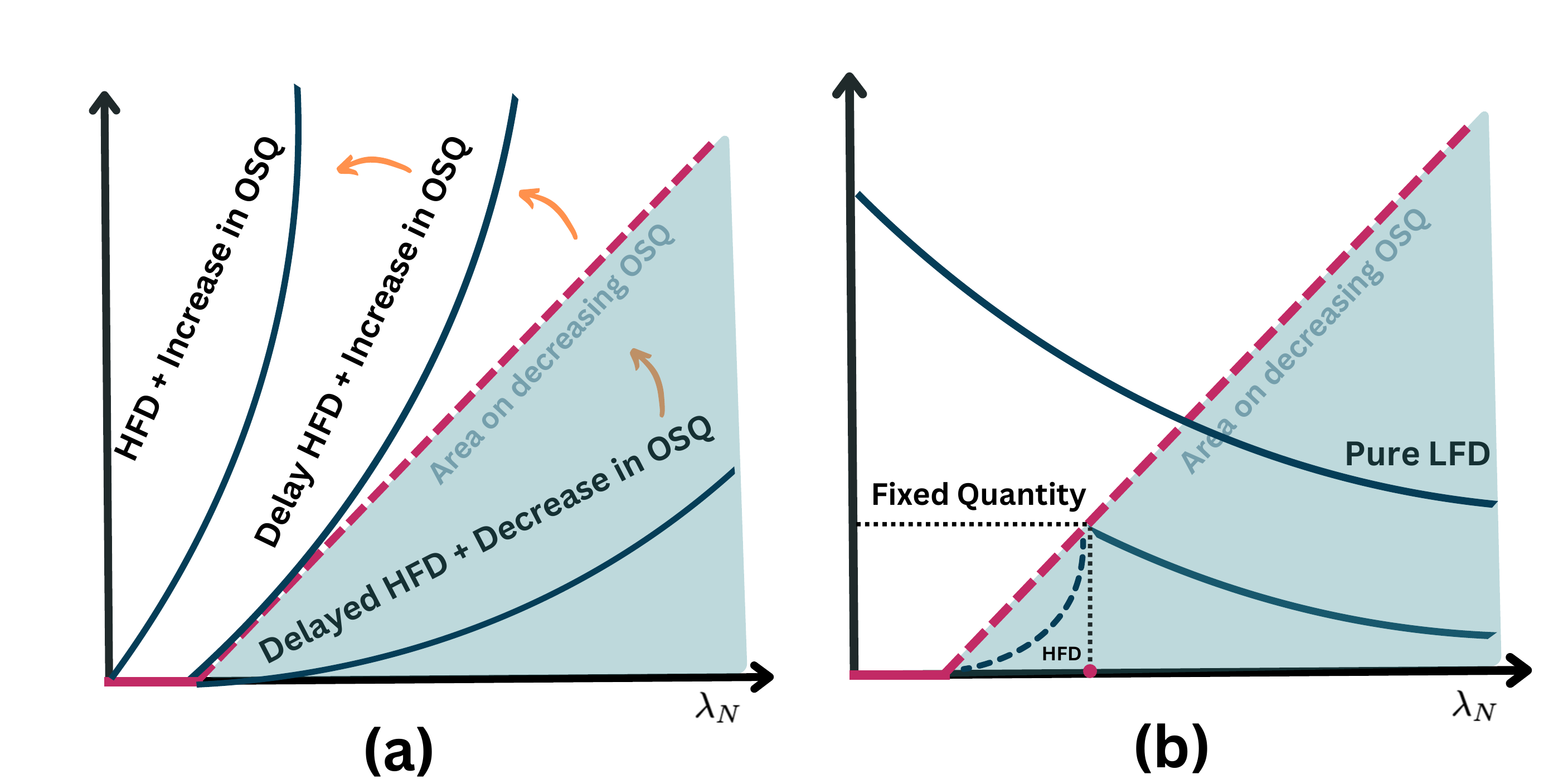}};  
    % Add text at a 45-degree angle
    \node[rotate=45,font=\fontsize{3}{4}\selectfont, text=black] at (3.7,3.5) {\(y = \lambda\)}; 
    \node[rotate=45, font=\fontsize{3}{4}\selectfont, text=black] at (7.6,3.5) {\(y = \lambda\)}; 
    \node[font=\fontsize{1}{1.5}\selectfont, text=red] at (1.4,0.4) {\(\lambda_1 \!\!=\!\! \lambda_2 \!\!=\!\!\cdots\!\! =\! \!\lambda_k \!\!= \!\!0\)};     
    % \node[font=\fontsize{3}{4}\selectfont, text=red] at (4.5,0.33) {\(\lambda_1 \!\!=\!\! \lambda_2 \!\!=\!\! \cdots\!\! =\! \!\lambda_k \!\!= \!\!0\)}; 
     \node[font=\fontsize{1}{1.5}\selectfont, text=red] at (4.85,0.4) {\(\lambda_{k\!+\!i}\)}; 
     \node[font=\fontsize{1}{2}\selectfont] at (1.25,3.95) {\(\theta_1\! e^{f(\!\lambda\!)} \!+ \!\theta_2 e^{g(\!\lambda\!)}\)};
     \node[font=\fontsize{1}{2}\selectfont] at (5.2,3.95) {\(\theta_1\! e^{f(\!\lambda\!)} \!+ \!\theta_2 e^{g(\!\lambda\!)}\)};
\end{tikzpicture}
\caption{The figure on the left represents different types of HFD filtering outcomes and the trade-off between two issues. One can check that to induce more sharpening (filtering function from bottom to top), the model will suffer more from OSQ. The figure on the right illustrates the situation described in Theorem \ref{improvised_LFD}.}\label{trade_off}
\end{wrapfigure}

% For G-MHKG, since we have let filtering functions to be \textit{monotonically decreasing/increasing} and the resulting matrices still corresponding to the graph Laplacians,\GaoC{The sentence is not well organized} it is then not possible to make the summation of $\mathrm{diag}(\theta_1)\mathrm{e}^{f({\lambda})}+  \mathrm{diag}(\theta_2) \mathrm{e}^{g({\lambda})} = 0$\GaoC{Given the exponential function here, no matter whether $f$ or $g$ is decreasing or increasing, it is not possible to be zeros.  Secondly in this expression $e^{f(\lambda)}$ is a scalar, why using $\text{diag}(\theta)$. Shall we only consider each diagonal position?  Then why $\lambda$ is replaced $\Lambda$} when $\lambda =0$ unless we set $\theta_1 = \theta_2 = 0$ when $\mathrm{e}^{f({\boldsymbol{\Lambda}})} \neq \mathrm{e}^{g({\boldsymbol{\Lambda}})}$, or $\theta_1 = -\theta_2$ when $\mathrm{e}^{f({\boldsymbol{\Lambda}})} = \mathrm{e}^{g({\boldsymbol{\Lambda}})}$. Nonetheless, to meet the condition, we delay the occurrence of HFD by setting $\theta_1 = \theta_2 = 0$ when $\lambda =0$, and our result is summarized in the following theorem. 

\begin{thm}[Delayed HFD (D-HFD)]\label{thm:delayed_HFD}
    G-MHKG is capable of handling both two issues with HFD dynamic 
    % (with at least one filtering function \textit{monotonically increase} (i.e.,  
    and non-increasing over-squashing. Specifically, let $k$ be the number of connected components of $\mathcal G$ and $\theta_1$, $\theta_2 \geq 0$ then both two issues can be sufficiently handled by setting $\mathrm{diag}(\theta_1)_{i,i} = \mathrm{diag}(\theta_2)_{i,i}=0 \,\, i\in [1,k]$ and $\mathrm{diag}(\theta_1)\mathrm{e}^{f({\boldsymbol{\Lambda}}_{i,i})}+  \mathrm{diag}(\theta_2) \mathrm{e}^{g({\boldsymbol{\Lambda}}_{i,i})} < \boldsymbol{\Lambda}_{i,i} \,\, i\in [k+1,N]$. 
\end{thm}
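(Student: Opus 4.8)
The plan is to split the theorem into its two constituent claims — that the prescribed choice of $\theta_1,\theta_2$ renders G-MHKG \emph{high-frequency-dominant} (hence free of over-smoothing) while simultaneously keeping the over-squashing \emph{non-increasing} — and to verify each through the per-step spectral filter. First I would fix the eigenbasis $\{(\lambda_i,\mathbf u_i)\}_{i=1}^N$ of $\widehat{\mathbf L}$, ordered so that the $k$-fold zero eigenvalue (whose multiplicity equals the number of connected components $k$) occupies $i\in[1,k]$ and $0<\lambda_{k+1}\le\cdots\le\lambda_N=\rho_{\widehat{\mathbf L}}$. Writing one propagation step as $\mathbf H^{(\ell)}=\mathbf U\,\varphi(\boldsymbol{\Lambda})\,\mathbf U^\top\mathbf H^{(\ell-1)}\mathbf W^{(\ell-1)}$ with $\varphi(\lambda_i):=\mathrm{diag}(\theta_1)_{i,i}\mathrm{e}^{f(\lambda_i)}+\mathrm{diag}(\theta_2)_{i,i}\mathrm{e}^{g(\lambda_i)}$, the hypotheses become $\varphi(\lambda_i)=0$ for $i\in[1,k]$ and $0\le\varphi(\lambda_i)<\lambda_i$ for $i\in[k+1,N]$.

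For the HFD claim I would invoke Lemma~\ref{lemma_hfd_lfd}: it suffices to show that the normalized feature converges along a subsequence to an eigenvector of $\widehat{\mathbf L}$ with eigenvalue $\rho_{\widehat{\mathbf L}}$. Since $\mathbf W$ acts only on the channel dimension and does not affect L/HFD (cf.\ Theorem~\ref{thm:LFD/HFD} and the following remark), the node-space dynamics are governed by the repeated action of $\varphi(\boldsymbol{\Lambda})$. The choice $\varphi\equiv 0$ on the zero-eigenvalue block annihilates the smooth (constant-on-component) part of any input after a single step, removing precisely the direction that would otherwise drive LFD. Under the standing low/high-pass assumption on $f,g$ and the HFD calibration of $\theta_2$ from Theorem~\ref{thm:LFD/HFD}, $\varphi$ is increasing across the positive spectrum, so $\varphi(\rho_{\widehat{\mathbf L}})=\max_i\varphi(\lambda_i)>0$; repeated application therefore amplifies the $\lambda_N$-component fastest, and the normalized feature accumulates on the top eigenspace, i.e.\ $\widehat{\mathbf L}\mathbf H_\infty=\rho_{\widehat{\mathbf L}}\mathbf H_\infty$. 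This is exactly the HFD criterion, and the suppression of the zero block is the ``delay'' to which the theorem's name refers.

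For the non-increasing over-squashing claim I would appeal to Lemma~\ref{upper_bound} together with Remark~\ref{HFD_osm_osq}: over-squashing is inversely controlled by the entries of $\mathbf S=\widehat{\mathbf A}^l+\widehat{\mathbf A}^h$, and enlarging $\mathbf S$ lowers it. Because $\widehat{\mathbf A}^l,\widehat{\mathbf A}^h$ are built from $\mathbf I$ minus the filtered Laplacian, the eigenvalue-wise bound $\varphi(\lambda_i)\le\lambda_i$ (strict on the positive block, and the equality $0=0$ on the zero block) makes $\widehat{\mathbf L}-\mathbf U\varphi(\boldsymbol{\Lambda})\mathbf U^\top$ positive semidefinite and realizes the $\mathbf S$-enlargement condition of Remark~\ref{HFD_osm_osq}. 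Consequently $\mathbf S$ is enlarged, the mixing bound $\mathrm{w}^\ell(\mathbf S)^\ell_{v,u}$ is kept from collapsing as depth grows, and the over-squashing $\mathrm{OSQ}_{v,u}$ does not increase.

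The main obstacle is reconciling these two requirements precisely at the zero eigenvalue, which is where the trade-off of Lemma~\ref{lem:tradeoff} bites hardest. The over-squashing condition $\varphi(\lambda)<\lambda$ cannot hold at $\lambda=0$ for any positive $\theta_1,\theta_2$, since $\mathrm{e}^{f(0)},\mathrm{e}^{g(0)}>0$ force $\varphi(0)>0$, whereas $\lambda=0$ would require $\varphi(0)<0$; this is the exponential-positivity obstruction flagged before the theorem. The crux is therefore to show that the only escape — forcing $\varphi(0)=0$ by setting $\theta_1=\theta_2=0$ on the zero block — is \emph{harmless for HFD}: HFD is a statement about the \emph{largest} frequency dominating, and the zero-frequency direction is exactly the one we wish to suppress, so deleting it cannot destroy convergence to the top eigenspace. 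Carefully verifying this compatibility — that zeroing the smooth block leaves the positive-spectrum dynamics unchanged and still peaked at $\rho_{\widehat{\mathbf L}}$, while the translation of the spectral bound $\varphi(\lambda_i)<\lambda_i$ into the $\mathbf S$-enlargement of Remark~\ref{HFD_osm_osq} remains valid — is the delicate step; once it is in place, combining the two claims yields the stated D-HFD behavior.
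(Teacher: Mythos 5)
Your proposal takes essentially the same route as the paper's proof: the same eigenvalue-wise analysis in which the $k$-fold zero eigenvalue (one per connected component) forces $\theta_1=\theta_2=0$ on that block — since positivity of $\mathrm{e}^{f(0)},\mathrm{e}^{g(0)}$ would otherwise push the filtered value above $0$ and increase over-squashing by Lemma~\ref{upper_bound} — while the condition $\mathrm{diag}(\theta_1)\mathrm{e}^{f(\lambda)}+\mathrm{diag}(\theta_2)\mathrm{e}^{g(\lambda)}<\lambda$ on the positive spectrum keeps over-squashing non-increasing. You are in fact more complete than the paper's own (very terse) proof, which only verifies the over-squashing half and leaves the HFD half implicit; your explicit verification of HFD via Lemma~\ref{lemma_hfd_lfd} is just the Theorem~\ref{thm:LFD/HFD} argument restricted to the positive spectrum, and your spectral-domination step for enlarging $\mathbf S$ is made at the same level of rigor as the paper itself uses.
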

We included the detailed proof in Appendix \ref{append: proof_of_delayed_hfd} with additional discussions and clarifications.  Importantly from Theorem \ref{thm:delayed_HFD}, the reason that we consider zero eigenvalues of the graph is to sufficiently decrease the over-squashing, one shall require the filtered eigenvalues are not larger than the graph spectra. Therefore when eigenvalues are 0, the sufficient condition is to require the filtering results equal to 0 to maintain the over-squashing level. Figure.~\ref{trade_off} shows the comparison between different kinds of filtering functions in regard to the effect on over-smoothing and over-squashing. In addition, although Theorem \ref{thm:delayed_HFD} show how a D-HFD model handle two issues, this conclusion is more applicable to heterophily graphs according to Remark \ref{energy_homophily}. However, as we will show in the next theorem, it is not possible for the model to be LFD and decrease over-squashing. 
% \begin{rem}
%     Importantly from Theorem \ref{thm:delayed_HFD}, the advantage of deploying multi-scale GNN like G-MHKG is obvious as G-MHKG assigns the effect balanced from smoothing and sharpening. However, in single-scale GNN, once the model is HFD there will be only sharpening effect on the node feature causing limited functionality of GNN practically unless the graph is purely heterophily.
% \end{rem}
% \begin{wrapfigure}{r}{0.35\textwidth}
%   \begin{center}
% \scalebox{0.8}{
% \includegraphics[width = 0.45
%     \textwidth, height = 0.35\textwidth]{}}
%   \end{center}
%   \caption{Illustration of Theorem \ref{improvised_LFD}}\label{figure: different_LFDs}

% \end{wrapfigure}

\begin{thm}\label{improvised_LFD}
   Suppose $\theta_1$ and $\theta_2 \geq 0$, then it is impossible for the model to be LFD and decrease OSQ. In other words, there must exist at least one $\boldsymbol{\Lambda}^* \subseteq \boldsymbol{\Lambda}$ such that $\mathrm{diag}(\theta_1)\mathrm{e}^{f({\boldsymbol{\Lambda}^*})}+  \mathrm{diag}(\theta_2) \mathrm{e}^{g({\boldsymbol{\Lambda}}^*)}$ is (1): monotonically increasing (HFD); (2): with constant quantity (thus not LFD); (3): monotonically decreasing with image greater than $\boldsymbol{\Lambda}$ thus OSQ is increased. 
\end{thm}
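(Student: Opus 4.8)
The plan is to push both requirements through to the single combined spectral multiplier $h(\lambda) := \theta_1\,\mathrm{e}^{f(\lambda)} + \theta_2\,\mathrm{e}^{g(\lambda)}$ that G-MHKG applies to the component of the signal along each eigenvector $\mathbf u_i$, and then to expose an irreconcilable conflict at the zero frequency. First I would reformulate the two properties spectrally. By Lemma \ref{lemma_hfd_lfd} and the mechanism behind Theorem \ref{thm:LFD/HFD}, the model is LFD only if its limiting normalized feature lies in $\ker \widehat{\mathbf L}$, the $\lambda=0$ eigenspace; since iterating the propagation multiplies the coefficient along $\mathbf u_i$ by $h(\lambda_i)^{\ell}$, this forces $h$ to attain its strict dominant value at $\lambda = 0$, i.e. $h$ is monotonically decreasing with $h(0) > 0$. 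By Remark \ref{HFD_osm_osq} together with Lemma \ref{upper_bound}, decreasing over-squashing requires enlarging the entries of $\mathbf S$ relative to $\widehat{\mathbf L}$, which in the shared eigenbasis is the image condition $h(\lambda) \leq \lambda$ on the spectrum (strict somewhere), exactly as used in Theorem \ref{thm:delayed_HFD}.

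The core of the argument is a positivity observation at $\lambda = 0$. Because $\widehat{\mathbf L}$ is a normalized graph Laplacian, $\lambda = 0$ always lies in its spectrum (with multiplicity equal to the number of connected components). Since $\theta_1,\theta_2 \geq 0$ and the exponentials are strictly positive, $h(0) = \theta_1\,\mathrm{e}^{f(0)} + \theta_2\,\mathrm{e}^{g(0)} \geq 0$, with equality only when $\theta_1 = \theta_2 = 0$ at that index. The identity value there is $\lambda = 0$, so the OSQ condition $h(0) \leq 0$ can hold only if $h(0) = 0$, which in turn forces the filter to annihilate the low-frequency component entirely. But LFD demands precisely the opposite, $h(0) > 0$, so that the $\lambda = 0$ mode survives and dominates the iteration. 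These two demands are mutually exclusive: $h(0) > 0$ makes $h(0) > \lambda|_{\lambda=0}$, i.e. the image strictly exceeds the spectrum at the low-frequency end, so over-squashing is increased rather than decreased.

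It then remains to package this as the stated trichotomy on a sub-spectrum $\boldsymbol{\Lambda}^* \subseteq \boldsymbol{\Lambda}$. If $h$ is monotonically increasing, the dominant mode is the top frequency and the model is HFD, giving case (1) and in particular not LFD. If $h$ is constant, the normalized energy neither collapses to $0$ nor saturates at $\rho_{\widehat{\mathbf L}}/2$, so by Lemma \ref{lemma_hfd_lfd} the model is neither LFD nor HFD, giving case (2). The only way to be LFD is $h$ monotonically decreasing, and then the positivity step gives $h(0) > 0 = \lambda|_{\lambda = 0}$, so the image of $h$ exceeds $\boldsymbol{\Lambda}$ near the low-frequency end, which is case (3) and certifies increased, not decreased, OSQ. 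Since monotone $f,g$ make these three behaviours exhaustive, no choice of $\theta_1,\theta_2 \geq 0$ delivers LFD together with decreased over-squashing.

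I expect the main obstacle to be the two spectral reformulations in the first paragraph rather than the positivity contradiction. Theorem \ref{thm:LFD/HFD} only states a \emph{sufficient} condition for LFD at a particular $(\theta_1,\theta_2)$, so within the class $\theta_1,\theta_2 \geq 0$ I must argue, via Lemma \ref{lemma_hfd_lfd}, that genuine LFD is equivalent to $h$ having its strict maximum at $\lambda = 0$ (and hence $h(0)>0$); symmetrically, I must confirm that ``decrease OSQ'' is faithfully encoded by the pointwise condition $h(\lambda) \leq \lambda$ inherited from Remark \ref{HFD_osm_osq}. A secondary, lighter, point is to treat non-strict monotonicity and the degenerate all-zero filter carefully, so that the constant case is correctly assigned to (2) and the trichotomy is genuinely exhaustive.
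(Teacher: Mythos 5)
Your proposal is correct and follows essentially the same route as the paper's own proof: both reduce the two requirements to the combined nonnegative spectral multiplier $h(\lambda) = \theta_1\mathrm{e}^{f(\lambda)} + \theta_2\mathrm{e}^{g(\lambda)}$, encode "decrease OSQ" as $h(\lambda) \leq \lambda$ (forcing $h(0)=0$ at the zero eigenvalues since $h \geq 0$), and derive the contradiction with LFD at the low-frequency end — the paper phrases this as the forced existence of an increasing sub-spectrum, while you phrase it as the direct clash $h(0)=0$ versus $h(0)>0$, which is the same observation stated slightly more directly.
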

\noindent The proof of theorem is in Appendix \ref{append: proof_of_improvised_LFD}.  Figure.~\ref{trade_off} (right) illustrates the situation in Theorem \ref{improvised_LFD}. The model with the dynamic shown in Figure.~\ref{trade_off} (right) (i.e., first $k$ components being 0  + dashed HFD + LFD) is not a LFD and in fact asymptotically dominated by the HFD part, thereby according to Definition \ref{def_hfd_lfd} and Lemma \ref{lemma_hfd_lfd}, the model is not L/HFD.
\paragraph{Relation to existing works}
  Compared to the existing work \cite{giraldo2022understanding} that claimed both issues can not be alleviated simultaneously, our conclusion shows that once the model is HFD, it is possible to handle both issues. This suggests the effectiveness for incorporating multi-scale GNNs in terms of investigating the over-smoothing issue via model dynamics. Furthermore, although similar motivation for inducing sharpening effect to the node features was explored  \cite{choi2023gread}, its effect was only verified empirically. Our analysis under the scope of dominant dynamic and maximal mixing of the function learned from GNNs pave the path of evaluating well-known GNN issues in a unified platform. Moreover, recent studies also attempted to mitigate both issues via graph surgery by dropping highly positive cured edges which often lead to over-smoothing issue \cite{giraldo2022understanding} and rewiring the communities connected with very negatively curved edges which is responsible for the over-squashing issue \cite{topping2021understanding}. Although the relationship between the graph edge curvature and spectra was explored \cite{bauer2011ollivier}, a detailed comparison between spatial (curvature based surgery) and spectral method is still wanted, especially on how the spatial surgery affects the eigen-distribution of the graph spectra \cite{shi2023curvature}.

\section{Experiment}\label{sec:experiment}
In this section, we show a variety of numerical tests for MHKG and G-MHKG. Specifically, Section \ref{over-smoothing and HFD} verifies how a controlled model dynamic enhances model's adaption power on homophily and heterophily graphs. In Section \ref{over-squashing and random perturbation} we will compare model performance under four different dynamics to illustrate how D-HFD dynamic assists model in handling two issues. Furthermore, Section \ref{node_classification} will show the performance (node classification) of MHKG and G-MHKG via real-world citation networks as well as large-scale graph dataset. 
We include more discussions (i.e., computational complexity) and ablation studies in Appendix \ref{append: additional_experiment}. All experiments were conducted using PyTorch on 
% NVIDIA\textsuperscript{\textregistered}
Tesla V100 GPU with 5,120 CUDA cores and 16GB HBM2 mounted on an HPC cluster. The source code can be found in \url{https://anonymous.4open.science/r/G_MHKG_accept}.
% \EScomment{code here.}
% \GaoC{I suggest we shall publish our code at \url{https://anonymous.4open.science/}.} \EScomment{will do}
\paragraph{Experiment Setup}
For the settings in MHKG, we followed form of MHKG defined in Eq.~\eqref{model2_matrix_form} by setting $f(\boldsymbol{\Lambda}) = \mathbf U(\mathrm{e}^{\boldsymbol{\Lambda}})\mathbf U^\top$. We note that in general the form of $f$ can be any \textit{monotonic positive} function on $\boldsymbol{\Lambda}$, here we only choose the one in Eq.~\eqref{model2_matrix_form} for the consistency reason.
In regarding to G-MHKG, we assign one initial warm-up coefficient $\gamma>1$ that is multiplied with $f(\widehat{\mathbf L})$ if the graph is homophily otherwise on $g(\widehat{\mathbf L})$ if a graph is heterophily. We note that this operation aims to ensure G-MHKG to induce more smoothing/sharpening effect for fitting different types of graphs according to Remark \ref{energy_homophily}. We then re-scale the result of the filtering functions back to $[0,2]$ so that the assumption of Lemma \ref{upper_bound} holds. 
% Furthermore, it is worth noting that there are many other combinations of $f$ and $g$ with meaningful interpretations, 
% We have more detailed discussions in Appendix \ref{append: node_classification_node_classification}. 
We included \textbf{Cora, Citeseer, Pubmed} for homophily datasets and applied public split \cite{pei2020geom} and \textbf{Wisconsin, Texas, Cornell} as heterophily datasets with 60\% for training, 20\% for testing and validation. In addition, we also included one large-scale graph dataset \textbf{ogbn-arxiv} to illustrate model scalability for large scale datasets. The summary statistics of all included benchmarks as well as the model hyper-parameter search space are included in Appendix \ref{append: node_classification_node_classification}.  We set the maximum number of epochs of 200 for citation networks and 500 for \textbf{ogbn-arxiv}. The average test accuracy and its standard deviation come from 10 runs.

% \JScomment{Dataset of both homo and heterophiy can be found from \url{https://www.pyg.org/}}

\begin{wrapfigure}{!r}
{0.65\textwidth}
\centering
\includegraphics[width = 0.65
    \textwidth, height = 0.25\textwidth]{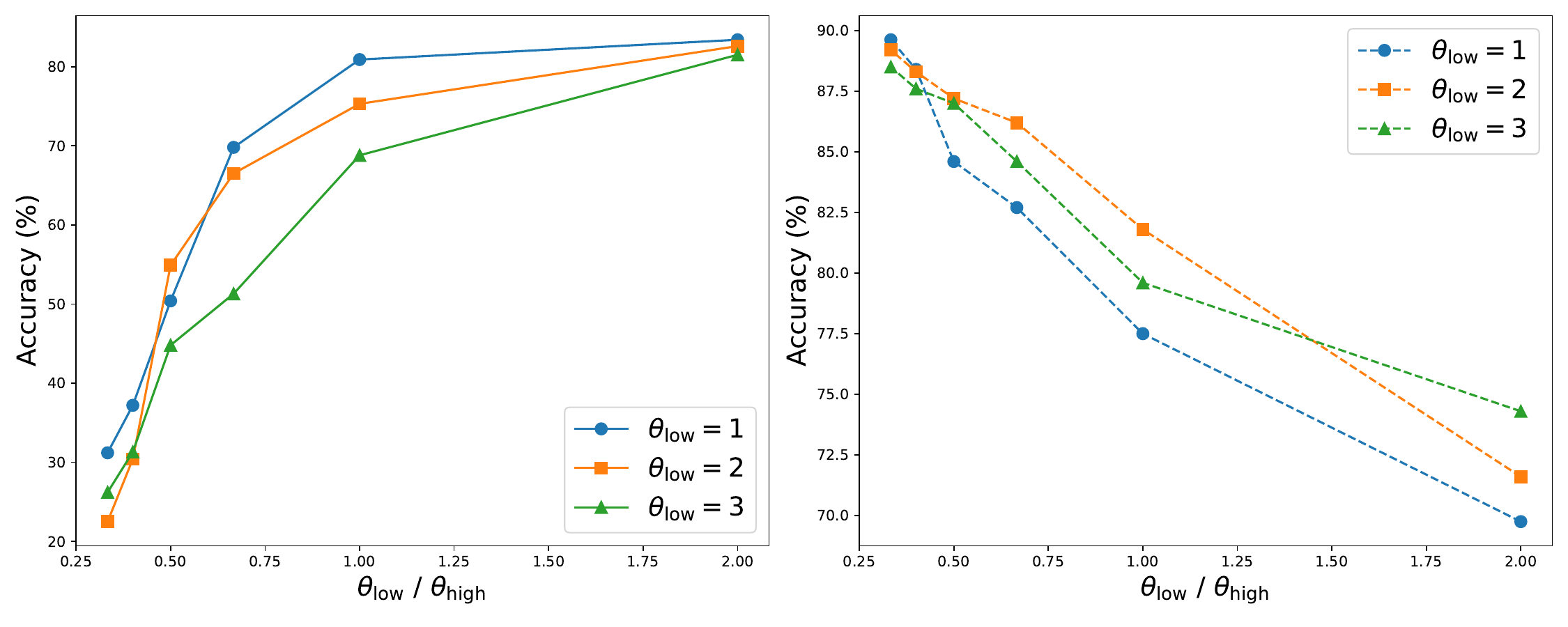}
  \caption{Model Accuracy($\%$) via different ratios between filtering matrices. Left: Accuracy on \textbf{Cora}, Right: Accuracy on \textbf{Texas}. }\label{fig:var_thetal}
\end{wrapfigure}

\subsection{Controlled Model Dynamic}\label{over-smoothing and HFD}
In this section, we verify Theorem \ref{thm:LFD/HFD} by assigning different quantities of $\zeta$ so that G-MHKG can be L/HFD. Specifically, we fixed $\theta_1 = \mathbf 1_N$, $\mathbf 2_N$, and $\mathbf 3_N$ and set the value of $\zeta$ from $0.5$ to $3$ with the unit of change as 0.5 so that model dynamics changed from HFD to LFD (i.e., $\zeta=0.5$). All other hyperparameters were fixed across the models. We conducted the experiment on \textbf{Cora} and \textbf{Texas}, and Fig.~\ref{fig:var_thetal} shows the changes in the learning accuracy on both datasets.  It is clear to see that with the increase of $\zeta = \theta_1 / \theta_2$, model's dynamics change from LFD to HFD, resulting in more adaption power from homophily to heterophily graphs.

\subsection{Handling both issues with D-HFD and ablation}\label{over-squashing and random perturbation}

\begin{figure}
    \centering
    \includegraphics[scale = 0.32]{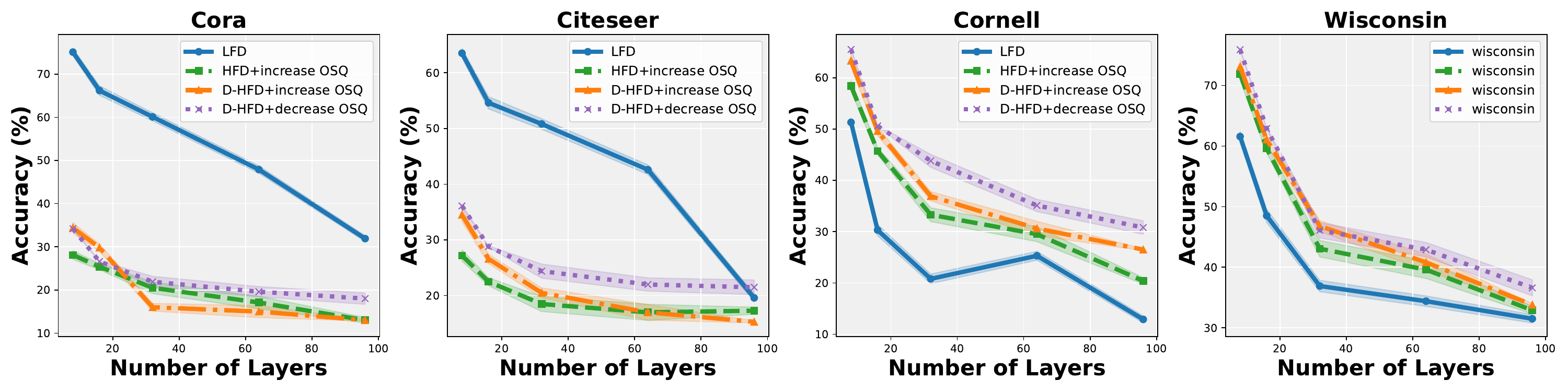}
    \caption{Results on model with different dynamics. Number of layers from 8, 16, 32, 64, and 96.}
    \label{fig:four_dynamics_comparison}
\end{figure}

In this section, we show how G-MHKG with D-HFD dynamic handles both
over-smoothing and over-squashing issues. Specifically, we compare G-MHKG's performance via the following four dynamics: LFD ($ \theta_1> \theta_2 > 0$),
% \footnote{We note that we scaled $\mathrm{diag}(\theta_1)_{i,i} >0$  to $\mathrm{diag}(\mathbf{0.5} \theta_1)_{i,i} $ for all included dynamics, which is sufficiently to ensure $\mathrm{diag}(\theta_1)\mathrm{e}^{f(\boldsymbol{\Lambda})} \leq \boldsymbol{\Lambda}$ so that with sufficient small/large of $\zeta$, model can be either LFD/HFD.}, 
HFD + increase in OSQ (sufficient large of $\theta_2$), D-HFD + increase in OSQ ($\theta_2$ sufficient large and first $k$ components of both $\theta_1$ and $\theta_2$ are 0) and D-HFD + decrease in OSQ (Theorem \ref{thm:delayed_HFD}). We include more details on the model setup for inducing the aforementioned dynamics in Appendix \ref{append:four_dynamcis_setup}. We select two homophily graphs (\textbf{Cora} with $k= 25$ and \textbf{Citeseer} $k = 115$) and heterophily graphs (\textbf{Cornell} $k =6$ and \textbf{Wisconsin} $k = 16$).
We tested G-MHKG with $8,16,32,64$ and $96$ layers to illustrate the asymptotic behavior of the model via different dynamics. We note that this experiment can also be served as an ablation study which proves the advantage of D-HFD for heterophily graphs. Fig.~\ref{fig:four_dynamics_comparison} illustrates the accuracy comparison between four dynamics. 
% We note that we also compare G-MHKG with D-HFD dynamic with multiple baselines which were claimed to be able to handle over-smoothing or over-squashing problems in Appendix \ref{append: DHFD_additional_experiment}.
One can see from Fig.~\ref{fig:four_dynamics_comparison} that for homophily graphs, LFD (\textcolor{blue}{blue}) models always present the top performances compared to other three dynamics, followed by the dynamic of D-HFD + decreasing OSQ (\textcolor{violet}{violet}) as such dynamic assists model to handle over-squashing issue. The last two ranks are achieved by D-HFD + increase OSQ (\textcolor{orange}{orange}) and HFD + increase OSQ (\textcolor{green}{green}) with the former dynamic not increase OSQ with the first $k$ eigenvalues and latter
% \guohlfoot{with the first only not increase OSQ with first $k$ eigenvalues}{Doesn't read well. } 
suffering from both issues. For heterophily graphs, with the best accuracy achieved via D-HFD + decreased OSQ across all layers, and the worst outcome in LFD which is unnecessary for homophily graphs. 

\subsection{Results of Node classification on real-world data}\label{node_classification}
We include the introduction of the baseline models and the reason of choosing them in Appendix \ref{append: node_classification_node_classification}. We design MHKG and G-MHKG with two convolution layers followed by softmax activation function. The results of node classification are included in Table \ref{table:node_classification}, and all baseline results are listed according to the existing publications. One can check that G-MHKG show remarkable performance on both homophily and heterophily graphs. 
% Furthermore, the performance of G-MHKG is significantly over the result of SJLR \cite{giraldo2022understanding} which provides graph spatial rewiring approach to mitigate both issues. This observation further supports our claim in Remark \ref{rem_compared_to_existing}.
\begin{table}[H]
\centering
\caption{Performance on node classification using public split. Top two in \textbf{bold}.}
\label{table:node_classification}
\setlength{\tabcolsep}{10.5pt}
\renewcommand{\arraystretch}{1}
\scalebox{0.8}{
\begin{tabular}{clllllll}
\hline
Methods    & \multicolumn{1}{c}{Cora} & \multicolumn{1}{c}{Citeseer} & \multicolumn{1}{c}{Pubmed} & Cornell & Texas & Wisconsin & Arxiv \\ \hline
MLP       &55.1                 &59.1                             &71.4                            &\textbf{91.3\(\pm\)0.7}         &\textbf{92.3\(\pm\)0.7}       &\textbf{91.8\(\pm\)3.1}          &55.0\(\pm\)0.3   \\ 
GCN       &81.5\(\pm\)0.5                  &70.9\(\pm\)0.5                              &79.0\(\pm\)0.3                            &66.5\(\pm\)13.8         &75.7\(\pm\)1.0       &66.7\(\pm\)1.4           &\textbf{72.7\(\pm\)0.3}        \\
GAT        &83.0\(\pm\)0.7 &\textbf{72.0\(\pm\)0.7}                              &78.5\(\pm\)0.3                            &76.0\(\pm\)1.0         &78.8\(\pm\)0.9       &71.0\(\pm\)4.6           &72.0\(\pm\)0.5      \\
GIN  &78.6\(\pm\)1.2                         &71.4\(\pm\)1.1                              &76.9\(\pm\)0.6                            &78.0\(\pm\)1.9         &74.6\(\pm\)0.8       &72.9\(\pm\)2.5           &64.5\(\pm\)2.5       \\

HKGCN  &81.9\(\pm\)0.9                         &72.4\(\pm\)0.4                              &79.9\(\pm\)0.3                            &74.2\(\pm\)2.1         &82.4\(\pm\)0.7       &85.5\(\pm\)2.7           &69.6\(\pm\)1.7        \\
GRAND      &82.9\(\pm\)1.4                          &70.8\(\pm\)1.1                              &79.2\(\pm\)1.5                            &72.2\(\pm\)3.1         &80.2\(\pm\)1.5       &86.4\(\pm\)2.7           &71.2\(\pm\)0.2       \\
UFG      &\textbf{83.3\(\pm\)0.5}                          &71.0\(\pm\)0.6                              &79.4\(\pm\)0.4                            &83.2\(\pm\)0.3         &82.3\(\pm\)0.9       &\textbf{91.9\(\pm\)2.1}           &\textbf{72.6\(\pm\)0.1}       \\ 
SJLR      &81.3\(\pm\)0.5                     &70.6\(\pm\)0.4                              &78.0\(\pm\)0.3                            &71.9\(\pm\)1.9         &80.1\(\pm\)0.9       &66.9\(\pm\)2.1           &72.0\(\pm\)0.4       \\ 
\hline
MHKG       &82.8\(\pm\)0.2                        &71.6\(\pm\)0.1                              &78.9\(\pm\)0.3                            &86.2\(\pm\)0.6         &84.5\(\pm\)0.3       &88.9\(\pm\)0.3          & 72.1\(\pm\)0.6       \\
G-MHKG     &\textbf{83.5\(\pm\)0.2}                        &\textbf{72.8\(\pm\)0.2 }                             &\textbf{80.1\(\pm\)0.4}                            &\textbf{90.2\(\pm\)0.9}         &\textbf{89.6\(\pm\)0.6}       &91.2\(\pm\)1.5           &72.4\(\pm\)0.3       \\ \hline
\end{tabular}}
\end{table}

\subsection{Discussion on Limitation and ways forward}\label{limitations}
\paragraph{Limitation on L/HFD, why there is an accuracy drop?}
We found that there is a considerable accuracy drop between the results 
in Table.~\ref{table:node_classification} and model with four different dynamics and different layers. This observation suggests that although L/HFD are proved to be more suitable for homo/heterophily graphs, under large number of layers, i.e., GNNs propagates feature information from large hops, the measure of homophily level \cite{zhu2021graph} become powerless, since such level varies through hops and thus requiring GNNs to induce a layer-wise rather than overall dynamics when the layer number is high. This observation aligns with the motivation of the work in \cite{li2018deeper}.
% Furthermore, although it is mild, in this paper we only consider those filtering functions to be \textit{monotonic}. There might be some adjustments to our conclusions especially on the trade-off if one drops or narrows the assumptions on filtering functions. 

% For example, narrowing monotonic conditions to periodic could lead to more general conclusions and a flexible design of filtering functions.  
% \paragraph{Forms of filtering functions}
% Although it is mild, in this paper we only consider those filtering functions to be \textit{monotonic}. There might be some adjustments to our conclusions especially on the trade-off if one drops or narrows the assumptions on filtering functions. For example, narrowing monotonic conditions to periodic could lead to more general conclusions and a flexible design of filtering functions. 
% Furthermore, although we have verified the unique advantages of both MHKG and G-MHKG in terms of handling over-smoothing and over-squashing issues, those issues in general, arise when the layer number is large, leading to an exploration of how powerful these advantages are when GNN layer number is limited \cite{wu2022non} and what is the corresponding way of adjusting the form of filtering functions.  
\paragraph{The measure of OSQ}
Lemma \ref{upper_bound} establishes an upper bound for the node sensitivity measure on the issue of over-squashing. While Lemma \ref{upper_bound} offers a necessary condition, it does not guarantee that G-MHKG possesses the desired mixing power. Hence, a complementary lower bound that provides a sufficient condition becomes essential. Thereby in scenarios where both over-smoothing and over-squashing need to be considered, an adjusted conclusion is sought after. 

% In addition, from Lemma \ref{upper_bound}, we concluded that any \textit{graph rewiring} that makes the graph denser can mitigate over-squashing \cite{topping2021understanding}. However, as we have illustrated its trade-off to over-smoothing, a simple yet effective graph rewiring that considers both two issues is a topic for future work. 

\section{Conclusion}\label{sec:conclusion}
In this paper, we explored the underlying relationship between three fundamental issues of GNNs: over-smoothing, over-squashing, and expressive power via proposed G-MHKG induced by reversing the time direction of so-called graph heat equation. We revealed the roles of the filtering and weight matrices via gradient flow and maximal mixing perspectives to illustrate their capability of controlling aforementioned issues. Furthermore, we show that under mild conditions on the filtering equations in G-MHKG, there is a fundamental trade-off between over-smoothing and over-squashing via graph spectral domain. We further showed that our proposed model is capable of handling both issues and own its advantage in terms of mixing smoothing and sharpening effects compared to single-scale GNNs. While we have shown superior performance of G-MHKG empirically, many unknown issues we have listed still inspire us to explore further. In future works, we will attempt to discover the necessary conditions (such as lower bound of $\mathbf S$ and more complex model dynamic other than L/HFD) for one GNN that is capable of handling all mentioned issues.

\bibliography{reference}
\bibliographystyle{plain}

% Some aspects that we can explore for our model is: 

% 1. Over-smoothing issue can be assessed through both random walk and energy dynamic perspective. Furthermore, as this model is kind of simple, we can try to conduct a non-asymptotic analysis on the quantifying the value of $T$.

% 2. For the model proposed in \cite{xu2018graph}, to fit the heterophily graph (i.e., with a HFD), one additional channel mixing matrix (i.e., $\mathbf W$) is necessary, and $\mathbf W$ must contain negative eigenvalue. 

% 2. Heterophily adaption. 

% 3. Expressive power. Clearly the model is not universal for distinguishing all non-isomorphic graphs; however it has the expressive power upper bound as 1-WL. \EScomment{We could think about a way to boost the expressive power of (6) to universal.}

\clearpage
\appendix

\section{Related works}\label{append: related_works}

\subsection{Heat operator and heat kernel}\label{append: heat_kernel}
In this section, we show a more detailed introduction of heat operator and heat kernel. Let $\mathcal M$ be a compact Riemannian manifold possibly with boundary. The so-called heat diffusion process on the manifold is denoted by the heat equation: $\Delta_\mathcal M u(x,t) = - \frac{\partial u(x,t)}{\partial t}$, where $\Delta_\mathcal M$ is known as Laplace-Beltrami operator of $\mathcal M$. Given the initial heat distribution $q: \mathcal M \rightarrow \mathbb R$, one can let $ \mathcal S^{(t)}(q)$ satisfies the heat equation for all $t$ and $\mathrm{Lim}_{t\rightarrow 0} \mathcal S^{(t)}(q) = q$. Then $\mathcal S^{(t)}$ is called the heat operator. One can check that both $\Delta_\mathcal M$ and $\mathcal S^{(t)}$ maps a real function defined on $\mathcal M$ to another function, and the transaction between them can be denoted as $\mathcal S^{(t)} = \mathrm{e}^{-t\Delta_\mathcal M}$. Therefore, both two operators share the same eigen-functions, and if $\lambda_i$ is the $i$-th eigenvalue of $\Delta_\mathcal M$ then $\mathrm{e}^{-t\lambda_i}$ is the $i$-th eigenvalue of $\mathcal S^{(t)}$. Based on the work in \cite{hsu2002stochastic,sun2009concise}, for any $\mathcal M$, there exists a function $k^{(t)}(x,y): \mathbb R^+ \times \mathcal M \times \mathcal M \rightarrow \mathbb R$ such that: 
\begin{align}\label{kxy_function}
    \mathcal S^{(t)}(q(x)) = \int_\mathcal M k^{(t)}(x,y)q(y)dy,
\end{align}
where $dy$ is the volume form at $y \in \mathcal M$. The minimum function $k^{(t)}(x,y)$ that satisfies Eq.~\eqref{kxy_function} is called the heat kernel which can be interpreted as the amount of heat that transacts from the point $x$ to $y$ during the time period $t$. For any given continuous time period, the heat kernel shows a continuous smoothing diffusion process of the temperature on the manifold.  Now we analogize this notion to the discrete graph domain where we replace the $\Delta_\mathcal M$ to the graph (normalized) Laplacian $\widehat{\mathbf L}$, and for any $t>0$ the heat kernel of $\mathcal G$ can be served as the fundamental solution of the graph heat equation:
\begin{align}\label{graph_heat_equation}
    \frac{\partial \mathbf H^{(t)}}{\partial t} = - \widehat{\mathbf L}\mathbf H^{(t)},
\end{align}
in which we denote the $\mathbf H^{(t)}$ as the feature representation at a specific iteration time $t$ of GNN with $\mathbf H^{(0)} = \mathbf X$. The solution of the graph heat equation, %at time $t$, 
denoted as ${\mathbf H}^{(t)}$, is given by
${\mathbf H}^{(t)} = \mathrm{e}^{-t\widehat{\mathbf L}}\mathbf{H}^{(0)}$. It is well-known that the application of Euler discretization leads to the propagation of the linear GCN models \cite{kipf2016semi,wu2019simplifying} and this process is with the name of Laplacian smoothing \cite{chung1997spectral}. The characteristic of the solution is indeed governed by the so-called Heat Kernel, denoted by ${\mathbf K}_t = \mathrm{e}^{-t\widehat{\mathbf L}}$. The heat kernel defines a continuous-time random walk, and defines a semi-group, i.e., $\mathbf{H}^{(t+s)} =\mathbf{H}^{(t)}\cdot \mathbf{H}^{(s)}$ for any $t,s\geq 0$, and $\lim_{t\rightarrow \infty} \mathbf{H}^{(t)} = \mathbf{I}$  \cite{chung1997spectral}. This fact indicates non-distinguishment for the nodes with same degrees, known as over-smoothing.

\subsection{Multi-scale GNNs and Graph Framelets}
The notion of so-called multi-scale GNNs can be traced back to the application of graph wavelet analysis \cite{crovella2003graph} for traffic forecasting. The work \cite{maggioni2008diffusion} applied the polynomials of a differential operator to build multi-scale transforms. The spectral graph wavelet transforms \cite{hammond2011wavelets}  define the graph spectrum from a graph’s
Laplacian matrix, where the scaling function is approximated by the Chebyshev polynomials. 
\cite{Dong2017} approximated piece-wise smooth functions with undecimated tight wavelet frames. Fast decomposition and reconstruction become possible with the filtered Chebyshev polynomial approximation and proper design of filter banks. Meanwhile, as an extension of the wavelet analysis, fast tight framelet filter bank transforms on quadrature-based framelets are explored on graph domain \cite{wang2019tight,zheng2022decimated}. Based on these constructive works, multi-scale GNNs have shown superior performances in terms of both node and graph level classification tasks. Specifically, \cite{zheng2021framelets} formally deployed fast and tight framelet decomposition and reconstruction on graph, resulting as a general form of multi-scale graph framelet convolution. Motivated by the idea of adapting framelet onto direct graphs, \cite{zou2022simple} proposed SVD based graph framelets,  followed by \cite{yang2022quasi} who further introduced quasi-framelet so that more flexible filtering functions can be utilized. In fact, one can consider the learning process of multi-scale GNNs (including graph framelet) as learning multiple kernels that conditional to the feature aggregation. Thereby including more graph related information (i.e., curvature \cite{shi2023curvature}) to the propagation often leads to a better learning outcomes. 
% \EScomment{Any other topics on related work can be added? i.e., gradient flow, over-squashing and expressive power?}
\subsection{Current study in Understanding on Both Issues}
Unlike the over-smoothing issue, which has been established for years \cite{li2018deeper}, the over-squashing issue, however, was only identified and analyzed recently \cite{topping2021understanding,alon2020bottleneck}. Specifically, the majority of the GNN researches quantified the phenomenon of over-smoothing through the measures of variation of the node features (i.e., Dirichlet energy or its variants) generated from the propagation of GNNs. The connection between over-squashing and so-called negative Ricci curvature has been established recently \cite{topping2021understanding}. Accordingly, these two different paths of investigating two issues lead GNNs towards different form of propagations, causing lack of unified manner of exploring both problems. The recent research that consider both problems can be found in \cite{giraldo2022understanding} in which a curvature based rewiring method is deployed to mitigate both issues, and claimed that both issue can not be solved simultaneously, which paved the path of investigating both issues in an unified platform.

% \section{More detailed discussion on the kernel filtering function relation}\label{append: kernel_filtering relation}

\section{List of GNNs as special form of the proposed model}\label{append:special_cases_of_model}
In this section, we list the relationship between popular GNNs and G-MHKG. Recall that the propagation of G-MHKG is: 
\begin{align}
     \mathbf H^{(\ell)} & = \mathbf U \mathrm{diag}(\theta_1)  \mathrm{e}^{f(\boldsymbol{\Lambda})} \mathbf U^\top \mathbf H^{(\ell-1)}\mathbf W^{(\ell-1)} + \mathbf U \mathrm{diag}(\theta_2) \mathrm{e}^{g(\boldsymbol{\Lambda})} \mathbf U^\top \mathbf H^{(\ell-1)}\mathbf W^{(\ell-1)}.
\end{align}
It is not hard to verify that, by setting either $\theta_1$ or $\theta_2 = 0$, we recover the GRAND \cite{chamberlain2021grand} with one additional channel mixing matrix $\mathbf W$. Letting $g(\boldsymbol{\Lambda}) = 0$, $\theta_2 = \mathbf c_N$, where $\mathbf c_N$ is a $N$-dimensional vector with all $c$ as an arbitrary constant, we have the model equivalent to GRAND++ \cite{thorpe2021grand++} with a layer dependent source term that is $\widetilde{\mathbf H} = \mathbf U \mathrm{diag}(\theta_2) \mathbf U^\top \mathbf H \mathbf W$. Furthermore, if one represents G-MHKG via spatial message passing followed by Lemma \ref{upper_bound} i.e., $\mathbf H^{(\ell)} = \mathbf S \mathbf H^{(\ell-1)} \mathbf W^{(\ell-1)}$, where $\mathbf S = \widehat{\mathbf A}^l + \widehat{\mathbf A}^h$ and $\widehat{\mathbf A}$ are the corresponding adjacency matrix. Then one can obtain the reweighting matrix $\boldsymbol{\xi} = \mathbf S \oslash \widehat{\mathbf A}$ and in this case, G-MHKG is same as those GNNs with adjacency reweighting schemes such as GAT \cite{velivckovic2018graph} in single-scale and Curvature framelet GCN \cite{shi2023curvature} as multi-scale. \\[0.05in]

% Lastly, let $\theta_1 = \theta_2= \mathbf 1_N$ and $\mathrm{e}^{f(\boldsymbol{\Lambda})} + \mathrm{e}^{g(\boldsymbol{\Lambda})} = \mathbf 1_N$ as well, we recover the graph framelet model \cite{zheng2022decimated}. \\[0.2in]
% \GaoC{This argument is at a very high level. Reviewers may not like it.  If we want to say a special A can be defined as B, we shall clearly write out what special $f$ abd $g$ here can restore GRAND, GRAND++ and Framelets.}
\noindent More precisely, let $g(\boldsymbol{\Lambda)} = 0$, the propagation of G-MHKG becomes
\begin{align}
    \mathbf H^{(\ell)} & = \mathbf U \mathrm{diag}(\theta_1)  \mathrm{e}^{f(\boldsymbol{\Lambda})} \mathbf U^\top \mathbf H^{(\ell-1)}\mathbf W^{(\ell-1)} + \mathbf U \mathrm{diag}(\theta_2) \mathbf U^\top \mathbf H^{(\ell-1)}\mathbf W^{(\ell-1)} \notag \\
    & = \mathbf U \mathrm{diag}(\theta_1)  \mathrm{e}^{f(\boldsymbol{\Lambda})} \mathbf U^\top \mathbf H^{(\ell-1)}\mathbf W^{(\ell-1)} + \widetilde{\mathbf H}^{(\ell -1)},
\end{align}
in which the first term can be treated as the special graph neural diffusion model and the last term is a layer dependent node feature, indicating a combination of the diffusion and a source term, yielding a special form of GRAND++. Additionally, let $\mathbf S = \sum \widehat{\mathbf A}^l +  \widehat{\mathbf A}^h$, where $\widehat{\mathbf A}^l = \mathbf I -\mathbf U \boldsymbol{\Lambda}_1 \mathbf U^\top$ and $\widehat{\mathbf A}^h= \mathbf I -\mathbf U \boldsymbol{\Lambda}_2 \mathbf U^\top$ stand for the (weighted) adjacency matrices generated from the low pass and high pass filtering functions (i.e., $f(\cdot)$, $g(\cot)$), respectively with $\boldsymbol{\Lambda}_1 = f(\boldsymbol{\Lambda}) = \mathrm{diag}(\{\mathrm{e}^{f(\lambda_i)}\}_{i=1}^N)$ and $\boldsymbol{\Lambda}_2 = g(\boldsymbol{\Lambda})= \mathrm{diag}(\{\mathrm{e}^{g(\lambda_i)}\}_{i=1}^N)$. Then one can show that the adjacency information ($\mathbf S = \sum \widehat{\mathbf A}^l +  \widehat{\mathbf A}^h$)  that propagated in G-MHKG is indeed the sum of two weighted adjacency matrices while preserving the graph connectivity. Therefore, with $\boldsymbol{\xi} = \mathbf S \oslash \widehat{\mathbf A}$, one can show that G-MHKG is in fact align with GNNs enhanced with reweighting scheme (i.e., GAT and curvature framelet GCN).

\section{Formal proofs}\label{append: formal_proofs}
\subsection{Proof of Theorem \ref{thm:LFD/HFD}.}\label{theroem_1_proof}
In this section, we show the proof of Theorem \ref{thm:LFD/HFD} which indicates that G-MHKG can induce both energy dynamics under different settings of the filtering matrices $\mathrm{diag}(\theta_1)$ and  $\mathrm{diag}(\theta_2)$.   
\begin{thm}[Repeat of Theorem \ref{thm:LFD/HFD}]
      % Let $f(\cdot)$ and $g(\cdot)$ to be \textit{monotonically} decrease/increase functions. 
      % \GaoC{The wording is not correct}. 
      G-MHKG can induce both LFD and HFD dynamics. Specifically, let $\theta_1 = \mathbf 1_N$ and $\theta_2 = \zeta \mathbf 1_N$ with a positive constant $\zeta$. %Let $\zeta >0$ 
      Then, with sufficient large $\zeta$ ($\zeta >1$) so that $\mathrm{e}^{f(\boldsymbol{\Lambda})}+\zeta \mathrm{e}^{g(\boldsymbol{\Lambda})}$ is \textit{monotonically} increasing on spectral domain, G-MHKG is HFD. Similarly, if $0<\zeta <1$ and is sufficient small such that  $\mathrm{e}^{f(\boldsymbol{\Lambda})}+\zeta \mathrm{e}^{g(\boldsymbol{\Lambda})}$ is \textit{monotonically} decreasing, the model is LFD.
\end{thm}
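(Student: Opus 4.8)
The plan is to collapse the two-branch update into a single spectral filter and then run a power-iteration argument, invoking the eigenvector characterization of Lemma~\ref{lemma_hfd_lfd} rather than tracking the exact energy limit. Substituting $\theta_1=\mathbf 1_N$ and $\theta_2=\zeta\mathbf 1_N$ into Eq.~\eqref{generalized_final} merges the two terms into one filter,
\[
\mathbf H^{(\ell)} = \mathbf U\,\boldsymbol\Phi\,\mathbf U^\top \mathbf H^{(\ell-1)}\mathbf W^{(\ell-1)}, \qquad \boldsymbol\Phi := \mathrm{e}^{f(\boldsymbol\Lambda)}+\zeta\,\mathrm{e}^{g(\boldsymbol\Lambda)}.
\]
Since the filter $\mathbf U\boldsymbol\Phi\mathbf U^\top$ always multiplies on the left (node dimension) while each $\mathbf W^{(\ell-1)}$ multiplies on the right (channel dimension), they commute across layers and the iteration unrolls to $\mathbf H^{(\ell)}=\mathbf U\boldsymbol\Phi^\ell\mathbf U^\top\mathbf H^{(0)}\mathbf P_\ell$ with $\mathbf P_\ell:=\prod_{k}\mathbf W^{(k)}$. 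Expanding in the eigenbasis $\{\mathbf u_i\}$ of $\widehat{\mathbf L}$ gives $\mathbf H^{(\ell)}=\sum_{i=1}^N \phi_i^{\ell}\,\mathbf u_i\,\mathbf b_i^{(\ell)\top}$, where $\phi_i:=\boldsymbol\Phi(\lambda_i)=\mathrm e^{f(\lambda_i)}+\zeta\,\mathrm e^{g(\lambda_i)}>0$ and $\mathbf b_i^{(\ell)\top}:=\mathbf u_i^\top\mathbf H^{(0)}\mathbf P_\ell$. Note that taking $\zeta>1$ amplifies the high-pass branch $\mathrm e^{g}$ (increasing, since $g$ is a high-pass filter), which is exactly what renders $\boldsymbol\Phi$ monotonically increasing as assumed, while small $\zeta$ lets the low-pass branch dominate.

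Next I would use orthonormality of the $\mathbf u_i$ to obtain the closed forms
\[
\|\mathbf H^{(\ell)}\|_F^2=\sum_{i}\phi_i^{2\ell}\|\mathbf b_i^{(\ell)}\|^2, \qquad \mathbf E\!\left(\tfrac{\mathbf H^{(\ell)}}{\|\mathbf H^{(\ell)}\|_F}\right)=\frac{\sum_i \phi_i^{2\ell}\lambda_i\|\mathbf b_i^{(\ell)}\|^2}{\sum_i \phi_i^{2\ell}\|\mathbf b_i^{(\ell)}\|^2}.
\]
Because all $\phi_i>0$, the asymptotics are governed by the largest $\phi_i$. When $\boldsymbol\Phi$ is monotonically increasing, $\phi_N=\max_i\phi_i$ is attained at the top frequency $\lambda_N=\rho_{\widehat{\mathbf L}}$; dividing numerator and denominator by $\phi_N^{2\ell}\|\mathbf b_N^{(\ell)}\|^2$ and using $(\phi_i/\phi_N)^{2\ell}\to 0$ for every $\lambda_i<\rho_{\widehat{\mathbf L}}$ shows that $\mathbf H^{(\ell)}/\|\mathbf H^{(\ell)}\|_F$ concentrates on the top eigenspace, so any subsequential limit $\mathbf H_\infty$ satisfies $\widehat{\mathbf L}\mathbf H_\infty=\rho_{\widehat{\mathbf L}}\mathbf H_\infty$, i.e. HFD by Lemma~\ref{lemma_hfd_lfd}. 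Symmetrically, when $\boldsymbol\Phi$ is monotonically decreasing the maximiser is $\phi_1$ at $\lambda_1=0$, the normalized feature aligns with $\ker\widehat{\mathbf L}$, and the model is LFD. Routing through the eigenvector characterization conveniently sidesteps the normalization constant in Definition~\ref{def_hfd_lfd}.

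The main obstacle is the weight factor $\mathbf P_\ell$, which I must show cannot overturn the spectral dominance — this is precisely the claim in the subsequent remark that L/HFD is decided by the filter alone, independent of $\mathbf W$. The key point is that $\mathbf P_\ell$ enters the energy ratio only through the quantities $\|\mathbf b_i^{(\ell)}\|/\|\mathbf b_N^{(\ell)}\|$ (resp. $/\|\mathbf b_1^{(\ell)}\|$), because channel mixing acts on the feature dimension and cannot transport spectral mass between frequencies. I would bound $\|\mathbf b_i^{(\ell)}\|\le \|\mathbf W\|^\ell\,\|\mathbf u_i^\top\mathbf H^{(0)}\|$ from above, and bound $\|\mathbf b_N^{(\ell)}\|$ from below along a suitable subsequence under the mild genericity condition that the dominant-frequency Fourier coefficient $\mathbf u_N^\top\mathbf H^{(0)}$ (resp. $\mathbf u_1^\top\mathbf H^{(0)}$) is not annihilated by the repeated action of $\mathbf P_\ell$. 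These ratios then grow at most geometrically in the singular-value gap of $\mathbf W$, whereas the filter contributes the strictly contracting factor $(\phi_i/\phi_N)^{2\ell}$ with $\phi_i/\phi_N<1$; so the products $(\phi_i/\phi_N)^{2\ell}(\|\mathbf b_i^{(\ell)}\|/\|\mathbf b_N^{(\ell)}\|)^2\to 0$, and the subsequence extraction in Lemma~\ref{lemma_hfd_lfd} absorbs any oscillation of $\mathbf b_N^{(\ell)}$ caused by complex eigenvalues of $\mathbf W$. Verifying this geometric-decay-beats-geometric-growth comparison rigorously is the step I expect to require the most care.
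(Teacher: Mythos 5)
Your overall skeleton --- collapsing the two branches into the single filter $\boldsymbol\Phi = \mathrm{e}^{f(\boldsymbol\Lambda)}+\zeta\,\mathrm{e}^{g(\boldsymbol\Lambda)}$, expanding in the graph eigenbasis, and concluding through the eigenvector characterization of Lemma~\ref{lemma_hfd_lfd} --- is the same dominant-mode (power-iteration) argument the paper runs; the paper just executes it after vectorizing, in the Kronecker eigenbasis $\boldsymbol\phi_k^W\otimes\mathbf u_i$ of $\mathbf W\otimes(\mathcal W_l+\zeta\mathcal W_h)$, with a \emph{fixed} weight matrix across layers. Your closed forms for $\|\mathbf H^{(\ell)}\|_F^2$ and for the Rayleigh quotient of the normalized features are correct, and routing through Lemma~\ref{lemma_hfd_lfd} rather than the energy limit is legitimate. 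The gap sits exactly where you predicted: the step ruling out that the weights overturn the filter's dominance.

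The inference ``the ratios $\|\mathbf b_i^{(\ell)}\|/\|\mathbf b_N^{(\ell)}\|$ grow at most geometrically, the filter contributes $(\phi_i/\phi_N)^{2\ell}$, hence the products tend to $0$'' is not valid: a product of a geometric decay and a geometric growth tends to zero only if the decay \emph{rate} beats the growth rate, and nothing in the theorem's hypotheses relates the filter gap $\phi_N/\phi_i$ to the spectral gap of the weights. Worse, at the generality you allow ($\mathbf P_\ell=\prod_k\mathbf W^{(k)}$ with layer-dependent weights), the claim you are trying to prove is false. Take $c=2$ channels, let $\mathbf W^{(0)}$ rotate $\mathbf u_N^\top\mathbf H^{(0)}$ onto $(1,0)$ and $\mathbf u_i^\top\mathbf H^{(0)}$ onto $(0,1)$, and let every later layer be $\mathrm{diag}(a,b)$ with $b/a>\phi_N/\phi_i$. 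Then no Fourier coefficient is ever annihilated, yet $\|\mathbf b_i^{(\ell)}\|/\|\mathbf b_N^{(\ell)}\|=(b/a)^{\ell-1}$ grows fast enough that frequency $i$, not $\rho_{\widehat{\mathbf L}}$, dominates, so the model is not HFD. The repair is precisely the restriction the paper imposes: fix one diagonalizable $\mathbf W$ for all layers and expand $\mathbf b_i^{(\ell)}=\mathbf u_i^\top\mathbf H^{(0)}\mathbf W^{\ell}$ in the eigenbasis of $\mathbf W$, giving $\phi_i^{2\ell}\|\mathbf b_i^{(\ell)}\|^2=\sum_k c_{k,i}^2\,(\phi_i\lambda_k^W)^{2\ell}$. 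Now the \emph{same} dominant weight eigenvalue $\lambda_{k^*}^W$ multiplies every frequency, so it cancels in all ratios; under the genericity condition $c_{k^*,N}(0)\neq 0$ the ratios $\|\mathbf b_i^{(\ell)}\|/\|\mathbf b_N^{(\ell)}\|$ are \emph{bounded by constants} (there is no growth to beat), and the filter factor $(\phi_i/\phi_N)^{2\ell}\to 0$ finishes the proof. This automatic pairing of the dominant weight mode with every frequency is exactly what the paper's Kronecker-product formulation buys, and it is the ingredient your norm-bound version of the argument is missing.
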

% \JScomment{in Theorem \ref{thm:LFD/HFD}, either  $\zeta >1$ or  $1> \zeta >0$, in above statement, "otherwise the model is LFD", do we include the case of $\zeta=1$ }
% \GaoC{Do we really need conditions on $f$ and $g$ separately?  I think the monotonicy of  $\mathrm{e}^{f(\boldsymbol{\Lambda})}+\zeta \mathrm{e}^{g(\boldsymbol{\Lambda})}$ is enough? }
\begin{proof}
As we aim to prove the existence of the filtering matrices for L/HFD. Without loss of generality, we show the example with Theorem \ref{thm:LFD/HFD} holds. Recall that G-MHKG has the propagation rule as:
\begin{align}
     \mathbf H^{(\ell)} & = \mathbf U  \mathrm{diag}(\theta_1)\mathrm{e}^{f(\boldsymbol{\Lambda})} \mathbf U^\top\mathbf H^{(\ell-1)}\mathbf W^{(\ell-1)} + \mathbf U  \mathrm{diag}(\theta_2)\mathrm{e}^{g(\boldsymbol{\Lambda})} \mathbf U^\top\mathbf H^{(\ell-1)}\mathbf W^{(\ell-1)} \notag  \\ 
     & = \mathcal W_l \mathbf H^{(\ell-1)}\mathbf W^{(\ell-1)}+\mathcal W_h \mathbf H^{(\ell-1)}\mathbf W^{(\ell-1)} \label{G-MHKG-temporal}
\end{align}
where we let $\mathcal W_l =\mathbf U  \mathrm{diag}(\theta_1)\mathrm{e}^{f(\boldsymbol{\Lambda})} \mathbf U^\top $ and $\mathcal W_h = \mathbf U  \mathrm{diag}(\theta_2)\mathrm{e}^{g(\boldsymbol{\Lambda})} \mathbf U^\top $, respectively. Plugging in the setting of $\theta_1$ and $\theta_2$, we have $\mathcal W_l = \mathbf U  \mathrm{e}^{f(\boldsymbol{\Lambda})} \mathbf U^\top$ and $\mathcal W_h = \mathbf U  \zeta \mathrm{e}^{g(\boldsymbol{\Lambda})} \mathbf U^\top$, respectively. Furthermore, as in G-MHKG, every step we applied the same operation on the feature from the last step, one can let $t = m\tau$ and $\tau = 1$ for $m$ time propagation and each propagation takes the step as 1. Together with the conditions on $\theta_1$ and $\theta_2$, Eq.~\eqref{G-MHKG-temporal} can be further expressed as: 
\begin{align}
    \mathrm{vec}(\mathbf H^{(m\tau)}) &= \tau^m \left(\mathbf W\otimes (\mathcal W_\ell + \zeta \mathcal W_h) \right)^m\mathrm{vec}(\mathbf H(0)) \notag \\
    & = \tau^m \sum_{i,k} \left(\lambda_k^W\left(\mathrm{e}^{f(\lambda_i)}+ \zeta \mathrm{e}^{g(\lambda_i)}\right)\right)^m \mathrm{c}_{k,i}(0) \boldsymbol{\phi}_k^W\otimes \mathbf u_i, \label{G_MHKG_dymaic}
\end{align}
where we let $\{ (\lambda_k^W, \boldsymbol{\phi}_k^W) \}_{k = 1}^c$ as the eigen-pairs of $\mathbf W$ and denote $\mathrm {c}_{k,i}(0) = \langle  \mathrm{vec}(\mathbf H(0)),  \boldsymbol{\phi}_k^W \otimes \mathbf u_i \rangle$. It is worth noting that, without loss of generality, we set $\mathbf W^{(\ell -1)} = \mathbf W^{(\ell -2)}= \cdots = \mathbf W$, suggesting a fixed weight matrix for all layers. 

We now show that according to different quantity of  $\zeta$, the dynamic in Eq.~\eqref{G_MHKG_dymaic} can be L/HFD. Then with sufficient large of $\zeta$ so that $\mathrm{e}^{f(\widehat{\mathbf L})}+\zeta \mathrm{e}^{g(\widehat{\mathbf L})}$ 
% \GaoC{You have assumed $\theta_1 = \mathbf{1}_N$ why keep using $\theta_1$ etc.} 
is a \textit{monotonically increase}, one can find that the \textit{unique} maximal image of $|\lambda_k^W\left(\mathrm{e}^{f(\lambda_i)}+ \zeta \mathrm{e}^{g(\lambda_i)}\right)|$ at frequency $\rho_{\widehat{\mathbf L}}$. Furthermore, denote $\delta_{\mathrm{HFD}}: = \lambda_k^W\left(\mathrm{e}^{f(\rho_{\widehat{\mathbf L}})}+ \zeta \mathrm{e}^{g(\rho_{\widehat{\mathbf L}})}\right)$ and $\zeta > 1$ sufficient large such that for all $i$ where $\lambda_i \neq \rho_{\widehat{\mathbf L}}$, $|\lambda_k^W\left(\mathrm{e}^{f(\lambda_i)}+ \zeta \mathrm{e}^{g(\lambda_i)}\right)| < \delta_{\mathrm{HFD}}$ holds. Then the dynamic is HFD and the dominant frequency is $\rho_{\widehat{\mathbf L}}$. One can also verify the model can be LFD by utilizing the same approach. 

More precisely, based on the Definition \ref{def_hfd_lfd} and Lemma \ref{lemma_hfd_lfd}, let $\delta := \max_{i: \lambda_i \neq \rho_L} |\lambda_k^W\left(\mathrm{e}^{f(\lambda_i)}+ \zeta \mathrm{e}^{g(\lambda_i)}\right)|$, also denote $\mathbf P_\rho = \sum_{k} (\boldsymbol \phi_{k}^W \otimes \mathbf u_\rho)(\boldsymbol \phi_{k}^W \otimes \mathbf u_\rho)^\top$ where $\mathbf u_\rho$ is the eigenvector of $\widehat{\mathbf L}$ associated with eigenvalue $\rho_{\widehat{\mathbf L}}$ (assuming the eigenvalue $\rho_{\widehat{\mathbf L}}$ is simple). Then we can decompose Eq.~\eqref{G_MHKG_dymaic} as
\begin{align*}
    &\vec\big( \mathbf H(m\tau) \big) = \tau^m \sum_{k } \delta_{\rm HFD}^m  \mathrm{c}_{k, \rho_L}(0) \boldsymbol \phi_k \otimes \mathbf u_\rho  \!\! +\!\! \tau^m \sum_{k} \sum_{i : \lambda_i \neq \rho_L} \Big( \big( \lambda^{W}_k (  \mathrm{e}^{f(\lambda_i)}
    +  \zeta (\mathrm{e}^{g(\lambda_i)}) \big)  \Big)^m \mathrm c_{k, i}(0) \boldsymbol{\phi}_k \otimes \mathbf u_i \\
    &\leq \tau^m \delta_{\rm HFD}^m (  \mathbf P_\rho \vec \big( \mathbf H^{(0)} \big) + \!\!\!
    \sum_{k} \!\!\! \sum_{i : \lambda_i \neq \rho_L} \!\!\! \left( \frac{ \delta }{\delta_{\rm HFD}} \right)^m \mathrm c_{k, i}(0) \boldsymbol{\phi}_k \otimes \mathbf u_i ,
\end{align*}
% First, when $\lambda = 0$, we have $\left|\lambda_k^W\left(\mathrm{e}^{f(\lambda_i)}+ \zeta \mathrm{e}^{g(\lambda_i)}\right)\right| = $
where $\delta < \delta_{\rm HFD}$.
By normalizing the results, we obtain $\frac{\vec\big( \mathbf H(m\tau) \big)}{\|  \vec\big( \mathbf H(m\tau) \big)\|} \xrightarrow{} \frac{\mathbf P_\rho (\vec(\mathbf H(0)))}{\| \mathbf P_\rho \vec (\mathbf H(0)) \|}$, as $m \xrightarrow{} \infty$, where the latter is a unit vector $\mathbf h_{\infty}$ satisfying $(\mathbf I_c \otimes \widehat{\mathbf L}) \mathbf h_{\infty} = \rho_{\widehat{\mathbf L}} \mathbf h_{\infty}$. This suggests the dynamic is HFD according to Definition \ref{def_hfd_lfd} and Lemma \ref{lemma_hfd_lfd}.
\end{proof}
\begin{rem}
    The key of proving Theorem \ref{thm:LFD/HFD} is to evaluate the relative importance of the two terms in the sum filtering functions $\left(\mathrm{e}^{f(\boldsymbol{\Lambda)}}+ \zeta \mathrm{e}^{g(\boldsymbol{\Lambda)}}\right)$. It is worth noting that if one go specific to the monotonicity of $f(\cdot)$ and $g(\cdot)$, if both $f(\cdot)$ and $g(\cdot)$ are \textit{monotonically increase}, then regardless of the choice of $\zeta$ the model is always HFD. When $f(\cdot)$ is monotonically decrease and $g(\cdot)$ is monotonically increase, with sufficient large of $\zeta$, the model remains HFD. Last, if both $f(\cdot)$ and $g(\cdot)$ are monotonically decrease, then regardless of the quantity of $\zeta$ the model is always LFD.
\end{rem}

% \subsection{Proof of Proposition \ref{Model_as_mpnn}} \label{proposition_1_proof}

\subsection{Proof of Lemma \ref{upper_bound}}\label{append:lemma2_proof}
In this section, we prove the upper bound of the maximal mixing of G-MHKG included in Lemma \ref{upper_bound}:

\begin{lem}[Repeat of Lemma \ref{upper_bound}]
% Let $\mathcal Q(\mathbf X) = \mathbf H^{(\ell)}$, $\mathbf U \boldsymbol{\Lambda}_1 \mathbf U^\top = \widehat{\mathbf{L}}^l$ and $\mathbf U \boldsymbol{\Lambda}_2 \mathbf U^\top = \widehat{\mathbf{L}}^h$,
%    % {Are these assumption or just notation?} 
%    suggesting that both filtering results still refer to SPD (positive semi-definite) matrices.
%    % {What does this mean?} 
%    % and further let the corresponding adjacency matrices be $\widehat{\mathbf A}^l$ and $\widehat{\mathbf A}^h$, respectively. 
%    Further let $\|\mathbf W^{(\ell-1)}\| \leq \mathrm{w}$ and $\mathbf S =  \widehat{\mathbf A}^l +  \widehat{\mathbf A}^h$. Given $u,v \in \mathcal V$, with $\ell$ layers, then the following holds:  
%     \begin{align}
%         \left \|\frac{\partial \mathbf h_v^{(\ell)}}{\partial \mathbf x_u} \right \| \leq  \mathrm{w}^\ell \left(\mathbf S\right)^\ell_{v,u},
%     \end{align}
%     where $\|\mathbf W^{(\ell-1)}\|$ is the spectral norm of $\mathbf W^{(\ell-1)}$.
Let $\mathcal Q(\mathbf X) = \mathbf H^{(\ell)}$,  
    $\|\mathbf W^{(\ell-1)}\| \leq \mathrm{w}$ and $\mathbf S =  \widehat{\mathbf A}^l +  \widehat{\mathbf A}^h$. Given $u,v \in \mathcal V$, with $\ell$ layers, then the following holds:  
    \begin{align}
        \left \|\frac{\partial \mathbf h_v^{(\ell)}}{\partial \mathbf x_u} \right \| \leq  \mathrm{w}^\ell \left(\mathbf S\right)^\ell_{v,u},
    \end{align}
    where $\|\cdot\|$ is the spectral norm of the matrix.
\end{lem}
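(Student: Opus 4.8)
The plan is to exploit the fact that, under the stated simplification $\theta_1 = \theta_2 = \mathbf 1_N$, the two-branch propagation collapses to a single spatial message-passing recursion. Summing the two terms of Eq.~\eqref{generalized_final} gives $\mathbf H^{(\ell)} = \mathbf U\big(\mathrm{e}^{f(\boldsymbol{\Lambda})} + \mathrm{e}^{g(\boldsymbol{\Lambda})}\big)\mathbf U^\top \mathbf H^{(\ell-1)}\mathbf W^{(\ell-1)}$, and since $\widehat{\mathbf A}^l$ and $\widehat{\mathbf A}^h$ are precisely the spatial operators realizing the two spectral filters, this is exactly $\mathbf H^{(\ell)} = \mathbf S\,\mathbf H^{(\ell-1)}\mathbf W^{(\ell-1)}$ with $\mathbf S = \widehat{\mathbf A}^l + \widehat{\mathbf A}^h$. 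First I would record this reduction, so that the remainder of the argument concerns only the linear recursion and never needs to see the spectral decomposition again.

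Next I would prove the bound by induction on the number of layers $\ell$. Reading off row $v$ of the recursion gives $(\mathbf h_v^{(\ell)})^\top = \sum_{w\in\mathcal V}\mathbf S_{v,w}(\mathbf W^{(\ell-1)})^\top (\mathbf h_w^{(\ell-1)})^\top$, so differentiating in $\mathbf x_u$ and applying the chain rule yields the Jacobian recursion $\frac{\partial \mathbf h_v^{(\ell)}}{\partial \mathbf x_u} = \sum_{w} \mathbf S_{v,w}(\mathbf W^{(\ell-1)})^\top \frac{\partial \mathbf h_w^{(\ell-1)}}{\partial \mathbf x_u}$. The base case $\ell = 0$ is immediate, since $\mathbf h_v^{(0)} = \mathbf x_v$ forces $\partial \mathbf h_v^{(0)}/\partial \mathbf x_u = \delta_{vu}\mathbf I = (\mathbf S^0)_{v,u}\mathbf I$, whose norm equals $\mathrm w^0 (\mathbf S^0)_{v,u}$.

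For the inductive step I would apply the triangle inequality and submultiplicativity of the spectral norm to obtain $\big\|\partial \mathbf h_v^{(\ell)}/\partial \mathbf x_u\big\| \le \sum_w |\mathbf S_{v,w}|\,\|\mathbf W^{(\ell-1)}\|\,\big\|\partial \mathbf h_w^{(\ell-1)}/\partial \mathbf x_u\big\|$, then insert the hypothesis $\|\mathbf W^{(\ell-1)}\|\le \mathrm w$ together with the induction hypothesis $\big\|\partial \mathbf h_w^{(\ell-1)}/\partial \mathbf x_u\big\| \le \mathrm w^{\ell-1}(\mathbf S^{\ell-1})_{w,u}$. This leaves $\mathrm w^\ell \sum_w |\mathbf S_{v,w}|(\mathbf S^{\ell-1})_{w,u}$, and the final move is to recognize $\sum_w \mathbf S_{v,w}(\mathbf S^{\ell-1})_{w,u} = (\mathbf S^\ell)_{v,u}$, closing the induction.

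The hard part will be the transition from $|\mathbf S_{v,w}|$ back to $\mathbf S_{v,w}$: the closed form $(\mathbf S^\ell)_{v,u}$ coincides with the bound only when $\mathbf S$ is entrywise nonnegative, so that no sign cancellation occurs in the matrix product and the absolute values may be dropped. I would therefore make explicit that the argument relies on $\mathbf S \ge 0$ entrywise, which is exactly why the experimental setup rescales the filtered eigenvalues into $[0,2]$ to keep $\widehat{\mathbf A}^l$, $\widehat{\mathbf A}^h$, and hence $\mathbf S$, nonnegative. I would also remark that if a pointwise nonlinearity with Lipschitz constant $L_\sigma$ were inserted between layers, the same induction goes through verbatim with an extra factor $L_\sigma^\ell$ multiplying the bound.
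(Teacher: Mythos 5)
Your proposal is correct and follows essentially the same route as the paper's proof: both collapse the two-branch update (with $\theta_1=\theta_2=\mathbf 1_N$) to the spatial recursion $\mathbf H^{(\ell)} = \mathbf S\,\mathbf H^{(\ell-1)}\mathbf W^{(\ell-1)}$ and then bound the Jacobian by the chain rule together with submultiplicativity of the spectral norm, your layer-by-layer induction being just a repackaging of the paper's explicit unrolling of the recursion down to layer $0$. Your added remark that the identification of the bound with $\mathrm{w}^\ell(\mathbf S^\ell)_{v,u}$ requires $\mathbf S$ to be entrywise nonnegative is a condition the paper uses implicitly (and supports via the rescaling of the filtered spectrum), so flagging it explicitly is a sound clarification rather than a deviation.
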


\begin{proof}
    We note that the proof the conclusion for single-scale GNN (i.e. $\mathbf S = \widehat{\mathbf A}$) is done in \cite{shi2022expressive}. Here we directly generalize the proof to multi-scale case for self-completeness. 
    First recall that G-MHKG is with the propagation rule as: 
    \begin{align}
     \mathbf H^{(\ell)} &=  \mathbf U  \mathrm{diag}(\theta_1)\mathrm{e}^{f(\boldsymbol{\Lambda})} \mathbf U^\top\mathbf H^{(\ell-1)}\mathbf W^{(\ell-1)} + \mathbf U  \mathrm{diag}(\theta_2)\mathrm{e}^{g(\boldsymbol{\Lambda})} \mathbf U^\top\mathbf H^{(\ell-1)}\mathbf W^{(\ell-1)} \notag \\
     % & = \mathrm{diag}(\theta_1) \widehat{\mathbf L}^l \mathbf H^{(\ell-1)}\mathbf W^{(\ell-1)} + \mathrm{diag}(\theta_2) \widehat{\mathbf L}^h \mathbf H^{(\ell-1)}\mathbf W^{(\ell-1)} \notag \\
     & = \mathbf U  \left(\mathrm{diag}(\theta_1)\mathrm{e}^{f(\boldsymbol{\Lambda})} +\mathrm{diag}(\theta_2)\mathrm{e}^{g(\boldsymbol{\Lambda})} \right)\mathbf U^\top \mathbf H^{(\ell-1)}\mathbf W^{(\ell-1)} \notag \\
     & =  \widehat{\mathbf L}^* \mathbf H^{(m-1)}\mathbf W^{(m-1)}. \label{G_MHKG_final}
    \end{align}
where $\mathrm{diag}(\theta_1) \widehat{\mathbf L}^l$ and $\mathrm{diag}(\theta_2) \widehat{\mathbf L}^h$ stand for the spectral filtering process on the corresponding Laplacian induced from the filtering functions. Based on the relationship between spatial and spectral GNNs, one can further denote the propagation in Eq.~\eqref{G_MHKG_final} with a spatial based form as: $\mathbf H^{(\ell)} = \mathbf S \mathbf H^{(\ell-1)} \mathbf W^{(\ell-1)}$, where $\mathbf S = \widehat{\mathbf A}^l +  \widehat{\mathbf A}^h$. Now we see that $\mathbf h_v^{(\ell)} = (\mathbf W^{(\ell-1)})^\top (\mathbf H^{(\ell-1)})^\top \mathbf s_v = \sum_{i=1}^N s_{vi} (\mathbf W^{(\ell-1)})^\top \mathbf h_i^{(\ell-1)}$, where we let $\mathbf s_i^\top$ be the $i$-th row of matrix $\mathbf S$ and $s_{ij}$ be the $i,j$-th entry of $\mathbf S$. Then by chain rule, we obtain
\begin{align*}
    &\left\| \frac{\partial \mathbf h_v^{(\ell)}}{\partial \mathbf x_u} \right\|= \left\|{\rm diag} \Big( ( \mathbf W^{(\ell-1)})^\top (\mathbf H^{(\ell-1)})^\top \mathbf a_v ) \Big) \odot\ \Big( \sum_{i_\ell = 1}^n\!\! s_{v i_{\ell-1}} (\mathbf W^{(\ell-1)})^\top \frac{\partial \mathbf h_{i_{\ell-1}}^{(\ell-1)}}{\partial \mathbf x_u} \Big) \right\| \\
    &\leq  \left\| \sum_{i_{\ell-1} = 1}^N s_{v i_{\ell-1}} (\mathbf W^{(\ell-1)})^\top \frac{\partial \mathbf h_{i_{\ell-1}}^{(\ell-1)}}{\partial \mathbf x_u}  \right\| \notag \\
    &\leq \left\| \sum_{i_{m-1}, i_{m-2}, ..., i_0} s_{v i_{m-1}} s_{i_{\ell-1} i_{\ell - 2}} \cdots s_{i_1 i_0} (\mathbf W^{(\ell-1)})^\top(\mathbf W^{(\ell-2)})^\top \cdots  (\mathbf W^{(0)})^\top \frac{\partial \mathbf h_{i_0}^{(0)}}{\partial \mathbf x_u} \right\| \\
    &= \Big( \sum_{i_{\ell-1}, i_{\ell-2}, ..., i_1} s_{v i_{\ell-1}} s_{i_{\ell-1} i_{\ell - 2}} \cdots s_{i_1 u}  \Big)  \left\| (\mathbf W^{(\ell-1)})^\top  (\mathbf W^{(\ell-2)})^\top \cdots  (\mathbf W^{(0)})^\top \right\|\\
    &\leq  \mathrm{w}^\ell \left(\mathbf S\right)^\ell_{v,u} 
\end{align*}
where we apply the second inequality recursively to obtain the third inequality. 
\end{proof}

\subsection{Proof of Lemma \ref{lem:tradeoff}}\label{append: proof_of_trade_off_lemma}
In this section we show the proof of the trade-off Lemma ( Lemma \ref{lem:tradeoff}) between over-smoothing and over-squashing. It is worth noting that we still set $\theta_2 = \theta_1 = \mathbf 1_N$.
Although with the same meaning as Lemma \ref{lem:tradeoff}, below we first show the full version of Lemma \ref{lem:tradeoff} in the following. 
\begin{lem}[Trade-off]
    For two G-MHKGs namely G-MHKG(1) and G-MHKG(2), if G-MHKG(1) has lower over-squashing than G-MHKG(2) i.e., $\mathrm{e}^{f_1(\boldsymbol{\Lambda})} + \mathrm{e}^{g_1(\boldsymbol{\Lambda})} < \mathrm{e}^{f_2(\boldsymbol{\Lambda})} + \mathrm{e}^{g_2(\boldsymbol{\Lambda})}$, where $f_1, g_1$ and $f_2,g_2$ are filtering functions of two models, respectively, then on any iteration, i.e., from $\mathbf H^{(\ell-1)}$ to $\mathbf H^{(\ell)}$. With the same initial node feature $\mathbf H^{(\ell-1)}$, we have the following inequality in terms of Dirichlet energy of two models \[\mathbf E(\mathbf H^{(\ell)})_1 < \mathbf E(\mathbf H^{(\ell)})_2.\] In words,  more feature smoothing effect is induced from lower over-squashing model G-MHKG(1).
\end{lem}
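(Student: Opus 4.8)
The plan is to reduce the Dirichlet energy after a single propagation step to a weighted sum over graph frequencies, so that the hypothesis on the filtering functions translates directly into an energy comparison. First I would specialize the G-MHKG update with $\theta_1 = \theta_2 = \mathbf 1_N$ to the compact form $\mathbf H^{(\ell)} = \mathbf U \boldsymbol{\Phi} \mathbf U^\top \mathbf H^{(\ell-1)} \mathbf W$, where $\boldsymbol{\Phi} = \mathrm{e}^{f(\boldsymbol{\Lambda})} + \mathrm{e}^{g(\boldsymbol{\Lambda})}$ is a diagonal matrix with strictly positive entries $\phi_i = \mathrm{e}^{f(\lambda_i)} + \mathrm{e}^{g(\lambda_i)}$, and $\mathbf W$ abbreviates the shared weight matrix $\mathbf W^{(\ell-1)}$.

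Next I would substitute this expression into $\mathbf E(\mathbf H^{(\ell)}) = \mathrm{Tr}\big((\mathbf H^{(\ell)})^\top \widehat{\mathbf L} \mathbf H^{(\ell)}\big)$ and exploit $\widehat{\mathbf L} = \mathbf U \boldsymbol{\Lambda} \mathbf U^\top$ together with the orthogonality $\mathbf U^\top \mathbf U = \mathbf I$. Because $\boldsymbol{\Phi}$ and $\boldsymbol{\Lambda}$ are simultaneously diagonal, the central product collapses to $\boldsymbol{\Phi}\boldsymbol{\Lambda}\boldsymbol{\Phi} = \boldsymbol{\Lambda}\boldsymbol{\Phi}^2$, and writing $\widetilde{\mathbf H} = \mathbf U^\top \mathbf H^{(\ell-1)}$ with rows $\widetilde{\mathbf h}_i$ (the graph Fourier coefficients at frequency $\lambda_i$) yields the closed form
\[
\mathbf E(\mathbf H^{(\ell)}) = \sum_{i=1}^N \lambda_i \, \phi_i^2 \, \big\| \mathbf W^\top \widetilde{\mathbf h}_i \big\|^2 ,
\]
using $\mathrm{Tr}(\mathbf W^\top \widetilde{\mathbf h}_i \widetilde{\mathbf h}_i^\top \mathbf W) = \|\mathbf W^\top \widetilde{\mathbf h}_i\|^2$. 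This isolates the entire dependence on the filtering functions into the scalar factors $\phi_i^2$, while the input features and the weight matrix enter only through the common, model-independent quantities $\lambda_i$ and $\|\mathbf W^\top \widetilde{\mathbf h}_i\|^2$.

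The comparison then becomes a term-by-term argument. Since G-MHKG(1) and G-MHKG(2) share $\mathbf H^{(\ell-1)}$ and $\mathbf W$, they share every $\widetilde{\mathbf h}_i$ and every $\lambda_i$, so the only difference between the two summands is $\phi_i^{(1)}$ versus $\phi_i^{(2)}$. The hypothesis $\mathrm{e}^{f_1(\boldsymbol{\Lambda})} + \mathrm{e}^{g_1(\boldsymbol{\Lambda})} < \mathrm{e}^{f_2(\boldsymbol{\Lambda})} + \mathrm{e}^{g_2(\boldsymbol{\Lambda})}$ is precisely $0 < \phi_i^{(1)} < \phi_i^{(2)}$ for every $i$, hence $(\phi_i^{(1)})^2 < (\phi_i^{(2)})^2$; combined with $\lambda_i \geq 0$ and $\|\mathbf W^\top \widetilde{\mathbf h}_i\|^2 \geq 0$, each summand for model 1 is dominated by the corresponding summand for model 2, and summing gives $\mathbf E(\mathbf H^{(\ell)})_1 \leq \mathbf E(\mathbf H^{(\ell)})_2$.

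The main obstacle is upgrading this to the \emph{strict} inequality claimed in the statement. Frequencies with $\lambda_i = 0$ (the null space of $\widehat{\mathbf L}$) contribute nothing to either energy, and a frequency with $\lambda_i > 0$ contributes strictly only when its Fourier coefficient survives the channel mixing, i.e. $\|\mathbf W^\top \widetilde{\mathbf h}_i\| \neq 0$. Strictness therefore rests on a mild non-degeneracy condition: the shared input $\mathbf H^{(\ell-1)}$ must carry some high-frequency content not annihilated by $\mathbf W^\top$. I would state this explicitly (it holds generically, and in particular whenever $\mathbf E(\mathbf H^{(\ell)})_2 > 0$), after which the strict term-wise domination at any such frequency immediately propagates through the sum to give $\mathbf E(\mathbf H^{(\ell)})_1 < \mathbf E(\mathbf H^{(\ell)})_2$. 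Interpreting this alongside Lemma \ref{upper_bound} and Remark \ref{HFD_osm_osq}, the smaller filtering sum that lowers over-squashing is exactly what lowers the energy, which is the asserted trade-off.
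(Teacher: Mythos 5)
Your proposal is correct, and it reaches the conclusion by a genuinely different and more rigorous route than the paper. The paper's proof stays partly in the spatial domain: it first assumes (via an activation) that all feature entries are non-negative, translates the hypothesis into the entrywise inequality $\bigl(\mathbf U (\mathrm{e}^{f_1(\boldsymbol{\Lambda})}+\mathrm{e}^{g_1(\boldsymbol{\Lambda})}) \mathbf U^\top\bigr)_{i,j} < \bigl(\mathbf U (\mathrm{e}^{f_2(\boldsymbol{\Lambda})}+\mathrm{e}^{g_2(\boldsymbol{\Lambda})}) \mathbf U^\top\bigr)_{i,j}$ inherited from Lemma \ref{upper_bound}, reads off the diagonal comparison $(\boldsymbol{\Lambda}^*_1)_{i,i} < (\boldsymbol{\Lambda}^*_2)_{i,i}$, and then asserts that the energy inequality is ``easy to verify'' from linearity and the definition of $\mathbf E$ --- the final, crucial step is never written out. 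You instead work entirely in the spectral domain and make that step explicit: the closed form $\mathbf E(\mathbf H^{(\ell)}) = \sum_i \lambda_i \phi_i^2 \|\mathbf W^\top \widetilde{\mathbf h}_i\|^2$ (which I checked; the algebra with $\boldsymbol{\Phi}\boldsymbol{\Lambda}\boldsymbol{\Phi} = \boldsymbol{\Lambda}\boldsymbol{\Phi}^2$ and the trace identity is right) isolates the models' difference into the scalars $\phi_i^2$, after which the comparison is term-by-term. This buys three things: no non-negativity assumption on the features is needed; the hand-waved final step becomes a one-line domination argument; and, most importantly, you expose a genuine flaw the paper glosses over --- the claimed \emph{strict} inequality fails in degenerate cases (e.g., when $\mathbf H^{(\ell-1)}$ lies in the kernel of $\widehat{\mathbf L}$ after channel mixing, both energies are zero), so your explicit non-degeneracy condition ($\mathbf E(\mathbf H^{(\ell)})_2 > 0$, i.e., some positive-frequency Fourier content survives $\mathbf W^\top$) is exactly what is needed to make the lemma true as stated.
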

\begin{proof}
    First, without loss of generality, we further let all entries of the feature representation be non-negative since this can be easily achieved by applying commonly used activation function of each layer of G-MHKG, we omit here for simplicity reason.  
    If G-MHKG(1) is with lower over-squashing than G-MHKG(2) than based on Lemma \ref{upper_bound}, it is sufficiently to have:
    \begin{align}\label{eigen_comparsion}
    \left(\mathbf U \left(\mathrm{e}^{f_1({\boldsymbol{\Lambda}})}+   \mathrm{e}^{g_1({\boldsymbol{\Lambda}})}\right) \mathbf  U^\top \right)_{i,j} < \left( \mathbf U \left( \mathrm{e}^{f_2({\boldsymbol{\Lambda}})}+  \mathrm{e}^{g_2({\boldsymbol{\Lambda}})} \right) \mathbf  U^\top \right)_{i,j} \, \forall i,j.
    \end{align}
    Let $\boldsymbol{\Lambda}^*_1 = \mathrm{e}^{f_1({\boldsymbol{\Lambda}})}+   \mathrm{e}^{g_1({\boldsymbol{\Lambda}})}$ and $\boldsymbol{\Lambda}^*_2 = \mathrm{e}^{f_2({\boldsymbol{\Lambda}})}+   \mathrm{e}^{g_2({\boldsymbol{\Lambda}})}$ be the diagonal matrices with entries of the filtered eigenvalues of two models, respectively. According to Eq.~\eqref{eigen_comparsion}, we have $(\boldsymbol{\Lambda}^*_1)_{i,i} < (\boldsymbol{\Lambda}^*_2)_{i,i} \, \forall i \in [1,N]$.
    Since G-MHKG is linear i.e., every iteration the model assigns same propagation rule to the node features, for any fixed graph $\mathcal G$ it is easy to verify $\mathbf E(\mathbf H)_1 < \mathbf E(\mathbf H)_2$ according to the definition of Dirichlet energy (i.e., $\mathbf E(\mathbf H) = \mathrm{Tr}(\mathbf H^\top \widehat{\mathbf L} \mathbf H)$).
\end{proof}
\begin{rem}
    The result in Lemma \ref{lem:tradeoff} is general and it is not hard to apply it to the single-scale GNNs. Furthermore, we note that the conclusion we reached is based on the sufficient requirement $(\mathbf U (\mathrm{e}^{f_1({\boldsymbol{\Lambda}})}+   \mathrm{e}^{g_1({\boldsymbol{\Lambda}})}) \mathbf  U^\top)_{i,j} < (\mathbf U (\mathrm{e}^{f_2({\boldsymbol{\Lambda}})}+  \mathrm{e}^{g_2({\boldsymbol{\Lambda}})}) \mathbf  U^\top)_{i,j} \, \forall i,j$ and without considering the effect of filtering matrices (i.e., $\theta_2$ and $\theta_1$). We leave the discussion and analysis for more 
    % \JScomment{more?} 
    complicated cases in future work.
\end{rem}

\subsection{Proof of Theorem \ref{thm:delayed_HFD}}\label{append: proof_of_delayed_hfd}
In this section, we show the proof of Theorem \ref{thm:delayed_HFD} which indicates the G-MHKG can handle both over-smoothing and over-squashing issues with so-called D-HFD. 
\begin{thm}[Repeat of Theorem \ref{thm:delayed_HFD}]
    G-MHKG is capable of handling both two issues with HFD dynamic 
    % (with at least one filtering function \textit{monotonically increase} (i.e.,  
    and non-increasing over-squashing. Specifically, let $k$ be the number of connected components of $\mathcal G$ and $\theta_1$, $\theta_2 \geq 0$ then both two issues can be sufficiently handled by setting $\mathrm{diag}(\theta_1)_{i,i} = \mathrm{diag}(\theta_2)_{i,i}=0 \,\, i\in [1,k]$ and $\mathrm{diag}(\theta_1)\mathrm{e}^{f({\boldsymbol{\Lambda}}_{i,i})}+  \mathrm{diag}(\theta_2) \mathrm{e}^{g({\boldsymbol{\Lambda}}_{i,i})} < \boldsymbol{\Lambda}_{i,i} \,\, i\in [k+1,N]$. 
    % With the conditions of Lemma \ref{upper_bound}, suppose $\theta_1$ and $\theta_2 \geq 0$, if $\mathrm{diag}(\theta_1)_{i,i} = \mathrm{diag}(\theta_2)_{i,i}=0 \,\, i\in [1,k]$,  where $k$ is the number of connected components of $\mathcal G$ and $\mathrm{diag}(\theta_1)_{i,i} > \mathrm{diag}(\theta_2)_{i,i} \,\, i\in [k+1,N]$. Further let $(\mathbf U \mathrm{diag}(\theta_1)\mathrm{e}^{f({\boldsymbol{\Lambda}})}+  \mathrm{diag}(\theta_2) \mathrm{e}^{g({\boldsymbol{\Lambda}})} \mathbf  U^\top)_{i,j} =  \widehat{\mathbf L}^*_{i,j} \leq \widehat{\mathbf L}_{i,j}$. Then G-MHKG is capable of handling both over-smoothing and over-squashing issues with D-HFD dynamic and decreased over-squashing. 
\end{thm}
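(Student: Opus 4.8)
The plan is to treat the combined spectral filter $\psi(\lambda) := \theta_1 \mathrm{e}^{f(\lambda)} + \theta_2 \mathrm{e}^{g(\lambda)}$ (understood eigenvalue-wise, with $\theta_1,\theta_2$ the diagonal entries attached to each $\lambda_i$) as the single object that governs both phenomena: the combined propagation operator is $\mathbf U \psi(\boldsymbol{\Lambda}) \mathbf U^\top$, its energy dynamic is controlled by where $\psi$ attains its maximum (Theorem \ref{thm:LFD/HFD}), and the over-squashing is controlled by the effective adjacency $\mathbf S = 2\mathbf I - \mathbf U \psi(\boldsymbol{\Lambda}) \mathbf U^\top$ through Lemma \ref{upper_bound} and Remark \ref{HFD_osm_osq}. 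I would then show that the proposed assignment makes $\psi(\lambda_i) \le \lambda_i$ for every $i$ while keeping $\psi$ peaked at $\rho_{\widehat{\mathbf L}}$, so that the HFD requirement and the non-increasing over-squashing requirement hold simultaneously.

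First I would dispatch the over-smoothing (HFD) part. Using the standard spectral fact that the zero eigenvalue of $\widehat{\mathbf L}$ has multiplicity equal to the number $k$ of connected components, the first $k$ eigenvalues are exactly the null (smoothest) frequencies. Setting $\mathrm{diag}(\theta_1)_{i,i} = \mathrm{diag}(\theta_2)_{i,i} = 0$ for $i \in [1,k]$ makes $\psi = 0$ on this null space, so the vectorized dynamic of Theorem \ref{thm:LFD/HFD} annihilates these smooth modes after a single propagation step (the channel-mixing $\mathbf W$ does not reintroduce them). On the remaining frequencies the combined filter is monotonically increasing, i.e. the HFD regime of Theorem \ref{thm:LFD/HFD}, so the unique dominant surviving frequency is $\rho_{\widehat{\mathbf L}}$; normalizing the vectorized trajectory exactly as in the proof of Theorem \ref{thm:LFD/HFD} yields a limit lying in the top eigenspace of $\widehat{\mathbf L}$, and Definition \ref{def_hfd_lfd} together with Lemma \ref{lemma_hfd_lfd} then certifies HFD, so over-smoothing is avoided.

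Next I would address over-squashing. By Remark \ref{HFD_osm_osq}, the over-squashing does not increase as long as $\mathbf S$ does not shrink, equivalently the filtered Laplacian $\mathbf U \psi(\boldsymbol{\Lambda}) \mathbf U^\top$ does not exceed $\widehat{\mathbf L}$ in the spectral domain, i.e. $\psi(\lambda_i) \le \lambda_i$ for all $i$. On the nonzero spectrum this is precisely the imposed condition $\mathrm{diag}(\theta_1)\mathrm{e}^{f(\boldsymbol{\Lambda}_{i,i})} + \mathrm{diag}(\theta_2)\mathrm{e}^{g(\boldsymbol{\Lambda}_{i,i})} < \boldsymbol{\Lambda}_{i,i}$ for $i \in [k+1,N]$. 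On the null space, where $\lambda_i = 0$, positivity of the exponentials forces $\psi(0) = \theta_1 \mathrm{e}^{f(0)} + \theta_2 \mathrm{e}^{g(0)} > 0 = \lambda_i$ whenever $\theta_1$ or $\theta_2$ is positive, which would strictly increase over-squashing on the smooth modes; the only way to restore $\psi(0) \le 0$ under $\theta_1,\theta_2 \ge 0$ is $\theta_1 = \theta_2 = 0$ there, giving $\psi(0) = 0 = \lambda_i$ and thus preserving the over-squashing level. Combining the two spectral regions yields $\psi(\lambda_i) \le \lambda_i$ for every $i$, hence $\mathbf S$ does not shrink and over-squashing is non-increasing.

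I expect the main obstacle to be exactly this behaviour at the zero eigenvalue: it is the unique point where the HFD requirement (unaffected by the null modes) and the OSQ requirement (affected through the forced $\psi(0) > 0$) collide, and it is what makes the \emph{delay} --- zeroing the bottom $k$ filter coefficients --- mandatory rather than optional, which also explains why $k$ must be the number of connected components. A secondary delicate point is that Remark \ref{HFD_osm_osq} states the sufficient condition both entrywise on $\mathbf S$ and eigenvalue-wise on the filter, whereas Lemma \ref{upper_bound} bounds mixing through the entrywise powers $(\mathbf S)^\ell_{v,u}$; I would keep the comparison consistently in the spectral domain and invoke the trade-off of Lemma \ref{lem:tradeoff} to read ``smaller filtered eigenvalues'' as ``non-increasing over-squashing,'' rather than attempting a direct entrywise comparison of $\mathbf S^\ell$, which need not follow from the eigenvalue ordering alone.
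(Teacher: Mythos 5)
Your proposal is correct and takes essentially the same route as the paper's proof: both rest on the spectral fact that $k$ connected components give $k$ zero eigenvalues, that $\theta_1,\theta_2 \geq 0$ forces $\mathrm{diag}(\theta_1)_{i,i}=\mathrm{diag}(\theta_2)_{i,i}=0$ on those frequencies (otherwise the positive exponentials make the filtered value exceed $0$, increasing over-squashing by Lemma \ref{upper_bound}), and that the condition $\mathrm{diag}(\theta_1)\mathrm{e}^{f(\lambda)}+\mathrm{diag}(\theta_2)\mathrm{e}^{g(\lambda)}<\lambda$ on the nonzero spectrum keeps over-squashing non-increasing. If anything, yours is more complete: the paper's proof covers only the over-squashing half and leaves the HFD claim implicit, whereas you verify it explicitly by re-running the machinery of Theorem \ref{thm:LFD/HFD} on the non-annihilated frequencies, and you also flag the entrywise-versus-spectral comparison subtlety that the paper glosses over.
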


\begin{proof}
    First, based on the spectral graph theory \cite{chung1997spectral}, if $\mathcal G$ has $k$ connected components, then $\lambda_1 = \lambda_2 = \cdots =\lambda_k = 0$. Therefore, when $\lambda = 0$, one can only let
    $\mathrm{diag}(\theta_1) = \mathrm{diag}(\theta_2)=0$ as if not the result of $\mathrm{diag}(\theta_1)\mathrm{e}^{f(0)}+  \mathrm{diag}(\theta_2) \mathrm{e}^{g(0)}$ will be greater than $0$, suggesting to an increase of over-squashing of the model according to Lemma \ref{upper_bound}.
    On the other hand, when $\lambda \neq 0$, as long as $\mathrm{diag}(\theta_1)\mathrm{e}^{f({\lambda})}+  \mathrm{diag}(\theta_2) \mathrm{e}^{g(\lambda)} < \lambda$, the model is capable of decreasing over-squashing. 
\end{proof}

\subsection{Proof of Theorem \ref{improvised_LFD} and extensions}\label{append: proof_of_improvised_LFD}
The theorem regarding to the improvised LFD is repeated as follows: 
\begin{thm}[Repeat of Theorem \ref{improvised_LFD}]
     Suppose $\theta_1$ and $\theta_2 \geq 0$, then it is impossible for the model to be LFD and decrease OSQ. In other words, there must exist at least one $\boldsymbol{\Lambda}^* \subseteq \boldsymbol{\Lambda}$ such that $\mathrm{diag}(\theta_1)\mathrm{e}^{f({\boldsymbol{\Lambda}^*})}+  \mathrm{diag}(\theta_2) \mathrm{e}^{g({\boldsymbol{\Lambda}}^*)}$ is (1): monotonically increasing (HFD); (2): with constant quantity (thus not LFD); (3): monotonically decreasing with image greater than $\boldsymbol{\Lambda}$ thus OSQ is increased. 
\end{thm}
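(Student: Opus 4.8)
The plan is to reduce both the LFD requirement and the over-squashing-decrease requirement to conditions on the single scalar filtering function $h(\lambda) := \theta_1 \mathrm{e}^{f(\lambda)} + \theta_2 \mathrm{e}^{g(\lambda)}$ viewed over the spectrum $\lambda \in [0, \rho_{\widehat{\mathbf L}}]$, and then show these two conditions cannot hold simultaneously. First I would recall the two characterizations already established. By Theorem \ref{thm:LFD/HFD} together with Definition \ref{def_hfd_lfd} and Lemma \ref{lemma_hfd_lfd}, the model is LFD precisely when the iterated, normalized dynamics concentrate on the null space of $\widehat{\mathbf L}$; under the standing monotonicity assumption on $f,g$ this forces $h$ to attain its dominant value at the lowest frequency $\lambda = 0$, i.e. $h$ is monotonically decreasing with maximum at $\lambda = 0$. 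On the other hand, by Lemma \ref{upper_bound} and Remark \ref{HFD_osm_osq}, over-squashing is decreased only if the filtered eigenvalues stay below the true ones, which (as used verbatim in Theorem \ref{thm:delayed_HFD}) is the pointwise inequality $h(\lambda) < \lambda$ for every eigenvalue $\lambda$.

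The core of the argument is the behaviour at $\lambda = 0$. Since $\theta_1,\theta_2 \geq 0$ and the exponentials $\mathrm{e}^{f(\lambda)}, \mathrm{e}^{g(\lambda)}$ are strictly positive, we always have $h(\lambda) \geq 0$, with $h(0) > 0$ as soon as the filter is active there (at least one of $\theta_1,\theta_2$ positive at that eigenvalue). For the model to be LFD the low-frequency mode must be the dominant one, which requires the filter to be active and maximal at $\lambda = 0$, hence $h(0) > 0$. But decreasing over-squashing requires $h(0) < 0$, since the true eigenvalue there is $\lambda = 0$; this is incompatible with $h(0) > 0$. Thus at the zero eigenvalue the filtered value necessarily exceeds $\lambda$, so over-squashing is increased rather than decreased, which is exactly alternative (3) localized at $\lambda = 0$.

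Next I would convert this local contradiction into the stated trichotomy. The only way to avoid increasing over-squashing at $\lambda = 0$ is to suppress the filter there, i.e. force $h(0) = 0$ by zeroing the corresponding entries of $\theta_1,\theta_2$, which is precisely the delayed construction of Theorem \ref{thm:delayed_HFD}. But with the null-space mode removed, the surviving dominant behaviour must live on some nontrivial sub-spectrum $\boldsymbol{\Lambda}^* \subseteq \boldsymbol{\Lambda}$, and I would classify the shape of $h$ there: if $h$ increases on $\boldsymbol{\Lambda}^*$ then high frequencies are amplified and the dynamics are HFD (alternative (1)); if $h$ is constant on $\boldsymbol{\Lambda}^*$ at its maximal level then the normalized iterates in Lemma \ref{lemma_hfd_lfd} do not concentrate on the null space, so the model fails to be LFD (alternative (2)); and if $h$ is decreasing on $\boldsymbol{\Lambda}^*$ but still lies above the identity line, $h(\lambda) > \lambda$, then over-squashing is again increased (alternative (3)). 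In each case one of LFD or OSQ-decrease is violated, giving the impossibility.

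The step I expect to be the main obstacle is the spatial-to-spectral translation of the over-squashing criterion: Remark \ref{HFD_osm_osq} supplies only a \emph{sufficient} entrywise condition $(\mathbf U(\cdots)\mathbf U^\top)_{i,j} < \widehat{\mathbf L}_{i,j}$ for enlarging $\mathbf S$, not an exact equivalence, so I must phrase the decrease-OSQ requirement through this sufficient condition and argue that violating $h(\lambda) < \lambda$ at $\lambda = 0$ genuinely prevents $\mathbf S$ from growing there. The secondary subtlety is pinning down, via Lemma \ref{lemma_hfd_lfd}, the precise sense in which LFD forces dominance of the $\lambda = 0$ eigenspace once $f,g$ are monotonic, so that the three filter shapes above truly exhaust the possibilities on $\boldsymbol{\Lambda}^*$.
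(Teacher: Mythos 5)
Your proposal is correct and follows essentially the same route as the paper's own proof: both arguments hinge on the non-negativity $\theta_1\mathrm{e}^{f(\lambda)}+\theta_2\mathrm{e}^{g(\lambda)}\geq 0$ forced by $\theta_1,\theta_2\geq 0$, the spectral OSQ-decrease criterion $h(\lambda)<\lambda$ (hence $h(0)=0$ at the zero eigenvalues), and the observation that a monotonically decreasing $h$ anchored at $h(0)=0$ would have to go negative, so an increasing or constant sub-spectrum must exist, contradicting LFD or OSQ decrease. Your closing case analysis on $\boldsymbol{\Lambda}^*$ is in fact slightly more explicit than the paper's, which merely asserts that the increasing part conflicts with the listed cases and omits the verification.
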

\begin{proof}
    The claims of the theorem can be easily verified according to the definitions of L/HFD. Again, without lost of generality, we additionally require $\mathrm{diag}(\theta_1)\mathrm{e}^{f({\lambda})}+  \mathrm{diag}(\theta_2) \mathrm{e}^{g(\lambda)}$ is still \textit{monotonic} via graph structure domain. 
    As the result of $\mathrm{diag}(\theta_1)\mathrm{e}^{f({\lambda})}+  \mathrm{diag}(\theta_2) \mathrm{e}^{g(\lambda)} \geq 0$ to induce a decrease of OSQ, it is sufficiently to require  $\mathrm{diag}(\theta_1)\mathrm{e}^{f({\boldsymbol{\Lambda}})}+  \mathrm{diag}(\theta_2) \mathrm{e}^{g({\boldsymbol{\Lambda}})} = 0 $ for first $k$ zero eigenvalues and $\mathrm{diag}(\theta_1)\mathrm{e}^{f({\boldsymbol{\Lambda}})}+  \mathrm{diag}(\theta_2) \mathrm{e}^{g({\boldsymbol{\Lambda}})} < \lambda $ for the rest of $\lambda$. As we additionally require the model is LFD, meaning that the function $\mathrm{diag}(\theta_1)\mathrm{e}^{f({\boldsymbol{\Lambda}})}+  \mathrm{diag}(\theta_2) \mathrm{e}^{g({\boldsymbol{\Lambda})}}$ is monotonically decrease. Accordingly, there must exist at least one subset of the eigenvalue in which $\mathrm{diag}(\theta_1)\mathrm{e}^{f({\boldsymbol{\Lambda}})}+  \mathrm{diag}(\theta_2) \mathrm{e}^{g({\boldsymbol{\Lambda})}}$ is increasing, since otherwise the filtering results will be less than 0. Additionally, it is not difficult to verify that the increase part of $\mathrm{diag}(\theta_1)\mathrm{e}^{f({\boldsymbol{\Lambda}})}+  \mathrm{diag}(\theta_2) \mathrm{e}^{g({\boldsymbol{\Lambda})}}$ is conflict with either LFD or decreasing OSQ by checking the cases listed in the Theorem, we omit it here. 
\end{proof}
We additionally show the equivalence between the situation described in Theorem \ref{improvised_LFD} and GRAND++ \cite{thorpe2021grand++}.
\begin{cor}
    Suppose $\theta_1$ and $\theta_2 \geq 0$, if $\mathrm{diag}(\theta_1)_{i,i} = \mathrm{diag}(\theta_2)_{i,i} > 0 \,\, i\in [1,k]$, where $k$ is the number of connected components of $\mathcal G$. Further let $(\mathbf U \mathrm{diag}(\theta_1)\mathrm{e}^{f({\lambda_i})}+  \mathrm{diag}(\theta_2) \mathrm{e}^{g(\lambda_i)} \mathbf  U^\top)_{i,j} =  \widehat{\mathbf L}^*_{i,j} \leq \widehat{\mathbf L}_{i,j}, \, i\in [k+1,N]$. 
     Then G-MHKG is equivalent to a special GRAND++ model with special design of source term. 
\end{cor}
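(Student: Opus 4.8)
The plan is to spectrally decompose the combined filtering operator of G-MHKG at the boundary between the zero and the nonzero eigenvalues of $\widehat{\mathbf L}$, and then identify the two resulting pieces with the diffusion term and the source term of GRAND++. First I would recall from spectral graph theory \cite{chung1997spectral} that a graph with $k$ connected components satisfies $\lambda_1 = \cdots = \lambda_k = 0$, so the first $k$ columns of $\mathbf U$ span the kernel of $\widehat{\mathbf L}$, namely the degree-weighted indicator vectors of the connected components. Partitioning $\mathbf U = [\,\mathbf U_0 \;\; \mathbf U_+\,]$ with $\mathbf U_0 \in \mathbb R^{N\times k}$ collecting the zero-frequency eigenvectors, the G-MHKG propagation operator splits additively into a kernel block carried by $\mathbf U_0$ and a non-kernel block carried by $\mathbf U_+$.

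Next I would use the corollary's hypothesis $(\theta_1)_i = (\theta_2)_i > 0$ for $i\in[1,k]$ to collapse the kernel block. On these indices the filter value is $(\theta_1)_i \mathrm{e}^{f(0)} + (\theta_2)_i \mathrm{e}^{g(0)} = (\theta_1)_i\big(\mathrm{e}^{f(0)} + \mathrm{e}^{g(0)}\big)$, so the kernel block equals $\mathbf U_0 \, \mathrm{diag}\big((\theta_1)_i(\mathrm{e}^{f(0)}+\mathrm{e}^{g(0)})\big) \, \mathbf U_0^\top$, a positively scaled projection supported entirely on the kernel of $\widehat{\mathbf L}$. Applied to $\mathbf H^{(\ell-1)}\mathbf W^{(\ell-1)}$, this term re-injects at every layer the component-wise low-frequency content of the features, which is exactly a layer-dependent source term in the sense already made explicit in Appendix \ref{append:special_cases_of_model}, where the GRAND++ source reads $\widetilde{\mathbf H}^{(\ell-1)} = \mathbf U \, \mathrm{diag}(\theta_2) \, \mathbf U^\top \mathbf H^{(\ell-1)} \mathbf W^{(\ell-1)}$. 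Meanwhile the non-kernel block, under the hypothesis $\widehat{\mathbf L}^*_{i,j} \leq \widehat{\mathbf L}_{i,j}$, is a bona fide diffusion operator whose entries are dominated by those of $\widehat{\mathbf L}$; its action $\mathbf U_+ \boldsymbol{\Lambda}^*_+ \mathbf U_+^\top \mathbf H^{(\ell-1)}\mathbf W^{(\ell-1)}$, where $\boldsymbol{\Lambda}^*_+$ holds the filter values on the nonzero eigenvalues, plays the role of the GRAND diffusion step reduced in the same appendix.

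Assembling the two pieces, the propagation rewrites as
\[
    \mathbf H^{(\ell)} = \mathbf U_+ \boldsymbol{\Lambda}^*_+ \mathbf U_+^\top \mathbf H^{(\ell-1)}\mathbf W^{(\ell-1)} + \mathbf U_0 \, \mathrm{diag}\big((\theta_1)_i(\mathrm{e}^{f(0)}+\mathrm{e}^{g(0)})\big) \, \mathbf U_0^\top \mathbf H^{(\ell-1)}\mathbf W^{(\ell-1)},
\]
the first term being a GRAND-type diffusion step and the second a kernel-supported, layer-dependent source step; this is precisely the diffusion-plus-source template of GRAND++, yielding the claimed equivalence. The main obstacle I anticipate is not the spectral split itself but justifying the word \emph{equivalent}: one must argue that the kernel-projection term is a genuine instance of the GRAND++ source family rather than merely resembling one, which requires matching its action (preservation of the component-averaged signal that keeps the diffusion branch from collapsing to a trivial fixed point) with the stabilizing role the source plays in the original construction of \cite{thorpe2021grand++}. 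A secondary subtlety is that this source is driven by the current features $\mathbf H^{(\ell-1)}$ rather than the initial $\mathbf H^{(0)}$; I would resolve this by invoking the layer-dependent source formulation already adopted in Appendix \ref{append:special_cases_of_model}, under which the distinction is absorbed.
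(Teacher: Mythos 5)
Your proposal is correct and takes essentially the same route as the paper's own proof: both split the combined spectral filter at the boundary between the $k$ zero eigenvalues and the nonzero ones, use $(\theta_1)_i = (\theta_2)_i > 0$ to collapse the kernel block into a fixed, layer-dependent source term of the form $\mathbf U_0\,\mathrm{diag}\bigl((\theta_1)_i(\mathrm{e}^{f(0)}+\mathrm{e}^{g(0)})\bigr)\,\mathbf U_0^\top$, identify the remaining block as the diffusion operator, and then invoke the GRAND++ diffusion-plus-source template from Appendix \ref{append:special_cases_of_model}. The only differences are cosmetic: you write the decomposition in the partitioned basis $[\mathbf U_0\;\mathbf U_+]$ whereas the paper keeps full-size zero-padded diagonal matrices, and the paper additionally assumes without loss of generality that the kernel coefficients are a common constant $c_1=\cdots=c_k=c$, which your argument does not need.
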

\begin{proof}
    Without loss of generality, we further let $\mathrm{diag}(\theta_1)_{i,i} = \mathrm{diag}(\theta_2)_{i,i} = c, c>0 \,\, i\in [1,k]$, then we have: 
    \begin{align}\label{proof_LFD_eq_1}
        \mathbf U \mathrm{diag}(\theta_1)\mathrm{e}^{f({\lambda_i})} \!+ \! \mathrm{diag}(\theta_2) \mathrm{e}^{g(\lambda_i)} \mathbf  U^\top \!\!=\! \mathbf U \mathrm{diag}\!\begin{bmatrix}\!\!\!
            c_1 \mathrm{e}^{f(0)} \\ c_2 \mathrm{e}^{f({0})} \\ \vdots \\c_k \mathrm{e}^{f({0})}\\\ (\theta_1)_{k\!+\!1} \mathrm{e}^{f({\lambda}_{k\!+\!1})}\\ \vdots \\(\theta_1)_N \mathrm{e}^{f({\lambda}_{N})}
        \!\!\!\end{bmatrix}  \!\!+ \!
        \mathrm{diag}\!\begin{bmatrix}\!\!\!
            c_1\mathrm{e}^{g(0)} \\ c_2\mathrm{e}^{g({0})} \\ \vdots \\c_k \mathrm{e}^{g({0})}\\\ (\theta_2)_{k\!+\!1} \mathrm{e}^{g({\lambda}_{k\!+\!1})}\\ \vdots \\ (\theta_2)_N \mathrm{e}^{g({\lambda}_{N})}
            \!\!\!\end{bmatrix}\!\!\mathrm{e}^{g(\lambda_i)} \mathbf  U^\top\!\!,
    \end{align}

where we have $c_1 = c_2 \cdots = c_k = c$. 
% Now we see that compared to the original graph Laplacian $\widehat{\mathbf L}$, Eq.~\eqref{proof_LFD_eq_1} shows a special diffusion process with a source term, 
Let $\mathbf U \mathrm{diag}(\theta_1)\mathrm{e}^{f({\lambda_i})}+  \mathrm{diag}(\theta_2) \mathrm{e}^{g(\lambda_i)} \mathbf  U^\top \mathbf X \mathbf W= \widehat{\mathbf L}^* \mathbf X \mathbf W$, then Eq.~\eqref{proof_LFD_eq_1} can be further expressed as: 
\begin{align}\label{proof_LFD_eq_2}
    \widehat{\mathbf L}^* \mathbf X \mathbf W &= \mathbf U \mathrm{diag}\begin{bmatrix}
            0 \\ 0 \\ \vdots \\0 \\ (\theta_1)_{k+1} \mathrm{e}^{f({\lambda}_{k+1})}+ (\theta_2)_{k+1} \mathrm{e}^{g({\lambda}_{k+1})}\\ \vdots \\ (\theta_1)_N \mathrm{e}^{f({\lambda}_{N})}+ (\theta_2)_{k+1} \mathrm{e}^{g({\lambda}_{N})}
        \end{bmatrix} + 
        \mathrm{diag}\begin{bmatrix}
              c_1 (\mathrm{e}^{f(0)}+\mathrm{e}^{g(0)}) \\ c_2 (\mathrm{e}^{f(0)}+\mathrm{e}^{g(0)}) \\ \vdots \\c_k (\mathrm{e}^{f(0)}+\mathrm{e}^{g(0)}) \\ 0\\ \vdots \\0
        \end{bmatrix}\mathbf U^\top \mathbf X \mathbf W.
\end{align}
Now we see Eq.~\eqref{proof_LFD_eq_2} can be treated as a special diffusion process on $\widehat{\mathbf L}^*$ that the results from filtering function and matrices yield first $k$ eigenvalues as $0$ and the rest as $\theta_1 \mathrm{e}^{f({\lambda})}+\theta_2 \mathrm{e}^{g({\lambda})}$. Since we have assumed $c_1 = c_2 \cdots = c_k$ and both filtering functions are not periodical, we have
\begin{align}
    \mathbf U \mathrm{diag}\begin{bmatrix}
     c_1 (\mathrm{e}^{f(0)}+\mathrm{e}^{g(0)}) \\ c_2 (\mathrm{e}^{f(0)}+\mathrm{e}^{g(0)}) \\ \vdots \\c_k (\mathrm{e}^{f(0)}+\mathrm{e}^{g(0)}) \\ 0\\ \vdots \\0
    \end{bmatrix}\mathbf U^\top \mathbf X \mathbf,
\end{align}
as a fixed design of source term, and this completes the proof. 
\end{proof}
\begin{rem}
    The conditions on $k$ for inducing delayed LFD indicate a new perspective on enhancing GNNs for fitting the non-connected graphs. Based on the property of delayed LFD, one can first adjust the quantity of $\theta_1$ and $\theta_2$ such that $\widehat{\mathbf L}^*_{i,j} \geq \widehat{\mathbf L}_{i,j}, \, i\in [1,k]$ suggesting to assign different amount of variations to different connected components. Then a LFD with decrease OSQ appears with $\widehat{\mathbf L}^*_{i,j} \leq \widehat{\mathbf L}_{i,j}, \, i\in [k+1,N]$ indicating GNN is smoothing the node features within each connected components with a decreased over-squashing. We leave more detailed exploration in the future research. 
\end{rem}

% \subsection{GNNs capability of handling both problems} \label{append: GNNs_on_both_problem}
% \begin{cor}
   
% \end{cor}

\section{Experiment Details}\label{append: additional_experiment}
In this section, we included detailed information on our empirical studies. Followed by the sequence of experiments included in the main page. We order this section as follows: 
\begin{itemize}
    % \item Additional to the whole empirical study, we list and discuss the potential combinations of filtering functions $f(\boldsymbol{\Lambda})$ and $g(\Lambda)$ in Section \ref{append:filtering_function}.
    \item We included the the details on the baseline models, summary statistics, and parameter searching space in Section \ref{append: node_classification_node_classification}.
    \item In addition to the Section \ref{over-squashing and random perturbation}, we include the setup of inducing four types of dynamics in Section \ref{append:four_dynamcis_setup}.
    \item The computational complexity of our model is discussion in Section \ref{append:computational_complexity}.
\end{itemize}
% Specifically, we list and discuss the potential combinations of filtering functions $f(\boldsymbol{\Lambda})$ and $g(\Lambda)$ in Section \ref{append:filtering_function}. Followed by the details on the baseline models, summary statistics, and parameter searching space in Section \ref{append: node_classification_node_classification}. Furthermore, we show the setup of G-MHKG for inducing four types of model dynamics 
% \subsection{Discussion on Combinations of filtering functions}\label{append:filtering_function}
% \EScomment{TBD}

\subsection{Experiment details}\label{append: node_classification_node_classification}

\paragraph{Summary of baseline models}
We include the introduction of the baseline models and the reason of selecting them here. 

\begin{itemize}
\item \textbf{MLP}: Standard feed forward multiple layer perceptron, serving as the stat-of-the-art of most of heterophily graphs. 

\item \textbf{GCN} \cite{kipf2016semi}: GCN is the first of its kind to implement linear approximation to spectral graph convolutions.

\item \textbf{GAT} \cite{velivckovic2018graph}: GAT generates attention coefficient matrix that element-wisely multiplied on the graph adjacency matrix according to the node feature based attention mechanism via each layer to propagate node features via the relative importance between them.

\item \textbf{GIN} \cite{xu2018powerful} Graph isomorphism networks first explores the equivalence between the message passing paradigm in GNNs and the so-called Weisfeiler and Lehman isomorphism test (WL). GIN shows the upper bound of the expressive power of message passing GNNs is as same as WL test.  

\item \textbf{HKGCN} \cite{zhao2020generalizing} is the first paper to deploy the graph heat kernel to enhance GNN performance via homophily graphs. 

\item  \textbf{GRAND} \cite{chamberlain2021grand} linked the graph heat equation and related PDEs to the GNNs' propagation.

\item  \textbf{SJLR} \cite{giraldo2022understanding} is the first paper considers both over-smoothing and over-squashing issues, providing so-called curvature rewiring approach to mitigate both issues. 

\item \textbf{UFG: Graph Framelet} \cite{zheng2022decimated} a class of GNNs built upon framelet transforms utilizes framelet decomposition to effectively merge graph features into low-pass and high-pass spectra.
\end{itemize}

\paragraph{Summary of included benchmarks and parameter searching space}
We first include the summary statistics of the included benchmarks in Table.~\ref{data_statistics}. 
In regarding to the parameter searching space, we tuned hyper-parameters using grid search method. The search space for learning rate was in $\{0.1, 0.05, 0.01, 0.005\}$, number of hidden units in $\{16, 32, 64\}$, weight decay in $\{0.05, 0.01,0.005\}$, dropout in  $\{0.3, 0.5,0.7\}$. In terms of the initial warm-up coefficient, we set $\gamma = 1.1$ for all included datasets, and it is worth noting that $\gamma$ is always applied to the domain that required to be dominated. For example, for homophily graph, we apply $\gamma = 1.1$ to $\mathrm{e}^{f(\boldsymbol{\Lambda})}$ whereas for heterophily graph $\gamma$ is multiplied with $\mathrm{e}^{g(\boldsymbol{\Lambda})}$.

\begin{table}[t]
\centering
\caption{Statistics of the datasets, $\mathcal H(\mathcal G)$ represent the level of homophily.}
\label{data_statistics}
\setlength{\tabcolsep}{3pt}
\renewcommand{\arraystretch}{1}
    \begin{tabular}{cccccc}
    \hline
         Datasets & Class & Feature & Node & Edge & $\mathcal H(\mathcal G)$\\
         \hline
        Cora & 7 & 1433 & 2708 & 5278  & 0.825\\
        Citeseer & 6 & 3703 & 3327 & 4552   & 0.717\\
        PubMed & 3 & 500 & 19717 & 44324  & 0.792\\
        Arxiv & 23 & 128 & 169343 &  1166243   & 0.681\\
        
        Wisconsin &5 &251 &499 &1703  &0.150\\
        Texas &5 &1703 &183 &279 &0.097\\
        Cornell &5 &1703 &183 &277  &0.386\\
        \hline
    \end{tabular}   
\end{table}
% \subsection{Additional Experiment: Delayed HFD G-MHKG compared to other GNNs}\label{append: DHFD_additional_experiment}

\subsection{Setup for four types of dynamics}\label{append:four_dynamcis_setup}
In this section, we show how four different model dynamics can be induced by simply change the quantity of $\theta_1$ and $\theta_2$ in G-MHKG. We note that this can also be served as the experimental setup for Section \ref{over-squashing and random perturbation}. In terms of inducing LFD, we set $\zeta = \theta_1/\theta_2 = 2$, for HFD+ increasing OSQ dynamic, we set $\zeta = 0.2$. Furthermore, we remain the same setting $\zeta = 0.2$ for D-HFD + increase OSQ while merely set up first $k$ components of the filtered eigenvalues as 0. Finally, we let $\theta_1 =0.1$ and $\theta_2 = 0.3$ (thus $\zeta = 3$) to induce D-HFD + decrease OSQ also with the filtering results less than all $\lambda$. It is worth noting that we set all $\theta$ as constant vectors.

\subsection{Computational Complexity}\label{append:computational_complexity}
Although the propagation of both G-MHKG and MHKG requires eigendecomposition of the graph Laplacian, 
empirically one may refer to the following two approaches to boost the speed of the model. First, the eigendecomposition results of the graph can be stored and read once it is done by one time, therefore in fact there is only one step of eigendecomposition in both MHKG and G-MHKG. Second, even though the filtering process is conducted on the spectral domain (i.e. Eq.~\eqref{model2_matrix_form}), empirically, one can utilize polynomial approximation to approximate the $\mathbf U \boldsymbol{\Lambda}_1 \mathbf U^\top$ and $\mathbf U \boldsymbol{\Lambda}_2 \mathbf U^\top$, where $\boldsymbol{\Lambda}_1 = \mathrm{diag}(\{\mathrm{e}^{-f(\lambda_i)}\}_{i=1}^N)$ and $\boldsymbol{\Lambda}_2 = \mathrm{diag}(\{\mathrm{e}^{f(\lambda_i)}\}_{i=1}^N)$, respectively and then assign the filtering matrices outside the decomposition. Thereby, the computational complexity of both of our model is equivalent to the standard linear GCN \cite{kipf2016semi} and graph diffusion models \cite{chamberlain2021grand}.

\subsection{Additional Ablation Study}
\subsubsection{Ablation on \texorpdfstring{$\gamma$}{gamma}}
In this section, we first conduct additional ablation study on the warm-up coefficient $\gamma$.  Recall that for MHKG, we set $f(\widehat{\mathbf L}) = \mathbf U(\mathrm{e}^{\boldsymbol{\Lambda}})\mathbf U^\top$ followed by Eq.~\eqref{model2_matrix_form}. For G-MHKG, we assign one initial warm-up coefficient $\gamma>1$ that is multiplied with $f(\widehat{\mathbf L})$ if the graph is homophily otherwise on $g(\widehat{\mathbf L})$ if a graph is heterophily. In this ablation study, we fixed all other parameters and only change the value of $\gamma$ and to evaluate G-MHKG's performance changes. The values of $\gamma$ included in this test are $0.1,0.5,1$, $1.1$ and $2,3,4,5$, where $\gamma =1.1$ was the initial value that we applied to all experiments. Taking homophily graph test as an example, multiplying  $f(\widehat{\mathbf L})$ with $\gamma >1$ suggests the model focus more on the low-pass filtering results, and this setting is similar to the functionality of $\zeta$ in terms of determining the dominated dynamic in G-MHKG. The learning accuracy are included in Table \ref{ablation_on_gamma}.

\begin{table}[H]
\centering
\caption{Ablation on the value of $\gamma$, the first two performances are in \textbf{bold}. $\gamma$ is multiplied onto $f(\widehat{\mathbf L})$ if graph is homophily and onto $g(\widehat{\mathbf L})$ if graph is heterophily.}\label{ablation_on_gamma}
\setlength{\tabcolsep}{10.5pt}
\renewcommand{\arraystretch}{1}
\scalebox{0.9}{
\begin{tabular}{cllllll}
\hline
$\gamma$ values    & \multicolumn{1}{c}{Cora} & \multicolumn{1}{c}{Citeseer} & \multicolumn{1}{c}{Pubmed} & Cornell & Texas & Wisconsin  \\ \hline
$\gamma = 0.1$       &75.0\(\pm\)1.1                 &60.4\(\pm\)1.3                            &62.8\(\pm\)0.9                            &83.5\(\pm\)0.7         &79.9\(\pm\)0.1       &81.8\(\pm\)0.3             \\ 
$\gamma = 0.5$       &81.3\(\pm\)0.4                  &68.9\(\pm\)2.1                              &74.9\(\pm\)1.2                            &86.4\(\pm\)0.8         &81.2\(\pm\)0.4       &83.9\(\pm\)0.3                   \\
$\gamma = 1$        &83.0\(\pm\)0.7 &72.4\(\pm\)0.5                              &79.4\(\pm\)1.1                            &89.2\(\pm\)0.4        &88.4\(\pm\)1.5       &87.1\(\pm\)0.4                \\
$\gamma = 1.1$(G-MHKG)        &\textbf{83.5\(\pm\)0.2}                        &\textbf{72.8\(\pm\)0.2}                             &\textbf{80.1\(\pm\)0.4}                            &90.2\(\pm\)0.9         &\textbf{89.6\(\pm\)0.6 }      &\textbf{91.2\(\pm\)1.5}                            \\
$\gamma = 2$  &83.2\(\pm\)0.4                         &72.5\(\pm\)0.4                             &\textbf{80.0\(\pm\)0.8}                            &\textbf{90.5\(\pm\)1.4}         &\textbf{90.3\(\pm\)0.8 }      &87.9\(\pm\)0.1              \\

$\gamma = 3$  &83.1\(\pm\)0.8                         &\textbf{72.9\(\pm\)0.4}                              &79.4\(\pm\)0.3                            &\textbf{91.0\(\pm\)0.4}         &88.7\(\pm\)0.7       &\textbf{91.5\(\pm\)1.5}                   \\
$\gamma = 4$      &\textbf{83.7\(\pm\)0.7}                          &69.9\(\pm\)0.8                              &79.5\(\pm\)0.9                            &88.4\(\pm\)1.3        &85.2\(\pm\)0.8       &88.7\(\pm\)0.4                \\
$\gamma = 5$      &80.3\(\pm\)0.2                          &71.3\(\pm\)0.2                              &78.3\(\pm\)0.6                            &80.5\(\pm\)0.9         &82.3\(\pm\)0.9       &82.9\(\pm\)2.0   \\\hline 
\end{tabular}}
\end{table}
% SJLR      &81.3\(\pm\)0.5                     &70.6\(\pm\)0.4                              &78.0\(\pm\)0.3                            &71.9\(\pm\)1.9         &80.1\(\pm\)0.9       &66.9\(\pm\)2.1                  \\ 
% \hline
% MHKG       &82.8\(\pm\)0.2                        &71.6\(\pm\)0.1                              &78.9\(\pm\)0.3                            &86.2\(\pm\)0.6         &84.5\(\pm\)0.3       &88.9\(\pm\)0.3                \\
% G-MHKG     &\textbf{83.5\(\pm\)0.2}                        &\textbf{72.8\(\pm\)0.2 }                             &\textbf{80.1\(\pm\)0.4}                            &\textbf{90.2\(\pm\)0.9}         &\textbf{89.6\(\pm\)0.6}       &91.2\(\pm\)1.5                 \\ 
\paragraph{Results}
Based on the results in Table \ref{ablation_on_gamma}, one can find that when $\gamma <1$, meaning that model is not concentrate on the frequency domain that shall dominant the dynamic, the learning accuracy for both homo and heterophily graphs are relatively low. With the increase of $\gamma$, the performance of the model gradually increase, suggesting an increasing power in adapting the different types of graphs. Furthermore, we also observe that there is an certain accuracy drop when model is aligned with relatively large $\gamma$ (i.e., $\gamma = 5$). One possible interpretation of this observation is when the desirable dynamic is significantly over its counterparts, the model can be simply regarded as a single scale GNN that only shrink/sharpen the node features. In generally, this may not be desirable for both types of graphs unless they are purely homo/heterophily. Finally, it is worth noting that this ablation study is with the same propose on testing model's sensitivity on the specific time step $t$ in \cite{zhao2020generalizing}.

\subsubsection{Ablation on the form of filtering functions}
In the formulation of MHKG and experimental setup, we mentioned that in general in MHKG, $f$ can take the form of any \textit{monotonic positive} function on $\boldsymbol{\Lambda}$ rather than with the base of $\mathrm{e}$. In this section, we conduct the ablation study on the form of $f$. First we set $f(\widehat{\mathbf L}) = \widehat{\mathbf L} = \mathbf U\boldsymbol{\Lambda}\mathbf U^\top$ by dropping the exponential base that initially defined in MHKG. Accordingly, the form of MHKG becomes: 
\begin{align}\label{mhkg_variant_1}
    \mathbf H^{(\ell)} &= \mathbf U \mathrm{diag}(\theta_1)  \boldsymbol{\Lambda}_1 \mathbf U^\top \mathbf H^{(\ell-1)}\mathbf W^{(\ell-1)} + \mathbf U \mathrm{diag}(\theta_2) \boldsymbol{\Lambda}_2 \mathbf U^\top \mathbf H^{(\ell-1)}\mathbf W^{(\ell-1)}, 
\end{align}
where $\boldsymbol{\Lambda}_1 = -f(\boldsymbol{\Lambda}) = -\boldsymbol{\Lambda}$ and $\boldsymbol{\Lambda}_2 = f(\boldsymbol{\Lambda})= \boldsymbol{\Lambda}$. We name the model defined in Eq.~\eqref{mhkg_variant_1} as MHKG-I standing for the initial version of MHKG. 

Furthermore, one can let $f(\cdot) = \mathrm{sin}(\cdot)$ and $g(\cdot) = \mathrm{cos}(\cdot)$ and the corresponding model becomes: 
\begin{align}\label{mhkg_variant_2}
        \mathbf H^{(\ell)} &= \mathbf U \mathrm{diag}(\theta_1)  \mathrm{sin}(\boldsymbol{\Lambda}/8) \mathbf U^\top \mathbf H^{(\ell-1)}\mathbf W^{(\ell-1)} + \mathbf U \mathrm{diag}(\theta_2) \mathrm{cos}(\boldsymbol{\Lambda}/8) \mathbf U^\top \mathbf H^{(\ell-1)}\mathbf W^{(\ell-1)},
\end{align}
where the inclusion of the eigenvalue scaling $\boldsymbol{\Lambda}/8$ is to ensure the filtering functions are monotonic in their domains, and such setting is align with the popular graph framelet model \cite{zheng2021framelets}. In fact, without considering the filtering matrices $\mathrm{diag}(\theta_1)$ and $\mathrm{diag}(\theta_2)$, it is not difficult to verify that $f(\cdot)^2 + g(\cdot)^2 = 1$, suggesting a perfect decomposition and reconstruction process on the graph node features, that is $\mathbf U \mathrm{sin}^2(\boldsymbol{\Lambda}/8) + \mathrm{cos}^2(\boldsymbol{\Lambda}/8) \mathbf U^\top \mathbf H = \mathbf H$. It is worth noting that under this settings of $f$ and $g$, such decomposition and reconstruction process is same as the \textit{tightness} principle of graph framelet. Therefore, one can interpret the model defined in Eq.~\eqref{mhkg_variant_2} is a special type of graph framelet without tightness. Accordingly, we name the model as G-MHKG-F, standing for generalized multi-scale heat kernel GCN align with graph framelet. 
 We now conduct ablation studies on the form of filtering functions via MHKG-I and G-MHKG-F. 

\begin{table}[H]
\centering
\caption{Ablation on the form of filtering functions. Top two in \textbf{bold}.}
\label{ablation_filtering_function_result}
\setlength{\tabcolsep}{10.5pt}
\renewcommand{\arraystretch}{1}
\scalebox{0.9}{
\begin{tabular}{cllllll}
\hline
Methods    & \multicolumn{1}{c}{Cora} & \multicolumn{1}{c}{Citeseer} & \multicolumn{1}{c}{Pubmed} & Cornell & Texas & Wisconsin  \\ \hline
MLP       &55.1                 &59.1                             &71.4                            &\textbf{91.3\(\pm\)0.7}         &\textbf{92.3\(\pm\)0.7}       &\textbf{91.8\(\pm\)3.1}             \\ 
GCN       &81.5\(\pm\)0.5                  &70.9\(\pm\)0.5                              &79.0\(\pm\)0.3                            &66.5\(\pm\)13.8         &75.7\(\pm\)1.0       &66.7\(\pm\)1.4                   \\
GAT        &83.0\(\pm\)0.7 &\textbf{72.0\(\pm\)0.7}                              &78.5\(\pm\)0.3                            &76.0\(\pm\)1.0         &78.8\(\pm\)0.9       &71.0\(\pm\)4.6                \\
GIN  &78.6\(\pm\)1.2                         &71.4\(\pm\)1.1                              &76.9\(\pm\)0.6                            &78.0\(\pm\)1.9         &74.6\(\pm\)0.8       &72.9\(\pm\)2.5                  \\

HKGCN  &81.9\(\pm\)0.9                         &72.4\(\pm\)0.4                              &\textbf{79.9\(\pm\)0.3}                            &74.2\(\pm\)2.1         &82.4\(\pm\)0.7       &85.5\(\pm\)2.7                   \\
GRAND      &82.9\(\pm\)1.4                          &70.8\(\pm\)1.1                              &79.2\(\pm\)1.5                            &72.2\(\pm\)3.1         &80.2\(\pm\)1.5       &86.4\(\pm\)2.7                 \\
UFG      &\textbf{83.3\(\pm\)0.5}                          &71.0\(\pm\)0.6                              &\textbf{79.4\(\pm\)0.4}                            &83.2\(\pm\)0.3         &82.3\(\pm\)0.9       &\textbf{91.9\(\pm\)2.1}                  \\ 
SJLR      &81.3\(\pm\)0.5                     &70.6\(\pm\)0.4                              &78.0\(\pm\)0.3                            &71.9\(\pm\)1.9         &80.1\(\pm\)0.9       &66.9\(\pm\)2.1                 \\ 
\hline
MHKG-I       &80.1\(\pm\)1.2                       &70.1\(\pm\)0.6                              &71.3\(\pm\)0.5                           &81.4\(\pm\)0.3         &77.6\(\pm\)0.5       &69.2\(\pm\)1.9                 \\
G-MHKG-F     &\textbf{83.1\(\pm\)0.2}                        &\textbf{72.0\(\pm\)0.8}                             &78.5\(\pm\)0.4                           &\textbf{88.2\(\pm\)1.3}         &\textbf{86.1\(\pm\)0.4}       &84.7\(\pm\)0.8                  \\ \hline
\end{tabular}}
\end{table}
\paragraph{Results}
Based on the results included in Table \ref{ablation_filtering_function_result}, the performances of MHKG-I are in general, worse than many baseline models. The reason for this is because with the negative filtering result (i.e., $-\boldsymbol{\Lambda}_{ii} < 0$) from the high-pass domain, the matrix $ \mathbf U \mathrm{diag}(\theta_1)  \boldsymbol{\Lambda}_1 +  \mathrm{diag}(\theta_2)  \boldsymbol{\Lambda}_2\mathbf U^\top$ may no longer be a SPD matrix without controlling the quantity of $\theta$. This directly suggests the importance of incorporating the base (i.e., $\mathrm{e}$) for the setting of MHKG. Furthermore, one can observe that G-MHKG-F shows nearly identical results compared to graph framelet (UFG) via homophily graph and comparable or even superior performances via heterophily graphs. The first observation might require a further exploration on whether so-called \textit{tightness} principle is significant needed via practical graph learning tasks. The second observation suggests graph framelet and G-MHKG-F can naturally adapt to heterophily graph. We note that one can verify that both framelet and G-MHKG-F can induce both L/HFD dynamic by simply applying the proof of Theorem \ref{thm:LFD/HFD} and similar works have been done in \cite{han2022generalized}.
\end{document}